\declaretheorem[name=Assumption]{ass}
\newcommand{\argmin}{\operatornamewithlimits{argmin}}
\newcommand{\argmax}{\operatornamewithlimits{argmax}}
\newcommand{\arginf}{\operatornamewithlimits{arginf}}
\newcommand\blfootnote[1]{%
  \begingroup
  \renewcommand{\@makefntext}[1]{\noindent\makebox[1.8em][r]#1}
  \renewcommand\thefootnote{}\footnote{#1}%
  \addtocounter{footnote}{-1}%
  \endgroup
}
\begin{document}

\title{Pure Exploration under Mediators' Feedback}

\author{\name Riccardo Poiani \email riccardo.poiani@polimi.it \\
       \addr DEIB, Politecnico di Milano \\
       Milano, Italy
       \AND
       \name Alberto Maria Metelli \email albertomaria.metelli@polimi.it \\
       \addr DEIB, Politecnico di Milano \\
       Milano, Italy
       \AND
       \name Marcello Restelli \email marcello.restelli@polimi.it \\
       \addr DEIB, Politecnico di Milano \\
       Milano, Italy}


\maketitle

\begin{abstract}
Stochastic multi-armed bandits are a sequential-decision-making framework, where, at each interaction step, the learner selects an arm and observes a stochastic reward. Within the context of best-arm identification (BAI) problems, the goal of the agent lies in finding the optimal arm, i.e., the one with the highest expected reward, as accurately and efficiently as possible. Nevertheless, the sequential interaction protocol of classical BAI problems, where the agent has complete control over the arm being pulled at each round, does not effectively model several decision-making problems of interest (e.g., off-policy learning, human feedback). For this reason, in this work, we propose a novel strict generalization of the classical BAI problem that we refer to as best-arm identification under mediators' feedback (BAI-MF). More specifically, we consider the scenario in which the learner has access to a set of \emph{mediators}, each of which selects the arms on the agent's behalf according to a stochastic and possibly \emph{unknown} policy. The mediator, then, communicates back to the agent the pulled arm together with the observed reward. In this setting, the agent's goal lies in sequentially choosing which mediator to query to identify with high probability the optimal arm while minimizing the identification time, i.e., the sample complexity. To this end, we first derive and analyze a statistical lower bound on the sample complexity specific to our general mediator feedback scenario. Then, we propose a sequential decision-making strategy for discovering the best arm; as our theory verifies, this algorithm matches the lower bound both almost surely and in expectation.\blfootnote{A concurrent study of this setting is presented in \citet{reddy2023best}. Our findings were derived independently and followed by different algorithmic choices and theoretical analyses.}
\end{abstract}

\begin{keywords}
  Pure Exploration, Stochastic Bandits
\end{keywords}

\section{Introduction}

Stochastic multi-armed bandits \citep{lattimorebandit2020} are a sequential decision-making framework where, during each interaction round, the learner selects an arm and observes a sample drawn from its reward distribution. Contrary to regret minimization problems, where the agent aims at maximizing the cumulative reward, in \emph{best-arm identification} (BAI) scenarios \citep{evendarpac2002}, the agent's primary focus lies in computing the arm with the highest expected reward (i.e., the optimal arm) as accurately and efficiently as possible. More specifically, in the \emph{fixed-confidence} setting, given a maximal risk parameter $\delta$, the agent's primary focus is on identifying, with probability at least $1-\delta$, the optimal arm with a minimum number of samples. Nevertheless, the sequential interaction protocol of classical BAI settings, in which the agent has complete control of the arm being pulled at each round (i.e., at each step, the agent chooses which arm to query), fails to adequately represent various decision-making problems that are of importance. In fact, in some relevant scenarios, the agent possesses only partial or no control over the arms being played. Consider, indeed, the following examples.
\begin{itemize}
\item \textbf{Off-Policy Learning.} Off-policy learning is a crucial aspect of decision-making theory that has gathered significant attention, especially within the Reinforcement Learning (RL) community \citep{sutton2018reinforcement}. Here, the agent continuously observes, at each round, actions sampled from a fixed behavioral policy, together with the corresponding rewards. The goal, here, consequently, lies in exploiting these off-policy interactions to identify the best arm with high probability. 
\item \textbf{Active Off-Policy Learning.} This scenario generalizes the off-policy setting previously presented. In this case, multiple behavioral policies are available to the agent. The learner can decide which behavioral policy to query to quickly identify the optimal arm. In practice, these behavioral policies can be, for instance, those of experts with the skill necessary to perform a subset of actions within the arm set. Another relevant example might arise in scenarios with human feedback \citep{li2019human}, where multiple humans can perform actions on the agent's behalf according to some private and personal policy.
\end{itemize}

As we can see, these scenarios cannot be properly modeled with the usual bandit interaction protocol as the agent has limited or no control on the arms being pulled during each interaction round. For this reason, in this work, we study a strict generalization of the classical BAI framework that circumvents the limits of complete controllability that is typical of bandit frameworks. To this end, we introduce the best-arm identification problem under \emph{mediators' feedback}, where the learner has access to a set of mediators, each of which will query arms on the agent’s behalf according to some stochastic, \emph{possibly unknown} and \emph{fixed} behavioral policy. The mediator will then communicate back to the agent which action it has played, together with the observed reward realization. In this setting, the agent’s goal lies in sequentially choosing which mediator to query to identify with high probability the optimal arm while minimizing the sample complexity. As one can verify, such formalism decouples the arms' pulls from the agent's choices, thus allowing to properly model all the scenarios depicted above. 

\section{Preliminaries and Backgrounds}

\subsection{Fixed-Confidence Best-Arm Identification}
In fixed-confidence best-arm identification (BAI) problems \citep{evendarpac2002}, the agent interacts with a set of $K$ probability distributions $\boldsymbol{\nu} = \left( \nu_1, \dots \nu_K \right)$ with respective means $\boldsymbol{\mu} = \left(\mu_1, \dots, \mu_K \right)$. For simplicity, we assume that there is a unique optimal arm, and, w.l.o.g., $\mu_1 > \mu_2 \ge \dots \ge \mu_K$. In the rest of this work, we consider distributions within the one-dimensional canonical exponential family \citep{cappe2013kullback}, which are directly parameterized by their mean.\footnote{The reader who is not familiar with the subject may consider Bernoullian or Gaussian distributions with known variance.} For this reason, with a little abuse of notation, we will often refer to the bandit model $\boldsymbol{\nu}$ using the means of its arms $\boldsymbol{\mu}$. We use the symbol $\mathcal{M}$ to denote this class of bandit models with unique optimal arms. Given two distributions $p, q \in \mathcal{M}$, we denote with $d(p, q)$ the KL divergence between $p$ and $q$.

We now proceed by formalizing the interaction scheme between the agent and the bandit model. At every interaction step $t \in \mathbb{N}$, the agent selects an arm $A_t \in [K]$ and receives a new and independent reward $X_t \sim \nu_{A_t}$. The procedure that defines how arms $A_t$ are selected is often referred to as \emph{sampling rule}. Given a maximal risk parameter $\delta \in (0, 1)$, the goal of the agent is to output the optimal arm $\hat{a}_{\tau_\delta} = \left\{ 1 \right\}$ with probability at least $1-\delta$, while minimizing the \emph{sample complexity} $\tau_\delta \in \mathbb{N}$. More formally, $\tau_\delta$ is a stopping time that controls the end of the data acquisition phase, after which a decision $\hat{a}_{\tau_\delta}$ is made. We refer to algorithms that satisfy $\mathbb{P} \left( \hat{a}_{\tau_\delta} \in \argmax_{a \in [K]} \mu_a \right) \le \delta$ as $\delta$-correct strategies. 

We now describe in detail the statistical complexity of fixed-confidence BAI problems \citep{garivieroptimal2016}. Given a bandit model $\boldsymbol{\mu} \in \mathcal{M}$, let $a^*(\boldsymbol{\mu}) = \argmax_{a \in [K]} \mu_a$. We introduce the set $\textrm{Alt}(\boldsymbol{\mu})$ as the set of problems where the optimal arm is different w.r.t. to $\boldsymbol{\mu}$, namely $\textrm{Alt}(\boldsymbol{\mu}) \coloneqq \left\{ \boldsymbol{\lambda} \in \mathcal{M} : a^*(\boldsymbol{\lambda}) \ne a^*(\boldsymbol{\mu}) \right\}$.  Let $\textrm{kl}(x, y) = x \log(x/y) +(1-x) \log((1-x) / (1-y))$. Then, for any $\delta$-correct algorithm it holds that $\mathbb{E}_{\boldsymbol{\mu}}\left[ \tau_\delta \right] \ge T^*(\boldsymbol{\mu}) \textrm{kl}(\delta, 1-\delta)$, where $T^*(\boldsymbol{\mu})^{-1}$ is given by: 
\begin{align}\label{eq:standard-char-time}
T^*(\boldsymbol{\mu})^{-1} = \sup_{\omega \in \Delta_K} \inf_{\boldsymbol{\lambda} \in \textrm{Alt}(\boldsymbol{\mu})} \left( \sum_{a=1}^K \omega_a d(\mu_a, \lambda_a) \right). 
\end{align}
We remark that, when $\delta \rightarrow 0$, $T^*(\boldsymbol{\mu})$ fully describes the statistical complexity of each problem $\boldsymbol{\mu}$. More specifically, it is possible to derive the following result: $\limsup_{\delta \rightarrow 0} \frac{\mathbb{E}_{\boldsymbol{\mu}} [\tau_\delta]}{\log(1/\delta)} \ge T^*(\boldsymbol{\mu})$. 
For this reason, $T^*(\boldsymbol{\mu})$ has played a crucial role in several BAI studies \citep[e.g.,][]{garivieroptimal2016,wang2021fast,tirinzoni2022elimination}. 
From Equation \eqref{eq:standard-char-time}, we can see that $T^*(\boldsymbol{\mu})^{-1}$ can be seen as a max-min game where the first player chooses a pull proportion among the different arms, and the second player chooses a hard-to-identify alternative problem where the optimal arm is different \citep{degenne2019non}.
In this sense, the unique maximizer of Equation \eqref{eq:standard-char-time}, which we denote as $\boldsymbol{\omega}^*(\boldsymbol{\mu}, \boldsymbol{\pi})$, can be interpreted as the optimal proportion with which arms should be queried in order to identify $a^*(\boldsymbol{\mu})$. Since solving Equation \eqref{eq:standard-char-time} requires access to quantities unknown to the learner, $\boldsymbol{\omega}^*(\boldsymbol{\mu}, \boldsymbol{\pi})$ often takes the name of \emph{oracle weights}.

\subsection{Best-Arm Identification under Mediators’ Feedback }
In this work, we study the following generalization of the best-arm identification problem. Given a bandit model $\boldsymbol{\nu}$ with $K$ arms, the learner cannot directly sample rewards from each arm $\nu_a$, but, instead it can query a set of $E$ mediators, each of which is described by a \emph{possibly unknown} and \emph{fixed} behavioral policy $\boldsymbol{\pi_e} \in \Delta_K$. More specifically, at each interaction step $t \in \mathbb{N}$, the agent will select a mediator $E_t \in [E]$, which, on the agent's behalf, will pull an arm $A_t \sim \boldsymbol{\pi_{E_t}}$ and will observe a reward $X_t \sim \nu_{A_t}$. The mediator $E_t$ will then communicate back to the agent both the action $A_t$ and observed reward $X_t$. For brevity, we adopt the symbol $\boldsymbol{\pi}$ as a shortcut for the set of mediators' policies $\left( \boldsymbol{\pi_e} \right)_{e=1}^E$. Given a maximal risk parameter $\delta$, the goal of the agent remains identifying with high-probability the optimal arm within $\boldsymbol{\mu}$ while minimizing the sample complexity $\tau_\delta$. To this end, we restict our study to the following scenarios.
\begin{ass}\label{ass:action-covering}
For any $a \in [K]$ there exists $e \in [E]$ such that $\pi_e(a) > 0$.
\end{ass}
Assumption \ref{ass:action-covering} states that the mediators' policies explores with positive probability each action $a \in [K]$. In other words, the agent should be able to gather information on each arm within the arm set. \footnote{We argue that this is a very mild requirement. Indeed, as we shall see in the appendix, Assumption \ref{ass:action-covering} is necessary for finite sample complexity results.}

To conclude, we notice that the proposed interaction protocol is a strict generalization w.r.t. to the usual BAI framework. Indeed, whenever (i) the mediators' policies are known, (ii) $E = K$ and, (iii) for all action $a \in [K]$, $\boldsymbol{\pi_{E_a}}$ is a Dirac distribution on action $a$, we recover the usual best-arm identification problem. In the rest of this document, we refer to this peculiar set of mediators' policies as $\boldsymbol{\bar{\pi}}$.

\section{On the Statistical Complexity}
This section discusses the intrinsic statistical complexity of the best-arm identification problems under mediators' feedback. More specifically, we provide and analyze a lower bound on the sample complexity that is necessary to identify the optimal arm with high-probability. 

\begin{restatable}{theorem}{est}\label{theo:lower-bound}
Let $\delta \in (0,1)$. For any $\delta$-correct strategy, any bandit model $\boldsymbol{\mu}$, and any set of mediators $\boldsymbol{\pi}$ it holds that $\mathbb{E}_{\boldsymbol{\mu}, \boldsymbol{\pi}} \left[ \tau_\delta \right] \ge \textup{kl}(\delta, 1-\delta) T^*(\boldsymbol{\mu}, \boldsymbol{\pi})$, 
where $T^*(\boldsymbol{\mu}, \boldsymbol{\pi})^{-1}$ is defined as:
\begin{align}\label{eq:lower-bound-time-ours}
\sup_{\boldsymbol{\omega} \in \Sigma_E} \inf_{\boldsymbol{\lambda} \in \textup{Alt}(\boldsymbol{\mu})} \left( \sum_{e=1}^E \omega_e \sum_{a=1}^K \pi_e(a) d(\mu_a, \lambda_a) \right).
\end{align}
\end{restatable}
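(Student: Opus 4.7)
The plan is to follow the standard change-of-measure template of Kaufmann, Cappé, and Garivier, adapted to the mediator-feedback setting. First I would fix an alternative bandit model $\boldsymbol{\lambda} \in \textup{Alt}(\boldsymbol{\mu})$ and write the log-likelihood ratio of the history $\mathcal{H}_{\tau_\delta} = (E_1, A_1, X_1, \dots, E_{\tau_\delta}, A_{\tau_\delta}, X_{\tau_\delta})$ under $\mathbb{P}_{\boldsymbol{\mu},\boldsymbol{\pi}}$ versus $\mathbb{P}_{\boldsymbol{\lambda},\boldsymbol{\pi}}$. The crucial observation is that, since the mediators' policies $\boldsymbol{\pi}$ do not depend on the underlying bandit model, both the (history-dependent) factor describing how the sampling rule chooses $E_t$ and the factor $\pi_{E_t}(A_t)$ cancel from the density ratio, leaving only the reward densities $\nu_{A_t}$ and the corresponding alternatives.

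Taking expectation under $\mathbb{P}_{\boldsymbol{\mu},\boldsymbol{\pi}}$ and conditioning stepwise on $E_t$ and then on $A_t$, a Wald-type identity gives
$$\operatorname{KL}\!\bigl(\mathbb{P}^{\tau_\delta}_{\boldsymbol{\mu},\boldsymbol{\pi}},\mathbb{P}^{\tau_\delta}_{\boldsymbol{\lambda},\boldsymbol{\pi}}\bigr) \;=\; \sum_{e=1}^E \mathbb{E}_{\boldsymbol{\mu},\boldsymbol{\pi}}[N_e(\tau_\delta)] \sum_{a=1}^K \pi_e(a)\, d(\mu_a,\lambda_a),$$
where $N_e(t)$ counts the queries to mediator $e$ up to round $t$. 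This is the only place where the mediator structure materially enters. Then I would invoke the data-processing lower bound (Lemma 1 of Kaufmann et al.\ 2016) applied to the event $\mathcal{E} = \{\hat{a}_{\tau_\delta} = a^*(\boldsymbol{\mu})\}$: by $\delta$-correctness $\mathbb{P}_{\boldsymbol{\mu},\boldsymbol{\pi}}(\mathcal{E}) \ge 1-\delta$, and since $a^*(\boldsymbol{\lambda}) \ne a^*(\boldsymbol{\mu})$ also $\mathbb{P}_{\boldsymbol{\lambda},\boldsymbol{\pi}}(\mathcal{E}) \le \delta$, so by monotonicity of $\textup{kl}$ (together with the symmetry $\textup{kl}(x,y)=\textup{kl}(1-x,1-y)$),
$$\sum_{e=1}^E \mathbb{E}_{\boldsymbol{\mu},\boldsymbol{\pi}}[N_e(\tau_\delta)] \sum_{a=1}^K \pi_e(a)\, d(\mu_a,\lambda_a) \;\ge\; \textup{kl}(\delta, 1-\delta).$$

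To conclude, I would divide both sides by $\mathbb{E}_{\boldsymbol{\mu},\boldsymbol{\pi}}[\tau_\delta]$ and set $\omega_e \coloneqq \mathbb{E}_{\boldsymbol{\mu},\boldsymbol{\pi}}[N_e(\tau_\delta)] / \mathbb{E}_{\boldsymbol{\mu},\boldsymbol{\pi}}[\tau_\delta]$. Since $\sum_e N_e(\tau_\delta) = \tau_\delta$ almost surely, the resulting vector $\boldsymbol{\omega}$ lies in $\Sigma_E$. As the inequality holds for every $\boldsymbol{\lambda} \in \textup{Alt}(\boldsymbol{\mu})$, I can take the infimum over $\boldsymbol{\lambda}$ on the left and then bound this specific feasible $\boldsymbol{\omega}$ by the supremum over $\Sigma_E$, yielding $T^*(\boldsymbol{\mu},\boldsymbol{\pi})^{-1} \ge \textup{kl}(\delta,1-\delta)/\mathbb{E}_{\boldsymbol{\mu},\boldsymbol{\pi}}[\tau_\delta]$, which rearranges to the claim.

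The only genuinely non-routine step is the KL computation for the observed mediator-action-reward process: one must carefully verify that the contributions of $E_t$ and of $A_t \mid E_t$ drop out because $\boldsymbol{\pi}$ is held fixed across $\boldsymbol{\mu}$ and $\boldsymbol{\lambda}$, so that each interaction step contributes $\sum_a \pi_{E_t}(a)\, d(\mu_a,\lambda_a)$ to the per-step KL. Once this is in place, the rest is a direct transcription of the Garivier--Kaufmann scheme, with the per-arm counts $N_a$ replaced by per-mediator counts $N_e$ and the per-arm KL $d(\mu_a,\lambda_a)$ replaced by its $\pi_e$-mixture $\sum_a \pi_e(a)\, d(\mu_a,\lambda_a)$.
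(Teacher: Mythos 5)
Your proposal is correct and follows essentially the same route as the paper: the change-of-measure/transportation lemma of Kaufmann et al.\ (2016) yields $\sum_e \mathbb{E}_{\boldsymbol{\mu},\boldsymbol{\pi}}[N_e(\tau_\delta)]\sum_a \pi_e(a)\,d(\mu_a,\lambda_a)\ge \textup{kl}(\delta,1-\delta)$ for every $\boldsymbol{\lambda}\in\textup{Alt}(\boldsymbol{\mu})$, after which normalizing by $\mathbb{E}_{\boldsymbol{\mu},\boldsymbol{\pi}}[\tau_\delta]$ and taking the infimum over alternatives and the supremum over $\Sigma_E$ gives the bound. In fact, your explicit verification that the sampling-rule and $\pi_{E_t}(A_t)$ factors cancel in the likelihood ratio is more detailed than the paper's proof, which simply cites Lemma 1 of Kaufmann et al.\ for that step.
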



Theorem \ref{theo:lower-bound} deserves some comments. First of all, as we can appreciate from Equation \eqref{eq:lower-bound-time-ours}, $T^*({\boldsymbol{\mu}, \boldsymbol{\pi}})^{-1}$ reports the typical max-min game that describes lower bounds for standard best-arm identification problems. More specifically, the max-player determines the proportion with which each mediator should be queried, while the min-player decides an alternative (and hard) alternative instance in which the optimal arm is modified. It has to be remarked that $T^*({\boldsymbol{\mu}, \boldsymbol{\pi}})^{-1}$  ,and, consequently, the oracle weights $\boldsymbol{\omega}^*(\boldsymbol{\mu}, \boldsymbol{\pi})$, directly depends on the set of mediators' policies $\boldsymbol{\pi}$. In other words, $\boldsymbol{\pi}$ plays a crucial role in the statistical complexity of the problem. To further investigate this dipendency, let us introduce some additional notation. Given $\boldsymbol{\omega} \in \Sigma_E$, we  define $\boldsymbol{\tilde{\pi}}(\boldsymbol{\omega}) \in \Sigma_K$, where ${\tilde{\pi}}_a(\boldsymbol{\omega}) = \sum_{e=1}^E \omega_e \pi_e(a)$ denotes the probability of playing an arm $a$ when sampling mediators according to $\boldsymbol{\omega}$. Then, let $\widetilde{\Sigma}_K \subseteq \Sigma_K$ be the set of all the possible $\boldsymbol{\tilde{\pi}}$ that can be obtained starting from any $\boldsymbol{\omega} \in \Sigma_E$. Given this notation, it is possible to rewrite $T^*({\boldsymbol{\mu}, \boldsymbol{\pi}})^{-1}$ as:
\begin{align}\label{eq:lower-bound-time-ours-2}
\sup_{\boldsymbol{\tilde{\pi}} \in \widetilde{\Sigma}_K} \inf_{\boldsymbol{\lambda} \in \textrm{Alt}(\boldsymbol{\mu})} \sum_{a=1}^K \tilde{\pi}_a d(\mu_a, \lambda_a).
\end{align}
At this point, we notice that Equation \eqref{eq:lower-bound-time-ours-2} shares significant similarities with the definition of $T^*(\boldsymbol{\mu})^{-1}$ for classical BAI problems; i.e., Equation \eqref{eq:standard-char-time}. The only difference, indeed, stands in the fact that, under mediators' feedback, the max-player can only act on the restricted set $\widetilde{\Sigma}_K$ rather than the entire simplex $\Sigma_K$. In this sense, the max-min game is between the proportion of arm pulls that is \emph{possible} to play according to the mediators $\boldsymbol{\pi}$, and the alternative hard instance. In the rest of this document, we denote maximizers of Equation \eqref{eq:lower-bound-time-ours-2} with $\boldsymbol{\tilde{\pi}}^*(\boldsymbol{\mu}, \boldsymbol{\pi})$. Given this interpreation of Theorem \ref{theo:lower-bound} we now proceed by further investigating the comparison with classical BAI problems.\footnote{Further analysis on the statistical complexity are provided in the appendix.}

\subsection{Comparison with classical BAI}
First of all, it is worth noting that Theorem \ref{theo:lower-bound} effectively generalizes existing statistical complexity results of the typical BAI problem, thus offering a broader perspective. Indeed, whenever the set of mediators' policies is equal to $\boldsymbol{\bar{\pi}}$, Theorem \ref{theo:lower-bound} directly reduces to the usual BAI lower bound. In other words, $T^*(\boldsymbol{\mu}, \boldsymbol{\bar{\pi}})^{-1}$ is exactly $T^*(\boldsymbol{\mu})^{-1}$. Furthermore, for a general set of mediators $\boldsymbol{\pi}$, it is possible to derive the following result. 

\begin{restatable}{proposition}{comparison}\label{prop:comparison}
For any bandit model $\boldsymbol{\mu}$ and mediators' policies $\boldsymbol{\pi}$ it holds that:
\begin{align}\label{eq:prop-one}
T^*(\boldsymbol{\mu}, \boldsymbol{\pi})^{-1} \le T^*(\boldsymbol{\mu}, \boldsymbol{\bar{\pi}})^{-1}.
\end{align}
Furthermore, $T^*(\boldsymbol{\mu}, \boldsymbol{\pi})^{-1} < T^*(\boldsymbol{\mu}, \boldsymbol{\bar{\pi}})^{-1}$ holds if and only if $\boldsymbol{\omega^*}(\boldsymbol{\mu}, \boldsymbol{\bar{\pi}}) \notin \widetilde{\Sigma}_K$.
\end{restatable}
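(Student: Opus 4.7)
The proof splits into (i) the weak inequality \eqref{eq:prop-one} and (ii) the characterization of when the inequality is strict. I will base the whole argument on the reformulation in Equation \eqref{eq:lower-bound-time-ours-2}, which lets me treat both quantities as suprema of the \emph{same} concave function
\[
F(\boldsymbol{x}) \coloneqq \inf_{\boldsymbol{\lambda} \in \textup{Alt}(\boldsymbol{\mu})} \sum_{a=1}^K x_a \, d(\mu_a, \lambda_a),
\]
over two different feasible sets, namely $\widetilde{\Sigma}_K$ and $\Sigma_K$. Note that, when the mediators coincide with $\boldsymbol{\bar{\pi}}$, the induced distribution $\boldsymbol{\tilde{\pi}}(\boldsymbol{\omega})$ ranges exactly over $\Sigma_K$, so $T^*(\boldsymbol{\mu},\boldsymbol{\bar{\pi}})^{-1} = \sup_{\boldsymbol{x} \in \Sigma_K} F(\boldsymbol{x})$, which recovers the classical characterization \eqref{eq:standard-char-time}.

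\textbf{Step 1 (weak inequality).} For any $\boldsymbol{\omega} \in \Sigma_E$, $\boldsymbol{\tilde{\pi}}(\boldsymbol{\omega})$ is a convex combination of probability vectors $\boldsymbol{\pi_e} \in \Delta_K$, hence itself an element of $\Sigma_K$. Therefore $\widetilde{\Sigma}_K \subseteq \Sigma_K$, and taking the supremum of $F$ over the larger set can only increase it, yielding \eqref{eq:prop-one}.

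\textbf{Step 2 (characterization of strict inequality).} The key ingredient is uniqueness of the maximizer $\boldsymbol{\omega^*}(\boldsymbol{\mu},\boldsymbol{\bar{\pi}})$ of $F$ on $\Sigma_K$, which is stated in the paragraph following \eqref{eq:standard-char-time}. For the easy direction ($\Leftarrow$ of the contrapositive), if $\boldsymbol{\omega^*}(\boldsymbol{\mu},\boldsymbol{\bar{\pi}}) \in \widetilde{\Sigma}_K$, then the global maximizer on $\Sigma_K$ already lies in the smaller feasible set, so the two suprema coincide. For the other direction, I will argue that $\widetilde{\Sigma}_K$ is the image of the compact simplex $\Sigma_E$ under the continuous linear map $\boldsymbol{\omega} \mapsto \boldsymbol{\tilde{\pi}}(\boldsymbol{\omega})$, hence compact. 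Since $F$ is concave and continuous on $\Sigma_K$ (as an infimum of linear maps in $\boldsymbol{x}$), the supremum over $\widetilde{\Sigma}_K$ is attained at some $\boldsymbol{\tilde{\pi}}^\dagger \in \widetilde{\Sigma}_K$. If $\boldsymbol{\omega^*}(\boldsymbol{\mu},\boldsymbol{\bar{\pi}}) \notin \widetilde{\Sigma}_K$, then $\boldsymbol{\tilde{\pi}}^\dagger \neq \boldsymbol{\omega^*}(\boldsymbol{\mu},\boldsymbol{\bar{\pi}})$, and uniqueness of the global maximizer on $\Sigma_K$ forces $F(\boldsymbol{\tilde{\pi}}^\dagger) < F(\boldsymbol{\omega^*}(\boldsymbol{\mu},\boldsymbol{\bar{\pi}}))$, i.e., the strict inequality.

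\textbf{Main obstacle.} The delicate point is the strict inequality: a priori $\boldsymbol{\omega^*}(\boldsymbol{\mu},\boldsymbol{\bar{\pi}})$ could be approached by a sequence inside $\widetilde{\Sigma}_K$ without being attained, in which case continuity of $F$ would still yield equality of the suprema. The compactness of $\widetilde{\Sigma}_K$ rules this pathology out, so the crucial ingredients I rely on are (a) continuity of the linear map $\boldsymbol{\omega} \mapsto \boldsymbol{\tilde{\pi}}(\boldsymbol{\omega})$ together with compactness of $\Sigma_E$, and (b) uniqueness of the maximizer of $F$ on $\Sigma_K$, taken as given from Equation \eqref{eq:standard-char-time}.
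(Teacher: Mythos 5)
Your proof is correct and takes essentially the same route as the paper's: the weak inequality from $\widetilde{\Sigma}_K \subseteq \Sigma_K$, and the strict-inequality characterization from the uniqueness of the classical-BAI maximizer $\boldsymbol{\omega}^*(\boldsymbol{\mu}, \boldsymbol{\bar{\pi}})$. Your only addition is to make explicit that $\widetilde{\Sigma}_K$ is compact (the continuous linear image of $\Sigma_E$), so the supremum over it is attained before uniqueness is invoked --- a step the paper leaves implicit, and for which upper semicontinuity of the infimum of linear maps would already suffice.
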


From Equation \ref{eq:prop-one}, we can see that the mediators' feedback problem is always at least as difficult as the classical BAI setting. From an intuitive perspective, this result is expected. Indeed, from Equation \eqref{eq:lower-bound-time-ours-2}, we know that the only difference between $T^*(\boldsymbol{\mu}, \boldsymbol{\pi})^{-1}$ and $T^*(\boldsymbol{\mu}, \boldsymbol{\bar{\pi}})^{-1}$ lies in the definition of $\widetilde{\Sigma}_K$, that, as previously discussed,  encodes the partial controllability on the arm space that is introduced by the mediators $\boldsymbol{\pi}$. Furthermore, Proposition \ref{prop:comparison} fully characterizes the set of instances in which the mediators' feedback introduces additional challenges in identifying the optimal arm. More precisely, the lower bound of Theorem \ref{theo:lower-bound} separates from the one of classical BAI whenever the max-player cannot pull, in expectation, arms according to the proportion $\boldsymbol{\omega^*}(\boldsymbol{\mu}, \boldsymbol{\bar{\pi}})$ that results from the lower bound of the classical BAI problem.

\section{Track and Stop under Mediators' feedback}\label{sec:tas}
In this section, we continue by providing our algorithm for the best-arm identification problem under mediators feedback. Here, we focus on the case in which the mediators policies $\boldsymbol{\pi}$ are known to the learner.\footnote{We refer the reader to the appendix for the case in which $\boldsymbol{\pi}$ is unknown. Nevertheless, we remark that, with a slight modification to the algorithm, it is possible to obtain identical theoretical results.} As algorithm, we cast the Track and Stop (TaS) framework \citep{garivieroptimal2016} to our interaction setting in the following way.\footnote{We report a short description of the classical TaS algorithm in the appendix; for further details see \cite{garivieroptimal2016}.}

\paragraph{Sampling Rule} As a sampling rule, we adopt C-tracking \citep{garivieroptimal2016} of the oracle mediator proportions $\boldsymbol{\omega}^*(\boldsymbol{\mu}, \boldsymbol{\pi})$. More formally, let $\boldsymbol{\hat{\mu}}(t)$ be the vector of estimates of the mean of each arm at time $t$. We then compute any maximizers of the empirical version of Equation \eqref{eq:lower-bound-time-ours} (i.e., $\boldsymbol{\mu}$ is replaced with $\boldsymbol{\hat{\mu}}$), and we $L^{\infty}$ project it onto $\Sigma_E^{\epsilon_t} = \left\{ \boldsymbol{\omega} \in \Sigma_E : \forall i \textrm{ }\omega_i \ge \epsilon_t \right\}$, where $\epsilon_t$ is given by $\epsilon_t = (E^2 + t)^{-1/2}/2$. We notice that, in the original version of TaS, C-Tracking was applied to track optimal proportions between arms. 
Our algorithmic choice (i.e., tracking mediator proportions) is a direct consequence of the fact that we cannot directly track arm proportions (e.g., $\boldsymbol{\tilde{\pi}}^*(\boldsymbol{\mu}, \boldsymbol{\pi})$), but, instead, the learner can only decide which mediator will be queried at time $t$. 


\paragraph{Stopping Rule} Since the goal lies in identifying the optimal arm $a \in [K]$, we stick to the successful Generalized Likelihood Ratio (GLR) statistic to decide when enough information has been gathered to confidently reccomend which arm has the highest mean \citep{garivieroptimal2016}. 


\paragraph{The reccomendation} For the same reasons of the stopping rule, we rely on the reccomendation rule of \citet{garivieroptimal2016}. Namely, our algorithm reccomend the arm with the highest empirical mean $\hat{a}_{\tau_\delta} = \argmax_{a \in [K]} \hat{\mu}_a(\tau_\delta)$. 


\subsection{Theoretical Results}
At this point, we are ready to present our theoretical analysis on the performance of our algorithm. We begin by providing the following almost surely convergence result. 

\begin{restatable}{theorem}{highprob}\label{theo:high-prob}
Consider any $\boldsymbol{\mu} \in \mathcal{M}$ and any $\boldsymbol{\pi}$ such that Assumption \ref{ass:action-covering} is satisfied. Let $\alpha \in (1, e/2]$. It holds that:
\begin{align}\label{eq:high-prob}
\mathbb{P}_{\boldsymbol{\mu}, \boldsymbol{\pi}} \left( \limsup_{\delta \rightarrow 0 } \frac{\tau_\delta}{\log\left( 1/\delta \right)} \le \alpha T^*(\boldsymbol{\mu}, \boldsymbol{\pi})\right) = 1.
\end{align}
\end{restatable}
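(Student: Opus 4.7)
The plan is to follow the canonical Track-and-Stop almost-sure analysis of \citet{garivieroptimal2016}, adapted so that C-tracking operates on the mediator simplex $\Sigma_E$ rather than directly on the arm simplex. The proof splits into four ingredients: (i) forced exploration yielding consistency of $\boldsymbol{\hat{\mu}}(t)$; (ii) a tracking lemma yielding convergence of the mediator proportions to $\boldsymbol{\omega}^*(\boldsymbol{\mu},\boldsymbol{\pi})$; (iii) propagation to the induced arm proportions $N_a(t)/t\to \tilde{\pi}_a^*(\boldsymbol{\mu},\boldsymbol{\pi})$; (iv) a GLR analysis giving $Z(t)/t\to T^*(\boldsymbol{\mu},\boldsymbol{\pi})^{-1}$, from which Equation \eqref{eq:high-prob} follows by the standard threshold-versus-statistic comparison.

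First I would verify the forced-exploration bound: the $L^{\infty}$-projection onto $\Sigma_E^{\epsilon_t}$ with $\epsilon_t=(E^2+t)^{-1/2}/2$ guarantees $N_e(t)\ge \sqrt{E^2+t}-2E$ for every mediator $e$, as in \citet[Lemma 7]{garivieroptimal2016}. Under Assumption \ref{ass:action-covering}, this forces $N_a(t)\to\infty$ almost surely for every arm $a\in[K]$, so by the SLLN $\hat{\mu}_a(t)\to\mu_a$ almost surely. Then I would invoke Berge's maximum theorem on the max-min defining $T^*(\boldsymbol{\mu},\boldsymbol{\pi})$ to obtain upper hemicontinuity of the set of maximizers in $\boldsymbol{\mu}$, pick a measurable selection, and apply the tracking lemma to conclude $\sum_e |N_e(t)/t - \omega_e^*(\boldsymbol{\mu},\boldsymbol{\pi})|\to 0$ almost surely. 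A conditional Azuma-Hoeffding argument on the arm draws, which given the mediator sequence are independent Bernoulli trials with known means $\pi_{E_t}(a)$, then lifts this to $N_a(t)/t\to\tilde{\pi}_a^*(\boldsymbol{\mu},\boldsymbol{\pi})$ almost surely.

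Next I would analyze the GLR statistic, which in our setting reads $Z(t)=\inf_{\boldsymbol{\lambda}\in\textrm{Alt}(\boldsymbol{\hat{\mu}}(t))}\sum_{a=1}^K N_a(t)\, d(\hat{\mu}_a(t),\lambda_a)$, with stopping rule $\tau_\delta=\inf\{t:Z(t)\ge\beta(t,\delta)\}$ where the threshold of \citet{garivieroptimal2016} satisfies $\beta(t,\delta)/\log(1/\delta)\to 1$ as $\delta\to 0$. Eventually $a^*(\boldsymbol{\hat{\mu}}(t))=a^*(\boldsymbol{\mu})$, and joint almost-sure convergence of $\boldsymbol{\hat{\mu}}(t)$ and of the empirical arm frequencies, together with continuity of the inner infimum, yields $Z(t)/t\to T^*(\boldsymbol{\mu},\boldsymbol{\pi})^{-1}$ almost surely via Equation \eqref{eq:lower-bound-time-ours-2}. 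Fixing $\alpha\in(1,e/2]$ and $\alpha'\in(1,\alpha)$, almost surely there is a random $t_0$ beyond which $Z(t)\ge t/(\alpha'T^*(\boldsymbol{\mu},\boldsymbol{\pi}))$, so any $t\ge\max\{t_0,\alpha T^*(\boldsymbol{\mu},\boldsymbol{\pi})\log(1/\delta)\}$ triggers the stopping rule; dividing by $\log(1/\delta)$ and letting $\delta\to 0$ gives Equation \eqref{eq:high-prob}.

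The main obstacle is handling the potential non-uniqueness of $\boldsymbol{\omega}^*(\boldsymbol{\mu},\boldsymbol{\pi})$: different $\boldsymbol{\omega}\in\Sigma_E$ can induce the same $\boldsymbol{\tilde{\pi}}(\boldsymbol{\omega})$, so the oracle mediator weights may be a set rather than a point, and pointwise convergence of $N_e(t)/t$ is not guaranteed. The workaround is to phrase both the tracking and the continuity claims at the level of the induced arm mixture $\boldsymbol{\tilde{\pi}}$: uniqueness and continuity of $\boldsymbol{\tilde{\pi}}^*(\boldsymbol{\mu},\boldsymbol{\pi})$ follow from strict concavity of $\boldsymbol{\tilde{\pi}}\mapsto\inf_{\boldsymbol{\lambda}}\sum_a \tilde{\pi}_a d(\mu_a,\lambda_a)$ over the compact convex set $\widetilde{\Sigma}_K$, and the linear, hence $1$-Lipschitz, projection $\boldsymbol{\omega}\mapsto\boldsymbol{\tilde{\pi}}(\boldsymbol{\omega})$ transfers tracking errors from $\Sigma_E$ to $\widetilde{\Sigma}_K$ without loss. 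Since the GLR analysis only uses arm-level convergence $N_a(t)/t\to\tilde{\pi}_a^*(\boldsymbol{\mu},\boldsymbol{\pi})$, this suffices and the remaining pieces are mechanical adaptations of the original TaS proof.
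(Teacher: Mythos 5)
Your high-level architecture (forced exploration, C-tracking of mediator proportions, propagation to arm proportions, GLR comparison against the threshold) is the same as the paper's. The genuine gap is in your handling of non-uniqueness. You claim that uniqueness and continuity of $\boldsymbol{\tilde{\pi}}^*(\boldsymbol{\mu},\boldsymbol{\pi})$ follow from strict concavity of $\boldsymbol{\tilde{\pi}}\mapsto\inf_{\boldsymbol{\lambda}\in\textrm{Alt}(\boldsymbol{\mu})}\sum_a\tilde{\pi}_a d(\mu_a,\lambda_a)$ on $\widetilde{\Sigma}_K$. That function is an infimum of functions linear in $\boldsymbol{\tilde{\pi}}$, hence concave but \emph{not} strictly concave: by the analogue of Lemma 3 of \citet{garivieroptimal2016} it equals $\min_{a\neq 1}(\tilde{\pi}_1+\tilde{\pi}_a)\,I_{\tilde{\pi}_1/(\tilde{\pi}_1+\tilde{\pi}_a)}(\mu_1,\mu_a)$, where each term depends on only two coordinates of $\boldsymbol{\tilde{\pi}}$. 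For $K\ge 4$ the objective can therefore be constant along feasible segments of $\widetilde{\Sigma}_K$ (e.g., when all mediators assign the same probabilities to arms $1$ and $2$ but differ on the remaining, heavily suboptimal arms), so the maximizer set $\boldsymbol{\tilde{\pi}}^*(\boldsymbol{\mu},\boldsymbol{\pi})$ is genuinely set-valued --- the paper states this explicitly before Proposition \ref{prop:omega-star-properties}. Consequently the pointwise limits you assert, $N_e(t)/t\to\omega_e^*(\boldsymbol{\mu},\boldsymbol{\pi})$ and $N_a(t)/t\to\tilde{\pi}_a^*(\boldsymbol{\mu},\boldsymbol{\pi})$, are not available: a tracking rule that follows a measurable selection of a set-valued, merely upper hemicontinuous correspondence need not have converging empirical frequencies, only a vanishing distance to the maximizer set.

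This is exactly the point the paper's proof is built around. Proposition \ref{prop:omega-star-properties} establishes convexity and upper hemicontinuity (via Berge's theorem) of both $\boldsymbol{\omega}^*(\boldsymbol{\mu},\boldsymbol{\pi})$ and $\boldsymbol{\tilde{\pi}}^*(\boldsymbol{\mu},\boldsymbol{\pi})$; Lemmas \ref{lemma:w-convergence-to-convex-set}, \ref{lemma:ne-convergence-to-inf-best-weigh} and \ref{lemma:na-convergence-to-inf-best-weigh} then prove the set-valued statements $\inf_{\boldsymbol{\omega}\in\boldsymbol{\omega}^*(\boldsymbol{\mu},\boldsymbol{\pi})}\Vert N_t^E/t-\boldsymbol{\omega}\Vert_\infty\to 0$ and $\inf_{\boldsymbol{\tilde{\pi}}\in\boldsymbol{\tilde{\pi}}^*(\boldsymbol{\mu},\boldsymbol{\pi})}\Vert N_t^A/t-\boldsymbol{\tilde{\pi}}\Vert_\infty\to 0$ almost surely, where convexity is what allows the Ces\`aro average of the selected weights to be compared to a single element of the set. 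The final GLR bound then only uses that \emph{every} element of $\boldsymbol{\tilde{\pi}}^*(\boldsymbol{\mu},\boldsymbol{\pi})$ attains the optimal value $T^*(\boldsymbol{\mu},\boldsymbol{\pi})^{-1}$, so that comparing $N_t^A/t$ with its nearest element $\boldsymbol{\tilde{\pi}}_t$ yields $Z(t)\ge t/\bigl((1+\epsilon)T^*(\boldsymbol{\mu},\boldsymbol{\pi})\bigr)$ eventually, after which the threshold comparison is as in your last step. Your argument can be repaired by replacing your uniqueness claim and pointwise limits with these set-valued counterparts; as written, however, the strict-concavity step is false and the proof does not close.
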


Similarly, it is possible to derive a result that directly controls the expectation of the stopping time $\tau_\delta$. More specifically, we prove the following result.

\begin{restatable}{theorem}{expect}\label{theo:expect}
Consider any $\boldsymbol{\mu} \in \mathcal{M}$ and any $\boldsymbol{\pi}$ such that Assumption \ref{ass:action-covering} is satisfied. Let $\alpha \in (1, e/2]$. It holds that:
\begin{align}\label{eq:expect}
\limsup_{\delta \rightarrow 0 } \frac{\mathbb{E}_{\boldsymbol{\mu}, \boldsymbol{\pi}}[\tau_\delta]}{\log(1/\delta)} \le \alpha T^*(\boldsymbol{\mu}, \boldsymbol{\pi}).
\end{align}
\end{restatable}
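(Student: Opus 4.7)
The plan is to upgrade the almost sure bound from Theorem \ref{theo:high-prob} to a bound in expectation by controlling the tail of $\tau_\delta$, following the standard pattern used for Track-and-Stop in \citet{garivieroptimal2016}. Since $\mathbb{E}_{\boldsymbol{\mu}, \boldsymbol{\pi}}[\tau_\delta] = \sum_{T \ge 1} \mathbb{P}_{\boldsymbol{\mu}, \boldsymbol{\pi}}(\tau_\delta \ge T)$, it suffices to exhibit a deterministic threshold $T_0(\delta)$ of order $\alpha T^*(\boldsymbol{\mu}, \boldsymbol{\pi}) \log(1/\delta)$ after which stopping occurs with overwhelming probability, plus a summable tail for $T \ge T_0(\delta)$.

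First, I would fix $\xi > 0$ small and define the ``concentration event'' $\mathcal{E}_T = \{\forall t \ge h(T) : \|\boldsymbol{\hat\mu}(t) - \boldsymbol{\mu}\|_\infty \le \xi\}$ for some $h(T) = T^{1/4}$ (or analogous sub-linear function). Because Assumption \ref{ass:action-covering} holds and C-tracking of the oracle mediator proportions $\boldsymbol{\omega}^*(\boldsymbol{\mu}, \boldsymbol{\pi})$ forces each mediator to be queried at least $\sqrt{E^2+t}/2 - E$ times (the standard C-tracking guarantee), every arm $a$ with $\tilde{\pi}^*_a(\boldsymbol{\mu}, \boldsymbol{\pi}) > 0$ accumulates $\Omega(t)$ samples, while every other arm is covered thanks to the forced exploration $\epsilon_t$. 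A union bound on deviations of the $\hat\mu_a(t)$ (exponential family concentration) then yields $\mathbb{P}(\mathcal{E}_T^c) \le C T \exp(-c T^{1/2})$, which is summable in $T$.

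Next, on $\mathcal{E}_T$, I would invoke the continuity of the oracle map $\boldsymbol{\mu} \mapsto \boldsymbol{\omega}^*(\boldsymbol{\mu}, \boldsymbol{\pi})$ and of $T^*(\cdot, \boldsymbol{\pi})^{-1}$ at $\boldsymbol{\mu}$ to show that the empirical characteristic-time rate is within a factor $1/\alpha$ of the true one for all $t \ge h(T)$. Combined with C-tracking, this implies that the empirical arm-proportion vector $\boldsymbol{\tilde\pi}(t)$ is close to $\boldsymbol{\tilde\pi}^*(\boldsymbol{\mu}, \boldsymbol{\pi})$, and the GLR statistic grows at least at rate $t/(\alpha T^*(\boldsymbol{\mu}, \boldsymbol{\pi}))$. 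Setting the threshold $T_0(\delta)$ to the smallest $T$ for which $T/(\alpha T^*(\boldsymbol{\mu}, \boldsymbol{\pi})) \ge \beta(T, \delta)$, where $\beta(T, \delta) = \log((1+\log T)/\delta) + K \log(\text{poly}(T))$ is the stopping threshold, we get $T_0(\delta) \sim \alpha T^*(\boldsymbol{\mu}, \boldsymbol{\pi}) \log(1/\delta)$ as $\delta \to 0$.

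Finally, I would split the sum as
\begin{equation*}
\mathbb{E}_{\boldsymbol{\mu}, \boldsymbol{\pi}}[\tau_\delta] \le T_0(\delta) + \sum_{T > T_0(\delta)} \mathbb{P}_{\boldsymbol{\mu}, \boldsymbol{\pi}}(\tau_\delta > T) \le T_0(\delta) + \sum_{T > T_0(\delta)} \mathbb{P}(\mathcal{E}_T^c),
\end{equation*}
since on $\mathcal{E}_T$ with $T \ge T_0(\delta)$ stopping has already occurred. The tail sum is a constant independent of $\delta$, so dividing by $\log(1/\delta)$ and taking $\limsup$ yields the claim. The main obstacle is the second step: one must carefully propagate the $L^\infty$ projection onto $\Sigma_E^{\epsilon_t}$ and the continuity of $\boldsymbol{\omega}^*(\cdot,\boldsymbol{\pi})$ through the min-over-alternatives to argue that the GLR grows at the nearly optimal rate $1/(\alpha T^*(\boldsymbol{\mu}, \boldsymbol{\pi}))$; the usual argument, adapted from \citet{garivieroptimal2016}, relies on a uniform-in-$\boldsymbol{\lambda}$ continuity statement whose mediator-feedback analogue I would verify by writing the inner objective as a linear function of $\boldsymbol{\omega}$ rather than of the arm proportions.
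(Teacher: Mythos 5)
Your overall architecture --- a concentration event for $\hat{\boldsymbol{\mu}}$, continuity of the oracle weights, a deterministic threshold $T_0(\delta) \sim \alpha T^*(\boldsymbol{\mu},\boldsymbol{\pi})\log(1/\delta)$, and the decomposition $\mathbb{E}_{\boldsymbol{\mu},\boldsymbol{\pi}}[\tau_\delta] \le T_0(\delta) + \sum_{T}\mathbb{P}(\mathcal{E}_T^c)$ --- is the same as the paper's. The genuine gap is that you treat the arm-level quantities as if C-tracking controlled them, whereas in this setting the learner only controls the mediator counts $N_e(t)$: the arm counts $N_a(t)$, which are exactly what enter the GLR statistic $Z(t)$ and the deviations of $\hat{\mu}_a(t)$, remain random even conditionally on the sequence of chosen mediators. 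Your step ``combined with C-tracking, this implies that the empirical arm-proportion vector is close to $\boldsymbol{\tilde{\pi}}^*(\boldsymbol{\mu},\boldsymbol{\pi})$'' does not follow from $\mathcal{E}_T$ and the tracking guarantee alone; one needs an additional martingale concentration event. The paper introduces $\mathcal{E}_T'(\gamma) = \bigcap_{t=h(T)}^{T}\bigcap_{a}\{|N_a(t) - \sum_{s=0}^{t-1}\pi_{E_s}(a)| \le c\,t^{\gamma}\}$ with $\gamma = 3/4$ (Lemma \ref{lemma:na-concetration}, via Azuma--Hoeffding), and only on $\mathcal{E}_T \cap \mathcal{E}_T'$ does it show that $N_t^A/t$ is within $O(E\epsilon)$ of $\boldsymbol{\tilde{\pi}}^*(\boldsymbol{\mu},\boldsymbol{\pi})$ (Lemma \ref{lemma:finite-time-pi-convergence}); correspondingly, the expectation decomposition must also carry the extra tail $\sum_T \mathbb{P}(\mathcal{E}_T'^c)$ (Lemma \ref{lemma:tech-lemma-1}). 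The same issue affects your bound on $\mathbb{P}(\mathcal{E}_T^c)$ itself: forced exploration guarantees $N_e(t) \ge \sqrt{t+E^2}-2E$ for every \emph{mediator}, but it does not deterministically guarantee samples of every \emph{arm}; the paper must first condition on the auxiliary event $\mathcal{J}_T = \{N_a(t) \ge \tfrac14 p_{\min}(a)\min_e N_e(t)\}$, itself established by a martingale bound, before applying the standard deviation inequality (Lemma \ref{lemma:mu-concentration}), yielding the rate $BT\exp(-CT^{1/8})$ rather than your $\exp(-cT^{1/2})$ (only summability matters, but the claim that every arm accumulates $\Omega(t)$ samples, or is ``covered by $\epsilon_t$'', is not justified as stated).

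Two smaller points. First, the oracle weights are set-valued in this problem, so ``continuity of the oracle map'' must be replaced by convexity plus upper hemicontinuity (Proposition \ref{prop:omega-star-properties}) combined with the corresponding lemmas of \citet{degenne2019pure}; a single-valued continuity argument is not available in general. Second, the factor $\alpha$ in the statement comes from the exploration threshold $\beta(t,\delta) = \log(Ct^{\alpha}/\delta)$ in the final limit $\delta \to 0$, after letting $\epsilon,\xi \to 0$ so that the constant $C^*_{\epsilon,\xi}(\boldsymbol{\mu})$ of Lemma \ref{lemma:tech-lemma-2} tends to $T^*(\boldsymbol{\mu},\boldsymbol{\pi})^{-1}$; it is not a degradation of the GLR growth rate as your sketch suggests.
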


In other words, Theorem \ref{theo:expect} shows that in the asymptotic regime of $\delta \rightarrow 0$, our algorithm matches the lower bound presented in Theorem \ref{theo:lower-bound}.


\acks{This paper is supported by PNRR-PE-AI FAIR project funded by the NextGeneration EU program.}


\newpage

\appendix

\vskip 0.2in
\bibliography{sample}

\newpage

\section*{Appendix A: Related Works}\label{app:rel-works}
In this section, we provide in-depth discussion with works that are related to ours.

\paragraph{Best-Arm Identification}
Since the seminal work of \citet{evendarpac2002}, the fixed-confidence BAI setting has gathered increasing attention within the community. In particular, considerable efforts have been dedicated to refining algorithms and statistical lower-bounds, all with the ultimate objective of constructing optimal identification strategies \citep[e.g.,][]{bubeckpure2009,audibertbest2010,karninalmost2013,jamiesonlilucb2014,jamiesonbestarm2014,kaufmanncomplexity2016}. In this context, and of particular relevance for our work, \citet{garivieroptimal2016} have proposed the celebrated Track and Stop (TaS) algorithm, which attains \emph{optimal} statistical complexity in the asymptotic regime, i.e., $\delta \rightarrow 0$. Building upon this work, numerous studies have been conducted to propose improvements and generalizations upon the TaS algorithm \citep[e.g.,][]{degenne2019pure,wang2021fast,degenne2020gamification,tirinzoni2022elimination}.
 
\paragraph{Structured Best-Arm Identification}
One of the key factors that contributed to the success of TaS is its ability to emerge as a versatile framework that can be meticulously adapted to several variants of the BAI problem, such as linear \citep{jedra2020optimal} and spectral bandits \citep{kocak2020best}, multiple answers problems \citep{degenne2019pure}, and many others \citep[e.g.,][]{garivier2017thresholding,moulos2019optimal,agrawal2020optimal}. Among problems with additional structure, our work is related to BAI under choice-based feedback \citep{feng2022robust,yang2023nested}, where a company sequentially shows sets of items to a population of customers and collects their choices. The objective is to identify the most preferred item with the least number of samples and with high-probability. Another relevant work is \citet{russac2021b}, where the authors study the BAI problem in the presence of sub-populations. In more precise terms, the authors make the assumption that a population can be divided into distinct and similar subgroups. During each time step, one of these subgroups is sampled and an action (i.e., arm) is chosen . The observed outcome is a random draw from the selected arm, considering the characteristics of the current subgroup. To evaluate the effectiveness of each arm, a weighted average of its subpopulation means is used. Finally, our feedback structure is also related to BAI in contaminated bandits \citep{altschuler2019best,mukherjee2021mean}, where each arm pull has a probability $\epsilon$ of generating a sample from an arbitrary distribution, rather than the true one. Nevertheless, we remark that none of these settings can be mapped to the mediators' feedback one and viceversa.

\paragraph{Mediators' Feedback} The mediator feedback terminology was introduced by \citet{metelli2021policy} in the context of Policy Optimization (PO) in RL. Similar to the previous studies of \citet{papini2019optimistic}, the authors deal with the PO problem as a bandit where each policy in a given set is mapped to a distinct arm, whose reward is given by the usual cumulative RL return. Notice that, in this setting, the ability to perform actions in the environment is mediated by the policy set of the agent. For this reason, in our work, we adopt their terminology to disentangle the arms' pull from the agent’s choices. Among this line of works, we notice that, recently, a variant of this problem has also been studied in the context of non-stochastic bandits with expert advice \citep{eldowa2023minimax}. Here, during each round, the learner selects an expert that will perform an action on the agent's behalf according to some fixed distribution. Similar ideas have also been investigated in \citet{sen2018contextual} for regret minimization in contextual bandits. More specifically, they assume access to a class of stochastic experts, where each expert is a conditional distribution over the arms given the context. Compared to our work both \citet{sen2018contextual} and \citet{eldowa2023minimax} consider the problem of minimizing the regret against the best expert.

\paragraph{Other Related Works} Off-policy learning plays a vital role in decision-making theory and has garnered considerable interest, particularly in RL \citep{sutton2018reinforcement}. In particular, the off-policy feedback has received extensive research in the offline RL literature \citep{levine2020offline}, where the agent lacks the ability to directly interact with the environment and is instead limited to utilizing a fixed dataset gathered by possible multiple and unknown behaviorial policies. Finally, related to our work, \citet{gabbianelli2023online} have studied regret minimization in adversarial bandits with off-policy feedback. More specifically, the authors assume that the learner cannot directly observe its rewards, but instead sees the ones obtained by a behavioral and unknown policy that runs in parallel. Similar ideas have also been extended to the MDP setting with known transitions in \citet{bacchiocchi2023online}.

\section*{Appendix B: Further Details on Track and Stop}
The seminal work of \citet{garivieroptimal2016} has presented the Track and Stop (TaS) algorithm, which is the first asymptotically optimal approach for the fixed confidence BAI scenario; i.e., when $\delta \rightarrow 0$, it guarantees to stop with sample complexity that matches the lower bound. The core idea behind TaS lies in solving an empirical version of Equation \eqref{eq:standard-char-time} to estimate the optimal oracle weights. Then, in order to match the optimal proportions $\bm{\omega}^*(\bm{\mu}, \bm{\pi})$ (which guarantees to achieve optimality), the sampling rule will allocate samples by \emph{tracking} this empirical estimation. We remark that this is combined with a forced exploration sampling strategy that ensures that the estimate of the mean of each arm, and consequently the estimate of $\bm{\omega}^*(\bm{\mu}, \bm{\pi})$, is sufficiently accurate. Lastly, as a stopping criterion, TS employs the Generalized Likelihood Ratio (GLR) statistic to determine if enough information has been collected to infer, with a risk not exceeding $\delta$, whether the mean of one arm is greater than that of all the others. 
More specifically, the algorithm stops whenever the following condition is verified:
\begin{align}\label{eq:stopping}
Z(t) \coloneqq t \min_{\bm{\lambda} \in \textrm{Alt}(\bm{\hat{\mu}})} \sum_{a=1}^K \frac{N_a(t)}{t} d(\hat{\mu}_a(t), \lambda_a) \ge \beta(t, \delta),
\end{align}
where $N_a(t)$ denotes the number of pulls to arm $a$ at time $t$, and $\beta(t,\delta)$ represents an exploration rate that is commonly set to $\log\left( \frac{Ct^{\alpha}}{\delta}\right)$, for some $\alpha > 1$ and appropriate constanct $C$.\footnote{We refer the reader to the original work of \citet{garivieroptimal2016} for a formal exposition of the GLR statistic and its use within pure exploration problems.}

\section*{Appendix C: Further Analysis on the Statistical Complexity}
In this section, we provide additional analysis on the statistical complexity of best arm identification under mediators' feedback. All statements will be formally proven in Appendix E.

\subsection*{On the Action Covering Assumption}
We now continue with a formal justification behind the action covering assumption, i.e., Assumption \ref{ass:action-covering}. To this end, we analyze the behavior of Theorem \ref{theo:lower-bound} under the peculiar single mediator setting, that is $E=1$. More specifically, let us focus on the case in which there are two different actions, $a_1$ and $a_2$, associated to Gaussian reward distributions with unitary variance and means $\mu_{a_1} > \mu_{a_2}$. In this case, $T^*(\boldsymbol{\mu}, \boldsymbol{\pi})^{-1}$ reduces to:
\begin{align}\label{eq:justification}
\frac{1}{2} \frac{\pi_e(a_1) \pi_e(a_2)}{\pi_e(a_1) + \pi_e(a_2)} \Delta^2,
\end{align}
where $\Delta = \mu_{a_1} - \mu_{a_2}$. In this context, it is easy to see that, as soon as $\pi_e(a) \rightarrow 1$ for any of the two actions, $T^*(\boldsymbol{\mu}, \boldsymbol{\pi})^{-1}$ tends to $0$, and, consequently, $\mathbb{E}_{\boldsymbol{\mu}, \boldsymbol{\pi}}[\tau_\delta] \rightarrow +\infty$. In this sense, we can appreciate as Assumption \ref{ass:action-covering} turns out being a necessary assumption for finite sample complexity result. This should come as no surprise: if we cannot observe any realization from a certain arm $a \in [K]$, we are unable to conclude whether $a$ is optimal or not.

\subsection*{Off-Policy Learning}
Given the significant importance of off-policy learning within the sequential decision-making community, we now provide additional details on the lower bound for the case in which $E=1$. We notice, indeed, that whenever $E=1$, our setting reduces to the off-policy best-arm identification problem, where the learner continuously observes actions and rewards from another agent (i.e, the mediator). In this case, assuming for the sake of exposition Gaussian distributions with unitary variance, $T^*(\boldsymbol{\mu}, \boldsymbol{\pi})^{-1}$ can be rewritten as:
\begin{align}\label{eq:lower-bound-single-expert}
T^*(\boldsymbol{\mu}, \boldsymbol{\pi})^{-1} = \min_{a \ne 1} \frac{1}{2} \frac{\pi_e(1) \pi_e(a)}{\pi_e(1) + \pi_e(a)} \Delta^2_a,
\end{align}
where $\Delta_a = \mu_1 - \mu_a$. Equation \eqref{eq:lower-bound-single-expert} expresses the lower bound \emph{only} in term of the most difficult to identify alternative arm (i.e., the minimum over the different sub-optimal actions). Furthermore, contrary to what usually happens in classical BAI problems, this difficult to identify alternative arm is not the one with the smallest gap $\Delta_a$, but there is a trade-off between $\Delta_a$ and how easy it is to observe the mediator playing action $a$, namely $\pi_e(a)$.

\subsection*{On $\boldsymbol{\omega}^*(\boldsymbol{\mu}, \boldsymbol{\pi})$ and $\boldsymbol{\tilde{\pi}}^*(\boldsymbol{\mu}, \boldsymbol{\pi})$}
Finally, we conclude with some more technical considerations on $\boldsymbol{\omega}^*(\boldsymbol{\mu}, \boldsymbol{\pi})$ and $\boldsymbol{\tilde{\pi}}^*(\boldsymbol{\mu}, \boldsymbol{\pi})$. It has to be noticed that, compared to standard BAI problems, $\boldsymbol{\omega}^*(\boldsymbol{\mu}, \boldsymbol{\pi})$ and $\boldsymbol{\tilde{\pi}}^*(\boldsymbol{\mu}, \boldsymbol{\pi})$ are, in general, not unique. In other words, the mappings $\boldsymbol{\omega}^*(\boldsymbol{\mu}, \boldsymbol{\pi})$ and $\boldsymbol{\tilde{\pi}}^*(\boldsymbol{\mu}, \boldsymbol{\pi})$ are set-valued. \footnote{As a simple example, it is sufficient to consider the case in which linearly-dependent policies are present in $\bm{\pi}$.} As shown by previous works \citep{degenne2019pure}, this sort of feature introduces significant challenges within the algorithmic design/analysis of, e.g., Track and Stop inspired algorithms. The following proposition shows significant and technical relevant properties that help overcoming these challenges. 

\begin{restatable}{proposition}{upperhemi}\label{prop:omega-star-properties}
The sets $\boldsymbol{\omega}^*(\boldsymbol{\mu}, \boldsymbol{\pi})$ and $\boldsymbol{\tilde{\pi}}^*(\boldsymbol{\mu}, \boldsymbol{\pi})$ are convex. Furthermore, the mappings $(\boldsymbol{\mu}, \boldsymbol{\pi}) \rightarrow \boldsymbol{\omega}^*(\boldsymbol{\mu}, \boldsymbol{\pi})$ and $(\boldsymbol{\mu}, \boldsymbol{\pi}) \rightarrow \boldsymbol{\tilde{\pi}}^*(\boldsymbol{\mu}, \boldsymbol{\pi})$ are upper hemicontinuous.
\end{restatable}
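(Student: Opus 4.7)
For fixed $(\boldsymbol{\mu}, \boldsymbol{\pi})$, define $F(\boldsymbol{\omega}) := \inf_{\boldsymbol{\lambda} \in \textrm{Alt}(\boldsymbol{\mu})} \sum_{e=1}^E \omega_e \sum_{a=1}^K \pi_e(a) d(\mu_a, \lambda_a)$. For every fixed $\boldsymbol{\lambda}$ the inner sum is affine in $\boldsymbol{\omega}$, so $F$ is an infimum of affine functions and hence concave on the convex set $\Sigma_E$. The argmax set of a concave function over a convex set is convex, yielding convexity of $\boldsymbol{\omega}^*(\boldsymbol{\mu}, \boldsymbol{\pi})$. Applying the same argument to the equivalent formulation \eqref{eq:lower-bound-time-ours-2}, and noting that $\widetilde{\Sigma}_K$ is convex as the linear image of $\Sigma_E$, gives convexity of $\boldsymbol{\tilde{\pi}}^*(\boldsymbol{\mu}, \boldsymbol{\pi})$.

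\textbf{Upper hemicontinuity of $\boldsymbol{\omega}^*$.} The plan is to invoke Berge's Maximum Theorem on $\boldsymbol{\omega}^*(\boldsymbol{\mu}, \boldsymbol{\pi}) = \argmax_{\boldsymbol{\omega} \in \Sigma_E} F(\boldsymbol{\omega}; \boldsymbol{\mu}, \boldsymbol{\pi})$. Since the constraint correspondence $(\boldsymbol{\mu}, \boldsymbol{\pi}) \mapsto \Sigma_E$ is constant, nonempty, compact-valued, and trivially continuous, the only nontrivial step is joint continuity of $F$ on $\Sigma_E \times \mathcal{M} \times \Pi$. To obtain it, I would use the standard reduction: any $\boldsymbol{\lambda} \in \textrm{Alt}(\boldsymbol{\mu})$ promotes some suboptimal arm $a \ne a^*(\boldsymbol{\mu})$ past $a^*(\boldsymbol{\mu})$, and the remaining coordinates can be set to $\lambda_b = \mu_b$ at zero cost, leading to
\begin{equation*}
F(\boldsymbol{\omega}; \boldsymbol{\mu}, \boldsymbol{\pi}) = \min_{a \ne a^*(\boldsymbol{\mu})} \inf_{y \ge x} \Bigl[ \tilde{\pi}_a(\boldsymbol{\omega}) \, d(\mu_a, y) + \tilde{\pi}_{a^*(\boldsymbol{\mu})}(\boldsymbol{\omega}) \, d(\mu_{a^*(\boldsymbol{\mu})}, x) \Bigr],
\end{equation*}
where $\tilde{\pi}_b(\boldsymbol{\omega}) = \sum_e \omega_e \pi_e(b)$. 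For canonical one-dimensional exponential families, the inner bivariate infimum admits a well-known closed form (a weighted Chernoff-type information) that is jointly continuous in $\bigl(\tilde{\pi}_a, \tilde{\pi}_{a^*}, \mu_a, \mu_{a^*}\bigr)$ and continuously extends by zero when a weight vanishes. Since $\boldsymbol{\mu} \in \mathcal{M}$ has a strictly unique optimal arm, $a^*(\boldsymbol{\mu})$ is locally constant, so in a neighborhood of any reference point $F$ is a minimum of finitely many continuous functions, hence continuous. Berge's Maximum Theorem then delivers upper hemicontinuity (and compact-valuedness) of $\boldsymbol{\omega}^*$.

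\textbf{Transfer to $\boldsymbol{\tilde{\pi}}^*$ and main obstacle.} For the second mapping, I would observe that $\boldsymbol{\tilde{\pi}}^*(\boldsymbol{\mu}, \boldsymbol{\pi})$ coincides exactly with the image of $\boldsymbol{\omega}^*(\boldsymbol{\mu}, \boldsymbol{\pi})$ under the jointly continuous linear map $(\boldsymbol{\omega}, \boldsymbol{\pi}) \mapsto \bigl( \sum_e \omega_e \pi_e(a) \bigr)_{a \in [K]}$: one inclusion is immediate from $F(\boldsymbol{\omega}) = G(\tilde{\pi}(\boldsymbol{\omega}))$, and the converse follows because every $\tilde{\pi} \in \widetilde{\Sigma}_K$ is attained by some $\boldsymbol{\omega} \in \Sigma_E$. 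Since upper hemicontinuity is preserved under the continuous image of a compact-valued UHC correspondence, $\boldsymbol{\tilde{\pi}}^*$ inherits the property. The main obstacle in the plan is the joint continuity of $F$: $\textrm{Alt}(\boldsymbol{\mu})$ is a parameter-dependent open set defined by a strict inequality, and one has to handle both the change of active index $a^*(\boldsymbol{\mu})$ --- ruled out by the uniqueness encoded in $\mathcal{M}$ --- and the boundary behavior when some $\tilde{\pi}_a(\boldsymbol{\omega})$ vanishes, which is resolved by the explicit form of the inner infimum.
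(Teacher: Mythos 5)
Your proof is correct and follows essentially the same route as the paper: convexity of both sets follows from the fact that the objective is an infimum of functions linear in the decision variable (hence concave), and upper hemicontinuity comes from Berge's Maximum Theorem after establishing joint continuity of the value of the inner infimum via its closed form as a minimum over $a \ne a^*(\boldsymbol{\mu})$ of a weighted generalized Jensen--Shannon/Chernoff quantity, exactly as in the paper (which cites Lemma 3 of Garivier and Kaufmann for this rewriting). The only place where you deviate is the treatment of $\boldsymbol{\tilde{\pi}}^*$: the paper re-applies Berge with the constraint correspondence $\phi(\boldsymbol{\mu}, \boldsymbol{\pi}) = \widetilde{\Sigma}_K$, which is \emph{not} constant (it depends on $\boldsymbol{\pi}$), so that route implicitly requires checking continuity of the correspondence $\boldsymbol{\pi} \mapsto \widetilde{\Sigma}_K$; you instead obtain $\boldsymbol{\tilde{\pi}}^*$ as the image of the compact-valued, upper hemicontinuous correspondence $\boldsymbol{\omega}^*$ under the jointly continuous map $(\boldsymbol{\omega}, \boldsymbol{\pi}) \mapsto \bigl(\sum_e \omega_e \pi_e(a)\bigr)_a$, after verifying the argmax sets correspond under this surjection. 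Your variant sidesteps that continuity check and is, if anything, slightly cleaner; you also make explicit two points the paper leaves implicit, namely that $a^*(\boldsymbol{\mu})$ is locally constant on $\mathcal{M}$ and that the closed form extends continuously when a mixture weight $\tilde{\pi}_a(\boldsymbol{\omega})$ vanishes.
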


The unfamiliar reader might think of upper hemicontinuity as a generalization of the continuity property for set-valued mappings \citep{aubin2009set}. \footnote{Further technical details on this point are deferred to the appendix.} As our analysis will reveal, Proposition \ref{prop:omega-star-properties} will play a crucial role for the analysis of our algorithmic solution.


\section*{Appendix D: Unknown Mediators' policies}

In this section, we extend our results to the case in which the agent does not know $\bm{\pi}$, but instead it has learn it directly from data. All theoretical results are formally proven in the Appendix E.

Before detailing our theoretical findings, we notice that Theorem \ref{theo:lower-bound} still represents a valid lower bound for this more intricated setting. For this reason, one might be tempted to extend our algorithm to track the optimal mediators proportions $\bm{\omega}^*(\bm{\mu}, \bm{\pi})$ to the case in which the set $\bm{\pi}$ is unknown to the learner. To this end, let $\bm{\hat{\pi}}(t)$ be the matrix containing empirical estimates for each mediator policy $\bm{\pi_e}$. Then, it is sufficient to modify the C-tracking sampling rule presented in Section \ref{sec:tas} by computing any maximizer of the empirical version of Equation \eqref{eq:lower-bound-time-ours} where \emph{both} $\bm{\mu}$ and $\bm{\pi}$ are replaced with $\bm{\hat{\mu}}(t)$ and $\bm{\hat{\pi}}(t)$ respectively. \footnote{We notice that, in the previous section, $\bm{\mu}$ was replaced with $\bm{\hat{\mu}}(t)$ while $\bm{\pi}$ was used directly since it was available to the learner.} As we shall now see, this simple modification allows us to derive results that are equivalent to Theorem \ref{theo:high-prob} and \ref{theo:expect}. More specifically, we begin by showing the following almost surely convergence result. 

\begin{theorem}\label{theo:high-prob-uknown}
Consider any $\bm{\mu} \in \mathcal{M}$ and any $\bm{\pi}$ such that Assumption \ref{ass:action-covering} is satisfied. Let $\bm{\pi}$ be unknown to the learner prior to interacting with the environment. Let $\alpha \in (1, e/2]$. It holds that:
\begin{align}\label{eq:high-prob-unknown}
\mathbb{P}_{\bm{\mu}, \bm{\pi}} \left( \limsup_{\delta \rightarrow 0 } \frac{\tau_\delta}{\log\left( 1/\delta \right)} \le \alpha T^*(\bm{\mu}, \bm{\pi})\right) = 1.
\end{align}
\end{theorem}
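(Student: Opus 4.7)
The plan is to mirror the proof of Theorem \ref{theo:high-prob} (known $\bm{\pi}$ case), highlighting only the extra work needed to cope with the learner simultaneously estimating $\bm{\pi}$. As a warm-up, I would rewrite the empirical characteristic time as
\begin{equation*}
\widehat{T}(t)^{-1} = \sup_{\bm{\omega} \in \Sigma_E} \inf_{\bm{\lambda} \in \textrm{Alt}(\bm{\hat{\mu}}(t))} \sum_{e=1}^E \omega_e \sum_{a=1}^K \hat{\pi}_e(a,t)\, d(\hat{\mu}_a(t), \lambda_a),
\end{equation*}
and denote by $\bm{\hat{\omega}}(t)$ the maximizer (or selection) returned by the oracle, $L^{\infty}$-projected onto $\Sigma_E^{\epsilon_t}$.

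First, I would establish a joint almost-sure consistency statement for the empirical quantities. Because C-tracking with the projection onto $\Sigma_E^{\epsilon_t}$ enforces $N_e(t) \ge \sqrt{t+E^2} - 2E$ (the standard forced-exploration bound of \citet{garivieroptimal2016}), every mediator is queried infinitely often. Conditional on $E_t = e$, the action $A_t$ is an i.i.d.\ draw from $\bm{\pi}_e$ and the reward from $\nu_{A_t}$; hence the strong law of large numbers yields $\bm{\hat{\pi}}_e(t) \to \bm{\pi}_e$ a.s.\ for each $e$. Combined with Assumption \ref{ass:action-covering}, this forces $N_a(t) \to \infty$ a.s.\ for every arm $a \in [K]$, so $\bm{\hat{\mu}}(t) \to \bm{\mu}$ a.s.\ as well. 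This is the first place where the argument differs from the known-$\bm{\pi}$ case, and it is the only substantive novelty.

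Second, I would invoke Proposition \ref{prop:omega-star-properties}: the map $(\bm{\mu}, \bm{\pi}) \mapsto \bm{\omega}^*(\bm{\mu}, \bm{\pi})$ is upper hemicontinuous with convex, compact values. Consequently, on the full-measure event where $(\bm{\hat{\mu}}(t), \bm{\hat{\pi}}(t)) \to (\bm{\mu}, \bm{\pi})$, any limit point of $\bm{\hat{\omega}}(t)$ belongs to $\bm{\omega}^*(\bm{\mu}, \bm{\pi})$. The projection onto $\Sigma_E^{\epsilon_t}$ introduces an $O(\epsilon_t)$ perturbation that vanishes, so C-tracking of $\bm{\hat{\omega}}(t)$ gives $\max_{e} | N_e(t)/t - \omega_e^\star(t) | \to 0$ for some selection $\bm{\omega}^\star(t) \in \bm{\omega}^*(\bm{\mu}, \bm{\pi})$, and therefore $\tilde{\pi}_a(N(t)/t) \to \tilde{\pi}^*_a(\bm{\mu}, \bm{\pi})$ a.s.

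Third, I would plug these convergences into the GLR statistic. Writing $Z(t) = t \inf_{\bm{\lambda}\in\textrm{Alt}(\bm{\hat{\mu}}(t))} \sum_a (N_a(t)/t)\, d(\hat{\mu}_a(t), \lambda_a)$, continuity of the inf-inf problem in $(\bm{\hat{\mu}}(t), N(t)/t)$ (proved via the same argument used for Proposition \ref{prop:omega-star-properties}) gives $Z(t)/t \to T^*(\bm{\mu}, \bm{\pi})^{-1}$ a.s. Combined with the exploration-rate bound $\beta(t,\delta) \le \log(Ct^\alpha/\delta)$ of \citet{garivieroptimal2016}, the stopping condition $Z(t) \ge \beta(t,\delta)$ is triggered for every $t \ge t_0(\omega)$ with $t \ge \alpha T^*(\bm{\mu}, \bm{\pi}) \log(1/\delta) (1+o_\delta(1))$, which, after dividing by $\log(1/\delta)$ and taking $\limsup_{\delta \to 0}$, yields \eqref{eq:high-prob-unknown}.

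The main obstacle is decoupling the two estimation errors in $\bm{\hat{\mu}}(t)$ and $\bm{\hat{\pi}}(t)$ and propagating them through the (set-valued) oracle-weight map; this is precisely what Proposition \ref{prop:omega-star-properties} is designed to handle, so the technical core reduces to checking upper hemicontinuity in the joint argument $(\bm{\mu}, \bm{\pi})$ rather than only in $\bm{\mu}$. Once that is in place, the remainder follows the TaS template of \citet{garivieroptimal2016} essentially unchanged.
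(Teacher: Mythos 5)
Your proposal is correct and follows essentially the same route as the paper: almost-sure consistency of $\bm{\hat{\pi}}(t)$ via forced exploration and the law of large numbers, joint upper hemicontinuity of $(\bm{\mu},\bm{\pi}) \mapsto \bm{\omega}^*(\bm{\mu},\bm{\pi})$ from Proposition \ref{prop:omega-star-properties}, and then the known-$\bm{\pi}$ machinery (Lemmas \ref{lemma:w-convergence-to-convex-set}--\ref{lemma:na-convergence-to-inf-best-weigh} and the proof of Theorem \ref{theo:high-prob}) carried over unchanged. The only step you gloss over is the passage from convergence of the mediator proportions $N_t^E/t$ to convergence of the actual arm counts $N_t^A/t$ (the paper's Lemma \ref{lemma:na-convergence-to-inf-best-weigh}), but you already state the needed ingredient $N_{e,a}(t)/N_e(t) \to \pi_e(a)$, so this is a presentational omission rather than a gap.
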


Furthermore, as done in the previous section, it is possible to derive a result that directly controls the expectation of the stopping time $\tau_\delta$. 

\begin{theorem}\label{theo:expect-unknown}
Consider any $\bm{\mu} \in \mathcal{M}$ and any $\bm{\pi}$ such that Assumption \ref{ass:action-covering} is satisfied. Let $\bm{\pi}$ be unknown to the learner prior to interacting with the environment. Let $\alpha \in (1, e/2]$. It holds that:
\begin{align}\label{eq:expect-unknown}
\limsup_{\delta \rightarrow 0 } \frac{\mathbb{E}_{\bm{\mu}, \bm{\pi}}[\tau_\delta]}{\log(1/\delta)} \le \alpha T^*(\bm{\mu}, \bm{\pi}).
\end{align}
\end{theorem}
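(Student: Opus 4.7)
The plan is to adapt the classical Track-and-Stop expectation analysis of \citet{garivieroptimal2016} to the unknown-policies setting, building on the almost-sure statement of Theorem \ref{theo:high-prob-uknown}. The overall strategy is: (i) exhibit a deterministic time $T_\epsilon(\delta)$ of order $\alpha T^*(\bm{\mu}, \bm{\pi}) \log(1/\delta)$ such that, on a good event $\mathcal{E}_T$ of high probability, the algorithm has already stopped by time $T_\epsilon(\delta)$; (ii) control the contribution of the complement via a summable tail bound, so that
$$\mathbb{E}_{\bm{\mu}, \bm{\pi}}[\tau_\delta] \le T_\epsilon(\delta) + \sum_{T \ge 1} \mathbb{P}_{\bm{\mu}, \bm{\pi}}(\mathcal{E}_T^c),$$
and then take $\delta \to 0$ followed by $\epsilon \to 0$.

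First I would construct $\mathcal{E}_T$ as the event that, uniformly over $t \in [h(T), T]$ for some sublinear $h(T)$ (say $\sqrt{T}$), both $\|\bm{\hat{\mu}}(t) - \bm{\mu}\|_\infty \le \epsilon$ and $\|\bm{\hat{\pi}}(t) - \bm{\pi}\|_\infty \le \epsilon$. The C-tracking forced exploration built into the sampling rule ensures $N_e(t) \ge \sqrt{t}/2 - E$ for every mediator $e$, so Assumption \ref{ass:action-covering} implies that each mediator–arm coordinate with $\pi_e(a) > 0$ is observed $\Omega(\sqrt{t})$ times. A standard concentration argument for exponential-family means combined with a conditional Hoeffding bound for the empirical policies (conditioned on $N_e(t)$, the indicator $\mathbb{1}\{A_t = a, E_t = e\}/N_e(t)$ forms a bounded average) then yields $\mathbb{P}_{\bm{\mu}, \bm{\pi}}(\mathcal{E}_T^c) \le C T^{-\gamma}$ for some $\gamma > 1$, so the tail sum converges.

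Second, on $\mathcal{E}_T$ I would transfer closeness of the estimates into closeness of the GLR stopping statistic $Z(t)$ to $t/T^*(\bm{\mu}, \bm{\pi})$. Upper hemicontinuity of $(\bm{\mu}, \bm{\pi}) \mapsto \bm{\omega}^*(\bm{\mu}, \bm{\pi})$ from Proposition \ref{prop:omega-star-properties}, together with the C-tracking guarantee on $\sup_t \|N_{\cdot}(t)/t - \bm{\omega}^*\|_\infty$, places the empirical mediator proportions in an arbitrarily small neighborhood of the oracle set. Since the inner infimum
$\inf_{\bm{\lambda} \in \mathrm{Alt}(\bm{\mu})} \sum_e \omega_e \sum_a \pi_e(a) d(\mu_a, \lambda_a)$
is jointly continuous in $(\bm{\mu}, \bm{\pi}, \bm{\omega})$ on the compact feasible region, I obtain $Z(t) \ge (1 - \eta(\epsilon)) \, t / T^*(\bm{\mu}, \bm{\pi})$ with $\eta(\epsilon) \to 0$. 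Inverting the stopping condition $Z(t) \ge \beta(t, \delta) = \log(Ct^\alpha/\delta)$ produces
$T_\epsilon(\delta) = \alpha T^*(\bm{\mu}, \bm{\pi}) (1 + \eta'(\epsilon)) \log(1/\delta) + o(\log(1/\delta))$,
so that $\mathcal{E}_{T_\epsilon(\delta)} \subseteq \{\tau_\delta \le T_\epsilon(\delta)\}$. Plugging this into the displayed inequality and dividing by $\log(1/\delta)$ yields the theorem after sending $\delta \to 0$ and then $\epsilon \to 0$.

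The main obstacle I expect is the concentration step for $\bm{\hat{\pi}}(t)$: the number of observations per mediator, $N_e(t)$, is itself random and adapted to the history, so the analysis must use a martingale-style uniform deviation bound (e.g.\ a time-uniform Hoeffding inequality on the conditional Bernoulli process $\mathbb{1}\{A_t = a\} \mid E_t = e$) rather than a plain i.i.d.\ bound. Achieving a polynomial decay $\mathbb{P}(\mathcal{E}_T^c) = O(T^{-\gamma})$ with $\gamma > 1$, uniform in $\delta$, is precisely what enables the integration step and the strengthening of Theorem \ref{theo:high-prob-uknown} from almost-sure to in-expectation convergence; a weaker decay would leave a non-negligible tail and only recover the almost-sure statement.
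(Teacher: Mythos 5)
Your proposal follows essentially the same route as the paper: a good event requiring concentration of both $\bm{\hat{\mu}}(t)$ and $\bm{\hat{\pi}}(t)$ on $[h(T),T]$ with $\xi$ chosen through the upper hemicontinuity of $(\bm{\mu},\bm{\pi})\mapsto\bm{\omega}^*(\bm{\mu},\bm{\pi})$, summable complement probabilities obtained from the forced exploration $N_e(t)\gtrsim\sqrt{t}$, a finite-time tracking guarantee placing $N_t^E/t$ near the (convex) oracle set, a resulting lower bound $Z(t)\ge tC_{\epsilon,\xi}$, inversion of the threshold $\beta(t,\delta)$ to get $T_0(\delta)\sim\alpha T^*(\bm{\mu},\bm{\pi})\log(1/\delta)$, and the limits $\delta\to0$ then $\epsilon\to0$; the paper in fact gets exponential tails $BT\exp(-CT^{1/8})$, but your polynomial requirement suffices for the asymptotic claim. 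The one substantive difference is how the mediator-level tracking is converted into a bound on the GLR statistic, which is a function of the \emph{arm} counts $N_a(t)$: the paper keeps the extra Azuma--Hoeffding event $\mathcal{E}_T'$ controlling $|N_a(t)-\sum_{s<t}\pi_{E_s}(a)|$ (Lemma \ref{lemma:na-concetration}) and passes through Lemma \ref{lemma:finite-time-pi-convergence}, whereas your sketch jumps from closeness of $N_t^E/t$ to $\bm{\omega}^*$ directly to $Z(t)\ge(1-\eta)t/T^*$. Your route can be closed without the extra event, but only by making explicit the identity $N_a(t)=\sum_e N_e(t)\,\hat{\pi}_e(a)(t)$, so that on your event (which includes $\Vert\bm{\hat{\pi}}(t)-\bm{\pi}\Vert_\infty\le\epsilon$) one has $Z(t)/t=f(\bm{\hat{\mu}}(t),\bm{\hat{\pi}}(t),N_t^E/t)$ for the continuous function $f$ of Proposition \ref{prop:omega-star-properties}'s proof, from which the lower bound follows by joint continuity; as written, this bridging step is asserted rather than argued. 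With that made explicit (and with $h(T)$ chosen sublinearly enough that the pre-$h(T)$ segment contributes $o(1)$ to the tracking averages, e.g.\ $h(T)=T^{1/4}$ as in the paper, or an $\sqrt{T}/\epsilon$ burn-in if you keep $h(T)=\sqrt{T}$), the argument matches the paper's.
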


We now proceed by analyzing the results of Theorems \ref{theo:high-prob-uknown} and \ref{theo:expect-unknown}. First of all, as we can appreciate, they fully extend the results of Theorems \ref{theo:high-prob} and \ref{theo:expect} to the unknown policies setting. Notice, in particular, that the theoretical results of the unknown policy setting, i.e., Equations \eqref{eq:high-prob-unknown} and \eqref{eq:expect-unknown}, are \emph{completely equivalent} to the ones previously presented for the case in which $\bm{\pi}$ is available to the learner, i.e., Equations \eqref{eq:high-prob} and \eqref{eq:expect}. Furthermore, as a direct consequence of the fact that Theorem \ref{theo:lower-bound} represents a lower bound to the problem, it follows that the simple modification that we presented at the beginning of this section, is sufficient to derive an \emph{asymptotically optimal} algorithm even in the case in which $\bm{\pi}$ is not available to the learner.  Most importantly, these considerations implies that not knowing $\bm{\pi}$ does not affect the statistical complexity of the problem, at least in the asymptotic regime $\delta \rightarrow 0$. As a direct consequence, all the analysis and discussion we presented in the lower bound section hold equivalently both for the known and the unknown policy settings.

\section*{Appendix E: Proofs and derivations}
\label{app:theorem}

\subsection*{Statistical Complexity}
In this section, we derive claims concerning the statistical complexity of the problem. We begin by proving Theorem \ref{theo:lower-bound}.

\est*
\begin{proof}
Consider an instance $\bm{\lambda} \in \textup{Alt}(\bm{\mu})$. It is easy to see that, from Lemma 1 in \citet{kaufmanncomplexity2016} that:
\begin{align*}
\sum_{e=1}^E \left( \mathbb{E}_{\bm{\mu}, \bm{\pi}} \left[ N_e(\tau_\delta) \right] \sum_{a=1}^K \pi_e(a) d(\mu_a, \lambda_a) \right) \ge \textup{kl}(\delta, 1-\delta),
\end{align*}
where $N_e(t)$ denotes the number of pulls to mediator $e$ at time $t$.
Following \citet{garivieroptimal2016}, we notice that the previous Equation holds for all $\bm{\lambda} \in \textup{Alt}(\bm{\mu})$. Therefore, we have that:
\begin{align*}
\textup{kl}(\delta, 1-\delta) & \le \inf_{\bm{\lambda} \in \textup{Alt}(\bm{\mu})} \mathbb{E}_{\bm{\mu}, \bm{\pi}} [\tau_\delta] \sum_{e=1}^E \left( \frac{\mathbb{E}_{\bm{\mu}, \bm{\pi}} \left[ N_e(\tau_\delta) \right]}{\mathbb{E}_{\bm{\mu}, \bm{\pi}} [\tau_\delta]} \sum_{a=1}^K \pi_e(a) d(\mu_a, \lambda_a) \right) \\ & \le \mathbb{E}_{\bm{\mu}, \bm{\pi}} [\tau_\delta] \sup_{\bm{\omega} \in \Sigma_E} \inf_{\bm{\lambda} \in \textup{Alt}(\bm{\mu})} \sum_{e=1}^E \left( \omega_e \sum_{a=1}^K \pi_e(a) d(\mu_a, \lambda_a)\right),
\end{align*}
thus concluding the proof.
\end{proof}

Given Theorem \ref{theo:lower-bound}, we notice that, from Lemma 3 in \citet{garivieroptimal2016}, we have that $T^*(\bm{\mu}, \bm{\pi})^{-1}$ can be rewritten as:
\begin{align}\label{eq:key}
\min_{a \ne 1} \left( \sum_{e=1}^{E} (\pi_e(1) + \pi_e(a))  \right) I_{\frac{\sum_{e=1}^{E} \pi_e(1)}{\sum_{e=1}^{E} (\pi_e(1) + \pi_e(a))}} (\mu_1, \mu_a),
\end{align} 
where $I_\alpha(\mu_1, \mu_2) \coloneqq \alpha d(\mu_1, \alpha \mu_1 + (1-\alpha) \mu_2) + (1-\alpha) d(\mu_2, \alpha \mu_1 + (1-\alpha) \mu_2)$ denotes a generalized version of the Jensen-Shannon divergence. We notice that, for Gaussian distributions with unitary variance, Equation \eqref{eq:key}, reduces to \citep[see, e.g., Appendix A.4 in ][]{garivieroptimal2016}:
\begin{align}
\min_{a \ne 1} \frac{1}{2} \frac{ \left( \sum_{e=1}^K \pi_e(1) \right) \left( \sum_{e=1}^K \pi_e(a) \right) }{\left( \sum_{e=1}^K \pi_e(1) + \pi_e(a)\right)} \Delta^2_a,
\end{align}
from which the proof of Equation \eqref{eq:justification} and \eqref{eq:lower-bound-single-expert} follows directly.

At this point, we continue by proving Proposition \ref{prop:comparison}.

\comparison*
\begin{proof}
By begin by proving Equation \eqref{eq:prop-one}.
From Equation \eqref{eq:lower-bound-time-ours-2}, we have that, for any mediators' policies $\bm{\pi}$, the following equality holds:
\begin{align*}
T^*(\bm{\mu}, \bm{\pi})^{-1} = \sup_{\bm{\tilde{\pi}} \in \widetilde{\Sigma}_K} \inf_{\bm{\lambda} \in \textrm{Alt}(\bm{\mu})} \sum_{a=1}^K \tilde{\pi}_a d(\mu_a, \lambda_a).
\end{align*}
At this point, we notice that, whenever we consider $\bm{\bar{\pi}}$, $\widetilde{\Sigma}_K$ is equal to $\Sigma_K$.
The result follows by noticing that $\widetilde{\Sigma}_K \subseteq \Sigma_K$ for any mediators' policies $\bm{\pi}$.

We now continue by showing that $T^*(\bm{\mu}, \bm{\pi})^{-1} < T^*(\bm{\mu}, \bm{\bar{\pi}})^{-1}$ holds if and only if $\bm{\omega^*}(\bm{\mu}, \bm{\bar{\pi}}) \notin \widetilde{\Sigma}_K$. First of all, suppose that $T^*(\bm{\mu}, \bm{\pi})^{-1} < T^*(\bm{\mu}, \bm{\bar{\pi}})^{-1}$ holds. However, if $\bm{\omega^*}(\bm{\mu}, \bm{\bar{\pi}}) \in \widetilde{\Sigma}_K$ holds as well, then we would have that $T^*(\bm{\mu}, \bm{\pi})^{-1} \ge T^*(\bm{\mu}, \bm{\bar{\pi}})^{-1}$ by definition of $\bm{\omega^*}(\bm{\mu}, \bm{\bar{\pi}})$. Therefore, $\bm{\omega^*}(\bm{\mu}, \bm{\bar{\pi}}) \notin \widetilde{\Sigma}_K$. On the other hand, if $\bm{\omega^*}(\bm{\mu}, \bm{\bar{\pi}}) \notin \widetilde{\Sigma}_K$ holds, than $T^*(\bm{\mu}, \bm{\pi})^{-1} < T^*(\bm{\mu}, \bm{\bar{\pi}})^{-1}$ follows from the fact that $\bm{\omega^*}(\bm{\mu}, \bm{\bar{\pi}})$ is the unique maximizer of $T^*(\bm{\mu}, \bm{\bar{\pi}})^{-1}$. 
\end{proof}

Finally, before proving Proposition \ref{prop:omega-star-properties}, we first note that the corresponding $\bm{\omega^*}(\bm{\mu}, \bm{{\pi}})$ and $\bm{\tilde{\pi}^*}(\bm{\mu}, \bm{{\pi}})$ are, in general, not unique. As a simple example, it is sufficient to consider the case in which linearly-dependent policies are present in $\bm{\pi}$. At this point, we continue with the proof of Proposition \ref{prop:omega-star-properties}.

\upperhemi*
\begin{proof}
First of all, we prove the convexity of the sets. We begin by recalling the definition of $\bm{\omega}^*(\bm{\mu}, \bm{\pi})$:
\begin{align*}
    \bm{\omega}^*(\bm{\mu}, \bm{\pi}) = \argmax_{\bm{\omega} \in \Sigma_E} \inf_{\bm{\lambda} \in \Lambda(\bm{\mu})} \left(\sum_{e=1}^E \omega_e \sum_{a=1}^K \pi_e(a) d(\mu_a, \lambda_a) \right).
\end{align*}
In other words $\bm{\omega}^*(\bm{\mu}, \bm{\pi})$ is the set of maximizers of an infimum over linear functions (which is well-known to be concave). For this reason $\bm{\omega}^*(\bm{\mu}, \bm{\pi})$ is convex. A similar reasoning can be applied for $\bm{\tilde{\pi}}^*(\bm{\mu}, \bm{\pi})$. \footnote{This argument was used, for instance, in \citet{degenne2019pure}.} At this point, we proceed with the upper-hemicontinuity.
First of all, consider:
\begin{align*}
    f\left(\bm{\mu}, \bm{\pi}, \bm{\omega} \right) = \inf_{\bm{\lambda} \in \textrm{Alt}(\bm{\mu})} \sum_e \omega_e \sum_a \pi_e(a) d(\mu_a, \lambda_a).
\end{align*}
The function $f : \mathcal{M} \times (\Delta_K)^E\times \Delta_E  \rightarrow \mathbb{R}$ is continuous. To see this, from Equation \eqref{eq:key}, we can rewrite $f$ as:
\begin{align*}
        f\left(\bm{\mu}, \bm{\pi}, \bm{\omega} \right) = \min_{a \ne 1} \left( \sum_e \omega_e (\pi_e(1) + \pi_e(a))\right) I_{\frac{\sum_e \omega_e \pi_e(1) }{\sum_e \omega_e (\pi_e(1) + \pi_e(a))}}\left( \mu_1, \mu_a \right).
\end{align*}
Therefore, $f$ can be expressed as a minimum over continous functions. It follows that $f$ is continuous as well. At this point, the proof follows from an application of the Berge's Theorem \citep{aubin2009set} (see e.g., Theorem 22 in \citet{degenne2019pure}). Adopting the same notation as in \cite{degenne2019pure}, consider $\mathbb{X} = \mathcal{M} \times (\Delta_K)^E$, $\mathbb{Y} = \Delta_E$, $\phi(\bm{\mu}, \bm{\pi}) = \Delta_E$, and $u((\bm{\mu}, \bm{\pi}), \bm{\omega}) = f(\bm{\mu}, \bm{\pi}, \bm{\omega})$. At this point, we notice that $\phi$ is compact valued and continuous (since it is constant), while $u$ is continuous. Therefore, due to Berge's Theorem $\left(\bm{\mu}, \bm{\pi} \right) \rightarrow \bm{\omega}^*(\bm{\mu}, \bm{\pi})$ is upper hemicontinous and compact-valued. An identical reasoning can be applied for $\bm{\tilde{\pi}}^*(\bm{\mu}, \bm{\pi})$ replacing $\mathbb{Y}$ and $\phi(\bm{\mu}, \bm{\pi})$ with $\widetilde{\Sigma}_K$. 
\end{proof}

\subsection*{Helper Lemmas}
Before diving into the details of our analysis, we report a known result on C-tracking.
 
\begin{lemma}[Lemma 7 in \cite{garivieroptimal2016}]\label{lemma:expert-c-tracking}
For all $t > 1$ and $e \in [E]$, the C-tracking rules ensures that $N_e(t) \ge \sqrt{t + E^2} - 2E$. Furthermore, consider a sequence $(\bm{\omega}_t)$ such $\bm{\omega}_t \in \bm{\omega}^*(\hat{\bm{\mu}}(t), \bm{\pi})$ for all $t$. Then, C-tracking ensures that:
\begin{align*}
||N_t^E - \sum_{s=0}^{t-1} \bm{\omega}_s ||_{\infty} \le E(1+\sqrt{t}),
\end{align*}
where $N_t^E = (N_1(t), \dots, N_E(t))$ denotes the number of pulls to each mediator at time $t$.
\end{lemma}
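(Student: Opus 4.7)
The plan is to mimic the Garivier--Kaufmann style expectation analysis used for the known-policy Theorem \ref{theo:expect}, with the extra wrinkle that one must simultaneously control the concentration of $\hat{\bm{\pi}}(t)$ together with $\hat{\bm{\mu}}(t)$ and then invoke the set-valued continuity result from Proposition \ref{prop:omega-star-properties}. The starting point is the standard decomposition
\[
\mathbb{E}_{\bm{\mu},\bm{\pi}}[\tau_\delta] \;=\; \sum_{t=0}^{\infty}\mathbb{P}_{\bm{\mu},\bm{\pi}}(\tau_\delta > t) \;\le\; T_0(\delta) \;+\; \sum_{t > T_0(\delta)}\mathbb{P}_{\bm{\mu},\bm{\pi}}(\tau_\delta > t),
\]
where $T_0(\delta)$ will be of order $\alpha T^*(\bm{\mu},\bm{\pi})\log(1/\delta)$. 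The goal is to show that the tail sum is bounded uniformly in $\delta$, so that dividing by $\log(1/\delta)$ and sending $\delta \to 0$ yields the claim.

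First, I would define a ``typical'' event $\mathcal{E}_t$ on which $\|\hat{\bm{\mu}}(t)-\bm{\mu}\|_\infty \le \xi_t$ and $\|\hat{\bm{\pi}}(t)-\bm{\pi}\|_\infty \le \xi_t$ simultaneously, for a slowly vanishing deviation $\xi_t$. The mean concentration uses the usual exponential inequalities for canonical exponential families, combined with the lower bound $N_e(t)\ge \sqrt{t+E^2}-2E$ from Lemma \ref{lemma:expert-c-tracking} and Assumption \ref{ass:action-covering}, which guarantees that any arm $a$ is pulled $\Omega(\sqrt{t})$ times in expectation via the mediator that covers it. The policy concentration follows from Hoeffding's inequality applied to each mediator's categorical observations, again using the $\Omega(\sqrt{t})$ lower bound on $N_e(t)$ from C-tracking. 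Both pieces yield $\mathbb{P}(\mathcal{E}_t^c) \lesssim t^{-c}$ for some $c$ large enough that $\sum_t \mathbb{P}(\mathcal{E}_t^c) < \infty$.

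Second, on the good event I would show that the GLR statistic satisfies $Z(t) \ge t\bigl[T^*(\bm{\mu},\bm{\pi})^{-1} - \eta_t\bigr]$ with $\eta_t \to 0$. This is where upper hemicontinuity matters: the empirical oracle weights $\bm{\omega}_s \in \bm{\omega}^*(\hat{\bm{\mu}}(s),\hat{\bm{\pi}}(s))$ chosen by the sampling rule are close (in the set-valued sense) to $\bm{\omega}^*(\bm{\mu},\bm{\pi})$ by Proposition \ref{prop:omega-star-properties} applied jointly in $(\bm{\mu},\bm{\pi})$. The second part of Lemma \ref{lemma:expert-c-tracking} then transfers this to $\|N_t^E/t - \bar{\bm{\omega}}_t\|_\infty \to 0$ for a suitable average $\bar{\bm{\omega}}_t$, and a continuity argument on the minimum of $\sum_e (N_e(t)/t)\sum_a \pi_e(a)\,d(\hat{\mu}_a(t),\lambda_a)$ over $\bm{\lambda}\in\textup{Alt}(\hat{\bm{\mu}}(t))$ completes the lower bound on $Z(t)$. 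Choosing $T_0(\delta)$ so that $T_0(\delta)[T^*(\bm{\mu},\bm{\pi})^{-1}-\eta_{T_0(\delta)}] \ge \beta(T_0(\delta),\delta) = \log(CT_0(\delta)^\alpha/\delta)$ forces $\tau_\delta \le T_0(\delta)$ on $\mathcal{E}_{T_0(\delta)}$, and this $T_0(\delta)$ satisfies $T_0(\delta)/\log(1/\delta) \to \alpha T^*(\bm{\mu},\bm{\pi})$.

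The main obstacle is orchestrating the joint concentration of $\hat{\bm{\mu}}$ and $\hat{\bm{\pi}}$ together with the set-valued continuity: unlike the known-policy case, one must argue that empirical solutions to the \emph{two-argument} max-min problem are uniformly close to true solutions, which is exactly what Proposition \ref{prop:omega-star-properties} delivers. A secondary technicality is choosing the deviation schedule $\xi_t$ fast enough that $\eta_t$ shrinks but slowly enough that $\sum_t \mathbb{P}(\mathcal{E}_t^c)$ remains summable; this is a quantitative calibration akin to the one in \citet{garivieroptimal2016}, with the extra $\|\hat{\bm{\pi}}-\bm{\pi}\|_\infty$ term absorbed into a constant slack that vanishes as $\delta \to 0$ and therefore does not enter the leading asymptotic constant $\alpha T^*(\bm{\mu},\bm{\pi})$.
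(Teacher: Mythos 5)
Your proposal does not prove the statement in question. Lemma \ref{lemma:expert-c-tracking} is a purely \emph{deterministic} property of the C-tracking sampling rule: a forced-exploration lower bound $N_e(t) \ge \sqrt{t+E^2}-2E$ on the number of queries to each mediator, plus a bound $\|N_t^E - \sum_{s=0}^{t-1}\bm{\omega}_s\|_\infty \le E(1+\sqrt{t})$ on how far the realized mediator counts can drift from the cumulative tracked weights. It involves no stopping time, no GLR statistic, no concentration of $\hat{\bm{\mu}}$ or $\hat{\bm{\pi}}$, and no appeal to Proposition \ref{prop:omega-star-properties}. What you have sketched instead is the expectation analysis of the sample complexity (essentially Theorem \ref{theo:expect} / Theorem \ref{theo:expect-unknown}), i.e., a result that \emph{uses} this lemma as one of its ingredients. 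None of the machinery you describe --- typical events, upper hemicontinuity, calibration of $T_0(\delta)$ --- is relevant to establishing the tracking inequalities themselves.

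The actual argument is the one from Lemma 7 of \citet{garivieroptimal2016}, transported from arms to mediators: the first claim follows because the $L^\infty$ projection onto $\Sigma_E^{\epsilon_t}$ with $\epsilon_t = (E^2+t)^{-1/2}/2$ forces every projected weight to be at least $\epsilon_t$, so summing $\epsilon_s$ over $s \le t$ and combining with the tracking (argmax of the deficit $\sum_{s<t}\omega_{s,e} - N_e(t)$) gives the $\sqrt{t+E^2}-2E$ lower bound; the second claim is an induction on $t$ showing that the maximal deficit can grow by at most the projection error (controlled by $\epsilon_s$, contributing the $E\sqrt{t}$ term) plus one per step for the selected coordinate. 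The paper itself does not reprove this; it cites the known result. If you intend to supply a proof, you need this deterministic induction, not a sample-complexity analysis.
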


Furthermore, we notice that the fact $\mathbb{P}_{\bm{\mu}, \bm{\pi}} \left( \tau_\delta < +\infty, \hat{a}_{\tau_\delta} \ne a^* \right) \le \delta$ holds, is a direct consequence of Assumption 1, forced-exploration, and Proposition 12 in \citet{garivieroptimal2016}.

\subsection*{Algorithm analysis (Known Mediators' Policies; Almost Surely Convergence)}

At this point, we proceed by deriving the almost surely convergence result (i.e, Theorem \ref{theo:high-prob}).

\begin{lemma}\label{lemma:w-convergence-to-convex-set}
Consider a sequence of $\left( \bm{\hat{\bm{\mu}}}(t) \right)_{t \in \mathbb{N}}$ that converges almost surely to $\bm{\mu}$. For all $t \in \mathbb{N}$, let $\bm{\omega}_t \in \bm{\omega}^*(\bm{\hat{\mu}}(t), \bm{\pi})$ be arbitrary oracle weights for $\bm{\hat{\mu}}(t)$ and $\bm{\pi}$. Then, the following holds:
\begin{align*}
    \mathbb{P}_{\bm{\mu}, \bm{\pi}} \left( \lim_{t \rightarrow +\infty} \inf_{\bm{\omega} \in \bm{\omega}^*(\bm{\mu}, \bm{\pi})} \left\Vert \frac{1}{t} \sum_{s=0}^{t-1} \bm{\omega}_s - \bm{\omega} \right\Vert_\infty = 0 \right) = 1
\end{align*}
\end{lemma}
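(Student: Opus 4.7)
The plan is to combine the upper hemicontinuity of $\bm{\omega}^*(\cdot, \bm{\pi})$ with the convexity of $\bm{\omega}^*(\bm{\mu}, \bm{\pi})$, both provided by Proposition \ref{prop:omega-star-properties}, so that Cesaro averages of iterates that are eventually close to the optimal set remain close to the optimal set. Fix a realization in the probability-one event on which $\hat{\bm{\mu}}(t) \to \bm{\mu}$ and work pathwise throughout.

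First, fix $\varepsilon > 0$ and consider the open $\varepsilon$-neighborhood $U_\varepsilon \coloneqq \{\bm{\omega}' \in \Sigma_E : \inf_{\bm{\omega} \in \bm{\omega}^*(\bm{\mu},\bm{\pi})} \Vert \bm{\omega}' - \bm{\omega}\Vert_\infty < \varepsilon \}$, which is open and contains $\bm{\omega}^*(\bm{\mu},\bm{\pi})$. By upper hemicontinuity of $\bm{\mu}' \mapsto \bm{\omega}^*(\bm{\mu}', \bm{\pi})$, there exists $\eta > 0$ such that, whenever $\Vert \bm{\mu}' - \bm{\mu}\Vert < \eta$, one has $\bm{\omega}^*(\bm{\mu}', \bm{\pi}) \subseteq U_\varepsilon$. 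Since $\hat{\bm{\mu}}(t) \to \bm{\mu}$ almost surely, there exists a (random but almost surely finite) time $T_\varepsilon$ such that, for all $s \ge T_\varepsilon$, $\bm{\omega}_s \in U_\varepsilon$; i.e., each such $\bm{\omega}_s$ can be decomposed as $\bm{\omega}_s = \bm{v}_s + \bm{e}_s$ with $\bm{v}_s \in \bm{\omega}^*(\bm{\mu}, \bm{\pi})$ and $\Vert \bm{e}_s \Vert_\infty < \varepsilon$.

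Next, split the Cesaro sum as
\begin{align*}
\frac{1}{t} \sum_{s=0}^{t-1} \bm{\omega}_s \;=\; \frac{1}{t} \sum_{s=0}^{T_\varepsilon - 1} \bm{\omega}_s \;+\; \frac{t - T_\varepsilon}{t} \cdot \frac{1}{t - T_\varepsilon} \sum_{s=T_\varepsilon}^{t-1} \bm{v}_s \;+\; \frac{1}{t}\sum_{s=T_\varepsilon}^{t-1} \bm{e}_s .
\end{align*}
The first term has $\ell_\infty$-norm at most $T_\varepsilon / t$, which vanishes as $t \to \infty$. The third term has $\ell_\infty$-norm bounded by $\varepsilon$. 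For the middle term, convexity of $\bm{\omega}^*(\bm{\mu}, \bm{\pi})$ ensures that the average $\bar{\bm{v}}_t \coloneqq \frac{1}{t - T_\varepsilon} \sum_{s=T_\varepsilon}^{t-1} \bm{v}_s$ lies in $\bm{\omega}^*(\bm{\mu}, \bm{\pi})$, and $\frac{t - T_\varepsilon}{t} \bar{\bm{v}}_t = \bar{\bm{v}}_t - \frac{T_\varepsilon}{t} \bar{\bm{v}}_t$ differs from the point $\bar{\bm{v}}_t \in \bm{\omega}^*(\bm{\mu}, \bm{\pi})$ by a vanishing quantity.

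Putting the three bounds together, for $t$ large enough,
\begin{align*}
\inf_{\bm{\omega} \in \bm{\omega}^*(\bm{\mu}, \bm{\pi})} \left\Vert \frac{1}{t} \sum_{s=0}^{t-1} \bm{\omega}_s - \bm{\omega} \right\Vert_\infty \;\le\; \left\Vert \frac{1}{t} \sum_{s=0}^{t-1} \bm{\omega}_s - \bar{\bm{v}}_t \right\Vert_\infty \;\le\; 2\varepsilon ,
\end{align*}
so the $\limsup$ of the left-hand side is at most $2\varepsilon$. Since $\varepsilon > 0$ was arbitrary, the limit is $0$ on the probability-one event where $\hat{\bm{\mu}}(t) \to \bm{\mu}$, proving the claim. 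The main conceptual obstacle is that, because $\bm{\omega}^*(\bm{\mu}, \bm{\pi})$ may fail to be a singleton, the iterates $\bm{\omega}_s$ need not converge to any particular point; the convexity of the limit set is exactly what lets us sidestep this issue and still conclude convergence of the Cesaro averages to the set.
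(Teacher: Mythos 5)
Your proof is correct and follows essentially the same route as the paper's: upper hemicontinuity of $\bm{\omega}^*(\cdot,\bm{\pi})$ to place the iterates $\bm{\omega}_s$ in an $\varepsilon$-neighborhood of $\bm{\omega}^*(\bm{\mu},\bm{\pi})$ after an almost surely finite time, a split of the Cesaro average at that time, and convexity of the limit set to control the tail average. The only difference is cosmetic: where the paper invokes Lemma 33 of \citet{degenne2019pure} for the fact that an average of points $\varepsilon$-close to a convex set remains $\varepsilon$-close to it, you prove that step inline via the decomposition $\bm{\omega}_s = \bm{v}_s + \bm{e}_s$.
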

\begin{proof}
The proof follows the one of Lemma 6 of \cite{degenne2019pure}.

Let $\mathcal{E}$ be the following event:
\begin{align*}
    \mathcal{E} = \left\{ \bm{\hat{\mu}}(t)  \rightarrow \bm{\mu} \right\}.
\end{align*}
Event $\mathcal{E}$ holds by assumption with probability $1$. Due to Proposition \ref{prop:omega-star-properties}, we also have that there for all $\epsilon > 0$, there exists $\xi > 0$ such that if $\left\Vert \bm{\hat{\mu}}(t) - \bm{\mu} \right\Vert_\infty \le \xi$ holds, then for all $\bm{\omega}_t \in \bm{\omega}^*(\bm{\hat{\mu}}(t), \bm{\pi})$, $\inf_{\bm{\omega} \in \bm{\omega}^*(\bm{\omega}, \bm{\pi})}\left\Vert \bm{\omega}_t - \bm{\omega} \right\Vert_\infty$ holds as well. At this point, we also notice that, on $\mathcal{E}$, for any $\xi > 0$, there exists $t_0$ such that for all $t \ge t_0$, $\left\Vert \bm{\hat{\mu}}(t) - \bm{\mu} \right\Vert_\infty \le \xi$ holds.

At this point, for any $\bm{\omega} \in \bm{\omega}^*(\bm{\mu}, \bm{\omega})$, we have that, for all $t \ge t_0$:
\begin{align*}
    \left\Vert \frac{1}{t} \sum_{s=0}^{t-1} \bm{\omega}_s - \bm{\omega} \right\Vert_\infty \le \frac{t_0}{t} + \frac{t-t_0}{t} \left\Vert \frac{1}{t - t_0} \sum_{t=t_0}^{t-1} \bm{\omega}_s - \bm{\omega} \right\Vert.
\end{align*}

Taking infimums and using the convexity of $\bm{\omega}^*(\bm{\mu})$ (Proposition \ref{prop:omega-star-properties}), together with Lemma 33 of \cite{degenne2019pure}, we have that:
\begin{align}
    \inf_{\bm{\omega} \in \bm{\omega}^*(\bm{\mu}, \bm{\pi})}\left\Vert \frac{1}{t} \sum_{s=0}^{t-1} \bm{\omega}_s - \bm{\omega} \right\Vert_\infty \le \frac{t_0}{t} + \epsilon,
\end{align}
which concludes the proof.
\end{proof}

A direct consequence of Lemma \ref{lemma:w-convergence-to-convex-set} is the following one.

\begin{lemma}\label{lemma:ne-convergence-to-inf-best-weigh}
Consider a sequence of $\left( \bm{\hat{\bm{\mu}}}(t) \right)_{t \in \mathbb{N}}$ generated while following the C-tracking sampling strategy. Then, it holds that:
\begin{align*}
    \mathbb{P}_{\bm{\mu}, \bm{\pi}} \left( \lim_{t \rightarrow +\infty} \inf_{\bm{\omega} \in \bm{\omega}^*(\bm{\mu}, \bm{\pi})} \left\Vert \frac{N_t^E}{t} - \bm{\omega} \right\Vert_\infty = 0 \right) = 1.
\end{align*}
\end{lemma}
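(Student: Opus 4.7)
The plan is to combine three ingredients in sequence: (i) the deterministic forced-exploration bound from Lemma \ref{lemma:expert-c-tracking} together with Assumption \ref{ass:action-covering} to obtain almost-sure consistency of the empirical means, (ii) Lemma \ref{lemma:w-convergence-to-convex-set} to pass from convergence of the means to convergence of the Cesàro-averaged oracle weights, and (iii) a second invocation of Lemma \ref{lemma:expert-c-tracking} to transfer that convergence from the Cesàro average to the normalised pull counts $N_t^E/t$.

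First, I would establish that $\bm{\hat{\mu}}(t) \to \bm{\mu}$ almost surely. By Lemma \ref{lemma:expert-c-tracking}, each mediator satisfies $N_e(t) \ge \sqrt{t+E^2}-2E$, and this is deterministic. Under Assumption \ref{ass:action-covering}, for every arm $a \in [K]$ there exists a mediator $e(a)$ with $\pi_{e(a)}(a) > 0$; conditionally on $N_{e(a)}(t) \to \infty$, the pull count $N_a(t)$ stochastically dominates a binomial with $\Omega(\sqrt{t})$ trials and positive success probability $\pi_{e(a)}(a)$. A Borel--Cantelli argument then gives $N_a(t) \to \infty$ almost surely for every $a$, and the strong law of large numbers applied to the i.i.d.\ reward samples from arm $a$ yields $\hat{\mu}_a(t) \to \mu_a$ almost surely for every $a$.

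Second, on the almost-sure event where $\bm{\hat{\mu}}(t) \to \bm{\mu}$, Lemma \ref{lemma:w-convergence-to-convex-set} applies to any measurable selection $\bm{\omega}_t \in \bm{\omega}^*(\bm{\hat{\mu}}(t), \bm{\pi})$ and delivers
\[
\lim_{t \to \infty} \inf_{\bm{\omega} \in \bm{\omega}^*(\bm{\mu}, \bm{\pi})} \left\Vert \frac{1}{t} \sum_{s=0}^{t-1} \bm{\omega}_s - \bm{\omega} \right\Vert_\infty = 0.
\]
Third, Lemma \ref{lemma:expert-c-tracking} also yields the deterministic tracking bound
\[
\left\Vert \frac{N_t^E}{t} - \frac{1}{t}\sum_{s=0}^{t-1} \bm{\omega}_s \right\Vert_\infty \le \frac{E(1+\sqrt{t})}{t},
\]
whose right-hand side vanishes as $t \to \infty$. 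The triangle inequality, applied inside the infimum over $\bm{\omega} \in \bm{\omega}^*(\bm{\mu}, \bm{\pi})$, then combines these two displays to give the stated claim.

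The main obstacle is step (i): although C-tracking enforces deterministic growth of mediator counts, the arm counts $N_a(t)$ depend on the random realisations of the mediators' policies, so one has to argue carefully that each arm is sampled infinitely often before applying the strong law of large numbers. Once this is in place, the remainder of the argument is a routine triangle-inequality combination of the two preceding lemmas.
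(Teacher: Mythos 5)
Your proof is correct and follows essentially the same route as the paper's: a triangle-inequality decomposition of $\left\Vert N_t^E/t - \bm{\omega} \right\Vert_\infty$ into the tracking error, bounded deterministically by $E(1+\sqrt{t})/t$ via Lemma \ref{lemma:expert-c-tracking}, plus the Cesàro-average term, which is handled by Lemma \ref{lemma:w-convergence-to-convex-set} after taking infimums. Your explicit step (i) establishing $\bm{\hat{\mu}}(t)\to\bm{\mu}$ almost surely (forced exploration of mediators, Assumption \ref{ass:action-covering}, Borel--Cantelli, and the strong law) correctly supplies a hypothesis of Lemma \ref{lemma:w-convergence-to-convex-set} that the paper leaves implicit here and only invokes later.
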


\begin{proof}
    Consider any $\bm{\omega} \in \bm{\omega}^*(\bm{\mu}, \bm{\pi})$. Then, for any sequence $\left( \bm{\omega}_t \right)_{t \in \mathbb{N}}$ such that $\bm{\omega}_t \in \bm{\omega}^*(\bm{\hat{\mu}}, \bm{\pi})$,  we have that:
    \begin{align*}
        \left\Vert \frac{N_t^E}{t} - \bm{\omega} \right\Vert_{\infty} & \le \left\Vert \frac{N_t^E}{t} - \frac{1}{t} \sum_{s=0}^{t-1} \bm{\omega}_s \right\Vert_{\infty} + \left\Vert \frac{1}{t} \sum_{s=0}^{t-1} \bm{\omega}_s - \bm{\omega} \right\Vert_{\infty} \\ & \le \frac{E(1+ \sqrt{t})}{t} + \left\Vert \frac{1}{t} \sum_{s=0}^{t-1} \bm{\omega}_s - \bm{\omega} \right\Vert_{\infty},
    \end{align*}
    where in the second inequality we have used Lemma \ref{lemma:expert-c-tracking}. Then, taking infimums and applying Lemma \ref{lemma:w-convergence-to-convex-set} concludes the proof.
\end{proof}

At this point, we are ready to state the main Lemma that allows us to match, almost surely, the expected lower bound on the sample complexity. 

\begin{lemma}\label{lemma:na-convergence-to-inf-best-weigh}
    Consider a sequence of $\left( \bm{\hat{\bm{\mu}}}(t) \right)_{t \in \mathbb{N}}$ generated while following the C-tracking sampling strategy. Then, it holds that:
\begin{align*}
    \mathbb{P}_{\bm{\mu}, \bm{\pi}} \left( \lim_{t \rightarrow +\infty} \inf_{\bm{\tilde{\pi}} \in \bm{\tilde{\pi}}^*(\bm{\mu}, \bm{\pi})} \left\Vert \frac{N_t^A}{t} - \bm{\tilde{\pi}} \right\Vert_\infty = 0 \right) = 1.
\end{align*}
\end{lemma}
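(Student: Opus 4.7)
The plan is to reduce this to the already-established convergence of mediator frequencies (Lemma \ref{lemma:ne-convergence-to-inf-best-weigh}) and then bridge from mediator proportions to arm proportions via the linear map $\boldsymbol{\omega} \mapsto \boldsymbol{\tilde{\pi}}(\boldsymbol{\omega})$ with components $\tilde{\pi}_a(\boldsymbol{\omega}) = \sum_e \omega_e \pi_e(a)$. The key observation is that this map sends $\boldsymbol{\omega}^*(\boldsymbol{\mu}, \boldsymbol{\pi})$ into $\boldsymbol{\tilde{\pi}}^*(\boldsymbol{\mu}, \boldsymbol{\pi})$: indeed, the two objectives in \eqref{eq:lower-bound-time-ours} and \eqref{eq:lower-bound-time-ours-2} agree along this map, so if $\boldsymbol{\omega}$ attains the supremum in the mediator form, then $\boldsymbol{\tilde{\pi}}(\boldsymbol{\omega})$ attains it in the arm form.

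First I would decompose $N_a(t) = M_a(t) + \sum_{e=1}^E N_e(t)\pi_e(a)$, where
\begin{equation*}
M_a(t) = \sum_{s=1}^t \bigl(\mathbb{1}\{A_s = a\} - \pi_{E_s}(a)\bigr)
\end{equation*}
is a bounded martingale with respect to the natural filtration, since conditional on $E_s$ and the past, $\mathbb{P}(A_s = a) = \pi_{E_s}(a)$. Azuma--Hoeffding together with a Borel--Cantelli argument (or the strong law for bounded martingale differences) yields $M_a(t)/t \to 0$ almost surely, and hence
\begin{equation*}
\left|\frac{N_a(t)}{t} - \sum_{e=1}^E \frac{N_e(t)}{t}\pi_e(a)\right| \xrightarrow{a.s.} 0
\end{equation*}
for every $a \in [K]$.

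Second, I would combine this with Lemma \ref{lemma:ne-convergence-to-inf-best-weigh}. Fix $\epsilon>0$; on the probability-one event of that lemma, for all large $t$ there exists $\boldsymbol{\omega}(t) \in \boldsymbol{\omega}^*(\boldsymbol{\mu}, \boldsymbol{\pi})$ with $\|N_t^E/t - \boldsymbol{\omega}(t)\|_\infty \le \epsilon$. Define $\boldsymbol{\tilde{\pi}}(t) \in \boldsymbol{\tilde{\pi}}^*(\boldsymbol{\mu}, \boldsymbol{\pi})$ by $\tilde{\pi}_a(t) = \sum_e \omega_e(t)\pi_e(a)$. The triangle inequality gives, for each $a$,
\begin{equation*}
\left|\frac{N_a(t)}{t} - \tilde{\pi}_a(t)\right| \le \left|\frac{N_a(t)}{t} - \sum_{e} \frac{N_e(t)}{t}\pi_e(a)\right| + \sum_e \pi_e(a)\left|\frac{N_e(t)}{t} - \omega_e(t)\right| \le o(1) + E\epsilon.
\end{equation*}
Taking the maximum over $a$ and then the infimum over $\boldsymbol{\tilde{\pi}} \in \boldsymbol{\tilde{\pi}}^*(\boldsymbol{\mu}, \boldsymbol{\pi})$ shows $\inf_{\boldsymbol{\tilde{\pi}}} \|N_t^A/t - \boldsymbol{\tilde{\pi}}\|_\infty \le o(1) + E\epsilon$ almost surely, and letting $\epsilon \downarrow 0$ along a sequence closes the argument.

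The main obstacle I anticipate is purely notational rather than conceptual: one needs to ensure that the measurable selection $\boldsymbol{\omega}(t)$ exists and that the associated $\boldsymbol{\tilde{\pi}}(t)$ genuinely lies in $\boldsymbol{\tilde{\pi}}^*(\boldsymbol{\mu}, \boldsymbol{\pi})$. Both follow from the max-set correspondence noted above (so no extra measurable selection theorem is needed beyond pointwise choice on the good event) and from the fact that, by Proposition \ref{prop:omega-star-properties}, $\boldsymbol{\omega}^*(\boldsymbol{\mu}, \boldsymbol{\pi})$ is nonempty, compact and convex, so the infimum in Lemma \ref{lemma:ne-convergence-to-inf-best-weigh} is attained. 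The martingale step is standard and the rest is a triangle-inequality argument, so I do not expect a substantive difficulty beyond bookkeeping.
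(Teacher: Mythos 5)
Your proof is correct, and its skeleton matches the paper's: reduce to the mediator-level convergence of Lemma \ref{lemma:ne-convergence-to-inf-best-weigh}, push a near-optimal $\bm{\omega}$ through the linear map $\bm{\omega} \mapsto \bm{\tilde{\pi}}(\bm{\omega})$, and close with a triangle inequality; you also make explicit the crucial (and in the paper only implicit) fact that this map sends maximizers of Equation \eqref{eq:lower-bound-time-ours} to maximizers of Equation \eqref{eq:lower-bound-time-ours-2}. The one place where you genuinely diverge is the concentration step linking $N_a(t)$ to the mediator counts: the paper writes $N_a(t)/t = \sum_e \frac{N_{e,a}(t)}{N_e(t)}\frac{N_e(t)}{t}$ and invokes the strong law of large numbers for the per-mediator empirical frequencies $N_{e,a}(t)/N_e(t) \rightarrow \pi_e(a)$, which requires $N_e(t) \rightarrow +\infty$ and hence leans on the forced exploration of C-tracking; you instead center $N_a(t)$ at $\sum_{s} \pi_{E_s}(a) = \sum_e N_e(t)\pi_e(a)$ and control the martingale $M_a(t)$ via Azuma--Hoeffding plus Borel--Cantelli, which needs no lower bound on the $N_e(t)$ and is exactly the decomposition the paper itself adopts later, in the event $\mathcal{E}_T'(\gamma)$ of Lemma \ref{lemma:na-concetration} and in Lemma \ref{lemma:finite-time-pi-convergence}, for the in-expectation analysis. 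Your route is thus a valid and arguably more uniform alternative (the same concentration tool then serves both the almost-sure and the expectation results), while the paper's LLN-ratio argument is slightly more elementary but ties the almost-sure proof to the forced-exploration property. The remaining bookkeeping points you flag (attainment of the infimum via compactness of $\bm{\omega}^*(\bm{\mu},\bm{\pi})$ from Proposition \ref{prop:omega-star-properties}, or simply taking near-minimizers) are handled the same way in the paper, which also just selects $\bm{\omega}_t$ in the argmin, so there is no gap.
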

\begin{proof}
We need to show that, with probability $1$, for all $\epsilon > 0$, there exists $t_\epsilon$ such that for all $t \ge t_\epsilon$ the following holds:
\begin{align}\label{eq:na-converngece-to-inf-eq-1}
    \inf_{\bm{\tilde{\pi}} \in \bm{\tilde{\pi}}^*(\bm{\mu}, \bm{\pi})} \left\Vert \frac{N_t^A}{t} - \bm{\tilde{\pi}} \right\Vert_\infty \le \epsilon.
\end{align}
We notice that a sufficient condition for Equation \eqref{eq:na-converngece-to-inf-eq-1} to hold is that it holds for some policy $\bm{\tilde{\pi}} \in \bm{\tilde{\pi}}^*(\bm{\mu}, \bm{\pi})$ (i.e., not necessarly the one that attains the infimum). 

At this point, we proceed with some considerations. First of all, we know that, due to the law of large numbers, and the fact that $N_e(t) \rightarrow +\infty$, we have that $\frac{N_(e,a)(t)}{N_e(t)} \rightarrow \pi_e(a)$ with probability $1$. More precisely, with probability $1$, for all $\epsilon_1 > 0$, there exists $t_{\epsilon_1}$ such that for all $t \ge t_{\epsilon_1}$ the following holds:
\begin{align*}
    \frac{N_{e,a}(t)}{N_e(t)} \in \left[ \pi_e(a) - \epsilon_1, \pi_e(a) + \epsilon_1 \right].
\end{align*}
Furthermore, due to Lemma \ref{lemma:ne-convergence-to-inf-best-weigh}, we have that, with probability $1$, for all $\epsilon_2 > 0$, there exists $t_{\epsilon_2}$, such that for all $t \ge t_{\epsilon_2}$ the following holds:
\begin{align*}
    \inf_{\bm{\omega} \in \bm{\omega}^*(\bm{\mu}, \bm{\pi})} \left\Vert \frac{N_e(t)}{t} - \omega_e \right\Vert_\infty \in \left[-\epsilon_2, \epsilon_2 \right].
\end{align*}
In other words, let $\bm{\omega}_t \in \argmin_{\bm{\omega} \in \bm{\omega}^*(\bm{\mu},\bm{\pi})} \left\Vert \frac{N_e(t)}{t} - \omega_e \right\Vert_\infty$. The following holds:
\begin{align*}
    \frac{N_e(t)}{t} \in \left[ \omega_{t,e} - \epsilon_2, \omega_{t,e} + \epsilon_2 \right]. 
\end{align*}

At this point, focus on Equation \eqref{eq:na-converngece-to-inf-eq-1}. Let $\bm{\tilde{\pi}}$ be any policy within $\bm{\tilde{\pi}}^*(\bm{\mu}, \bm{\pi})$. Then, Equation \eqref{eq:na-converngece-to-inf-eq-1} is satisfied whenever for all actions $a \in \mathcal{A}$ the following holds:
\begin{align*}
    \frac{N_a(t)}{t} - \tilde{\pi}(a) \le \epsilon,
\end{align*}
and
\begin{align*}
    -\frac{N_a(t)}{t} + \tilde{\pi}(a) \le \epsilon,
\end{align*}
holds. Let us focus on $\frac{N_a(t)}{t} - \tilde{\pi}(a) \le \epsilon$ (the case of $-\frac{N_a(t)}{t} + \tilde{\pi}(a) \le \epsilon$ is almost identical):
\begin{align*}
    \frac{N_a(t)}{t} - \tilde{\pi}(a) = \sum_e \frac{N_{a,e}(t)}{N_e(t)} \frac{N_e(t)}{t} - \sum_e \tilde{\omega}_e \pi_e(a),
\end{align*}
where $\bm{\tilde{\omega}}$ is any oracle weight that induces $\bm{\tilde{\pi}}$. With probability $1$, however, we have that:
\begin{align*}
    \frac{N_a(t)}{t} - \tilde{\pi}(a) & = \sum_e \frac{N_{a,e}(t)}{N_e(t)} \frac{N_e(t)}{t} - \sum_e \tilde{\omega}_e \pi_e(a) \\ & \le \sum_e (\pi_e(a) + \epsilon_1) (\omega_{t,e} + \epsilon_2) - \sum_e \tilde{\omega}_e \pi_e(a).
\end{align*}
At this point, we notice that the previous equation holds for any oracle weight $\bm{\tilde{\omega}}$ that induces $\bm{\tilde{\pi}}$, and furthermore, it also holds for any $\bm{\tilde{\pi}} \in \bm{\tilde{\pi}}^*(\bm{\mu}, \bm{\pi})$. It thus sufficies to pick $\bm{\tilde{\omega}}$ equal to $\bm{\omega}_t$ to obtain the following:
\begin{align*}
    \frac{N_a(t)}{t} - \tilde{\pi}_t(a) \le \sum_e \pi_e(a) \epsilon_2 + \epsilon_1 \omega_{t,e} + \epsilon_1 \epsilon_2, 
\end{align*}
which concludes the proof.

\end{proof}

We are now ready to prove Theorem \ref{theo:high-prob}.

\highprob*
\begin{proof}
    Consider the following event: 
    \begin{align*}
        \mathcal{E} = \left\{ \lim_{t \rightarrow +\infty} \inf_{\bm{\tilde{\pi}} \in \bm{\tilde{\pi}}^*(\bm{\mu}, \bm{\pi})} \left\Vert \frac{N_t^A}{t} - \bm{\tilde{\pi}} \right\Vert_\infty = 0 \textrm{ and } \hat{\bm{\mu}}(t) \rightarrow \bm{\mu} \right\}
    \end{align*}
    Due to Lemma \ref{lemma:na-convergence-to-inf-best-weigh}, the sampling strategy, the assumption on the mediators's policies, and the law of large numbers we know that $\mathcal{E}$ is of probability $1$. Therefore, there exists $t_0$ such that for all $t \ge t_0$, $\hat{\mu}_1(t) > \max_{a \ne 1} \hat{\mu}_a(t)$ and, consequently: 
    \begin{align*}
        Z(t) = t \left[ \min_{a \ne 1} \left( \frac{N_1(t)}{t} + \frac{N_a(t)}{t} \right) I_{\frac{N_1(t)/t}{N_1(t)/t + N_a(t)/t}} (\hat{\mu}_1(t)), \hat{\mu}_a(t)) \right].
    \end{align*}
    

    Let $\bm{\tilde{\pi}}_t \in \arginf_{\bm{\tilde{\pi}} \in \bm{\tilde{\pi}}^*(\bm{\mu}, \bm{\pi})} \left\Vert \frac{N_t^A}{t} - \bm{\tilde{\pi}} \right\Vert_\infty $.
    For all $\epsilon > 0$, there exists $t_1 \ge t_0$ such that for all $t \ge t_1$ and all action $a \in \mathcal{A} \setminus \{ 1 \}$ the following holds:
    \begin{align*}
         \left( \frac{N_1(t)}{t} + \frac{N_a(t)}{t} \right) I_{\frac{N_1(t)/t}{N_1(t)/t + N_a(t)/t}} (\hat{\mu}_1(t)), \hat{\mu}_a(t)) \ge \frac{\tilde{\pi}_t(1) + \tilde{\pi}_t(a)}{1+\epsilon} I_{\frac{\tilde{\pi}_t(1)}{\tilde{\pi}_t(1) + \tilde{\pi}_t(a)}}(\mu_1, \mu_a).
    \end{align*}
    Therefore, since $\bm{\tilde{\pi}}_t \in \bm{\tilde{\pi}}^*(\bm{\mu}, \bm{\pi})$, for all $t \ge t_1$ we have that:
    \begin{align*}
        Z(t) \ge \frac{t}{(1+\epsilon) T^*(\bm{\mu}, \bm{\pi})}.
    \end{align*}
    The rest of the proof follows unchanged w.r.t. Proposition 13 in \cite{garivieroptimal2016}.
    
\end{proof}

\subsection*{Algorithm analysis (Known Mediators' Policies; Expectation)}

In order to prove Theorem \ref{theo:expect}, we begin with some concentration events analysis.

First of all, let $h(T) = T^{1/4}$, and $\epsilon > 0$. Define:
\begin{align*}
    \mathcal{E}_T = \bigcap_{t=h(T)}^T \left( \left\Vert \bm{\hat{\mu}}(t) - \bm{\mu} \right\Vert_{\infty} \le \xi \right),
\end{align*}
where $\xi$ is such that:
\begin{align*}
\left\Vert \bm{\mu}' - \bm{\mu} \right\Vert_{\infty} \le \xi \implies \forall \bm{\omega}' \in \bm{\omega}^*(\bm{\mu}', \bm{\pi}) \textrm{ } \exists \bm{\omega} \in \bm{\omega}^*(\bm{\mu},\bm{\pi}), \left\Vert \bm{\omega}' - \bm{\omega} \right\Vert_\infty \le \epsilon.
\end{align*}

\begin{lemma}\label{lemma:mu-concentration}
There exists two constants $B$ and $C$ such that:
\begin{align*}
    \mathbb{P}_{\bm{\mu}, \bm{\pi}}(\mathcal{E}_T^c) \le BT \exp(-CT^{1/8}).
\end{align*}
\end{lemma}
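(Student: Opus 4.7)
The plan is to reduce the event $\mathcal{E}_T^c$ to a union bound over $t \in \{h(T),\dots,T\}$ and $a \in [K]$, and then, for each fixed pair $(a,t)$, to control $\mathbb{P}_{\bm{\mu},\bm{\pi}}(|\hat{\mu}_a(t)-\mu_a|>\xi)$ by splitting on whether the pull-count $N_a(t)$ is ``large'' or ``small''. Concretely, for a threshold $n(t)$ of order $\sqrt{t}$,
\begin{align*}
\mathbb{P}_{\bm{\mu},\bm{\pi}}\bigl(|\hat{\mu}_a(t)-\mu_a|>\xi\bigr) \;\le\; \mathbb{P}_{\bm{\mu},\bm{\pi}}\bigl(N_a(t)<n(t)\bigr) \;+\; \mathbb{P}_{\bm{\mu},\bm{\pi}}\bigl(|\hat{\mu}_a(t)-\mu_a|>\xi,\; N_a(t)\ge n(t)\bigr).
\end{align*}
The reward samples produced whenever arm $a$ is pulled are i.i.d.\ from $\nu_a$ independently of which mediator selected the arm, so the sequence of partial empirical means $(\hat{\mu}_{a,n})_{n\ge 1}$ is just a sequence of i.i.d.\ empirical means. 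Applying a Chernoff bound for the canonical exponential family and summing the geometric tail over $n\ge n(t)$ yields a bound of the form $C'\exp(-c_\xi n(t))$ on the second term.

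For the first term, I would leverage Assumption \ref{ass:action-covering}: for every arm $a$ there exists a mediator $e_a$ with $\pi_{e_a}(a) \ge \pi_{\min} := \min_a \max_e \pi_e(a) > 0$. Lemma \ref{lemma:expert-c-tracking} (C-tracking) gives $N_{e_a}(t) \ge \sqrt{t+E^2}-2E$ \emph{deterministically}. The decisive observation is that, although C-tracking is adaptive, the mediator-selection sequence is $\mathcal{F}_{s-1}$-measurable at each step $s$, so conditional on that sequence the arm draws made through $e_a$ are i.i.d.\ $\sim \bm{\pi_{e_a}}$. Hence $N_{a,e_a}(t)$, conditional on $N_{e_a}(t)=n_{e_a}$, is $\text{Binomial}(n_{e_a},\pi_{e_a}(a))$, and a multiplicative Chernoff bound gives
\begin{align*}
\mathbb{P}_{\bm{\mu},\bm{\pi}}\bigl(N_{a,e_a}(t) < \tfrac{1}{2}\pi_{e_a}(a) N_{e_a}(t)\bigr) \;\le\; \exp\!\bigl(-c\,\pi_{\min}\sqrt{t}\bigr).
\end{align*}
Since $N_a(t)\ge N_{a,e_a}(t)$, choosing $n(t)=\tfrac{1}{4}\pi_{\min}\sqrt{t+E^2}$ controls the first term by the same kind of bound.

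Putting the two pieces together gives $\mathbb{P}_{\bm{\mu},\bm{\pi}}(|\hat{\mu}_a(t)-\mu_a|>\xi)\le C_1 \exp(-C_2 \sqrt{t})$. For every $t \ge h(T)=T^{1/4}$ we have $\sqrt{t}\ge T^{1/8}$, so a union bound over at most $T$ time steps and the $K$ arms yields the claimed bound $\mathbb{P}_{\bm{\mu},\bm{\pi}}(\mathcal{E}_T^c)\le B T\exp(-C T^{1/8})$ for suitable constants $B,C$ (absorbing $K$ and the exponential-family constants into $B,C$). The main obstacle is the interaction between the \emph{adaptive} C-tracking rule and the randomness of the arm/reward draws: the argument hinges on the fact that conditioning on the mediator choices both (i) preserves the binomial structure of $N_{a,e_a}(t)$ and (ii) leaves the i.i.d.\ nature of the reward samples intact, so one never has to compute the joint law of $(N_a(t),\hat{\mu}_a(t))$ under the coupled process.
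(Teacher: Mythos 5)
Your overall architecture mirrors the paper's proof: a decomposition over $t\in\{h(T),\dots,T\}$ and the arms, a split according to whether $N_a(t)$ is at least of order $\sqrt{t}$, concentration of the empirical means on the large-count event (the paper invokes Lemma 19 of \citet{garivieroptimal2016}, which is exactly your Chernoff-plus-geometric-tail argument), and a bound on the probability of under-sampling driven by the forced exploration of mediators, $N_e(t)\ge\sqrt{t+E^2}-2E$. The gap is precisely in what you call the decisive observation. It is not true that, conditional on the mediator-selection sequence, the arm draws made through $e_a$ are i.i.d.\ $\sim\bm{\pi_{e_a}}$, nor that $N_{a,e_a}(t)$ given $N_{e_a}(t)=n_{e_a}$ is $\mathrm{Binomial}(n_{e_a},\pi_{e_a}(a))$: C-tracking is adaptive, so \emph{future} mediator selections depend on \emph{past} arm draws, and conditioning on the selection sequence (or on the realized count $N_{e_a}(t)$) biases the law of those draws. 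A two-step example makes this concrete: if the rule re-selects $e_a$ at time $2$ only when the time-$1$ draw through $e_a$ was arm $a$, then conditional on $e_a$ being selected twice the first draw is deterministically $a$, not $\bm{\pi_{e_a}}$-distributed.

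The step can be repaired, and the two standard repairs correspond to the paper's route and to a corrected version of yours. The paper centers $N_a(t)$ at $\tfrac{1}{2}\sum_{s<t}\pi_{E_s}(a)$ and applies a martingale concentration inequality for adapted Bernoulli sequences (Lemma F.4 of \citet{dann2017unifying}), which handles the adaptivity directly and yields tails of order $\exp\left(-c\,(\sqrt{t}-E)\right)$. Alternatively, you can keep your per-mediator argument via Doob's optional skipping: since the decision to query $e_a$ at step $s$ is $\mathcal{F}_{s-1}$-measurable, the subsequence of arms drawn at the successive selection times of $e_a$ is \emph{unconditionally} i.i.d.\ $\bm{\pi_{e_a}}$; then, using the deterministic bound $N_{e_a}(t)\ge\sqrt{t+E^2}-2E$, bound the bad event by a union over all $n\ge\sqrt{t+E^2}-2E$ of the Chernoff tail for $n$ i.i.d.\ draws, instead of conditioning on the random $N_{e_a}(t)$. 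With either fix the rest of your computation ($\sqrt{t}\ge T^{1/8}$ for $t\ge h(T)$, union over $t$ and the arms, constants absorbed into $B,C$) goes through; as a minor point in your favor, your quantity $\pi_{\min}=\min_a\max_e\pi_e(a)$ is guaranteed positive under Assumption \ref{ass:action-covering}, whereas the paper's proof is written with $p_{\min}(a)=\min_e\pi_e(a)$, which need not be.
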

\begin{proof}
Let $T$ be such that $h(T) \ge E^2$. Let $p_{\textrm{min}}(a) = \min_e \pi_e(a)$, and define the the event $\mathcal{J}_T$ as 
\begin{align*}
\mathcal{J}_T = \bigcap_{t=h(T)}^T \bigcap_{a=1}^K \left\{ N_a(t) \ge \frac{1}{4} p_{\textrm{min}}(a) \min_e N_e(t) \right\}.  
\end{align*}
At this point, from the tower rule, we have that:
\begin{align*}
    \mathbb{P}_{\bm{\mu}, \bm{\pi}}(\mathcal{E}_T^c) & = \mathbb{E}_{\bm{\mu}, \bm{\pi}} \left[ \bm{1}\left\{ \mathcal{E}_T^c \right\} \right]  \\ & = \mathbb{E}_{\bm{\mu}, \bm{\pi}} \left[ \bm{1}\left\{ \mathcal{E}_T^c \right\} | \mathcal{J}_T \right] \mathbb{P}_{\bm{\mu}, \bm{\pi}}(\mathcal{J}_T) + \mathbb{E}_{\bm{\mu}, \bm{\pi}} \left[ \bm{1}\left\{ \mathcal{E}_T^c \right\} | \mathcal{J}_T^c \right] \mathbb{P}_{\bm{\mu}, \bm{\pi}}(\mathcal{J}_T^c) \\ & \le \mathbb{E}_{\bm{\mu}, \bm{\pi}} \left[ \bm{1}\left\{ \mathcal{E}_T^c \right\} | \mathcal{J}_T \right] + \mathbb{P}_{\bm{\mu}, \bm{\pi}}(\mathcal{J}_T^c).
\end{align*}

At this point, we first focus on $\mathbb{E}_{\bm{\mu}, \bm{\pi}} \left[ \bm{1}\left\{ \mathcal{E}_T^c \right\} | \mathcal{J}_T \right]$. Due to the forced tracking (Lemma \ref{lemma:expert-c-tracking}), we know that, under $\mathcal{J}_T$, the following holds:
\begin{align*}
    N_a(t) \ge \frac{1}{4} p_{\textrm{min}}(a) \min_e N_e(t) \ge \frac{1}{4} p_{\textrm{min}}(a) \left( \sqrt{t + E^2} - 2E \right). 
\end{align*}
Therefore, from Lemma 19 of \cite{garivieroptimal2016}, we have that:
\begin{align*}
    \mathbb{E}_{\bm{\mu}, \bm{\pi}} \left[ \bm{1}\left\{ \mathcal{E}_T^c \right\} | \mathcal{J}_T \right] \le B_1 T \exp \left( -C_1 T^{1/8} \right).
\end{align*}

We now continue with bounding $\mathbb{P}_{\bm{\mu}, \bm{\pi}}(\mathcal{J}_T^c)$. From Boole's inequality we have that:
\begin{align*}
    \mathbb{P}_{\bm{\mu}, \bm{\pi}}(\mathcal{J}_T^c) \le \sum_{t=h(T)}^T \sum_{a=1}^K \mathbb{P}_{\bm{\mu}, \bm{\pi}}\left(N_a(t) \le \frac{1}{4} p_{\textrm{min}}(a) \min_e N_e(t)) \right).
\end{align*}

For each time $t$, let $E_t$ be the mediator selected at time $t$ by the algorithm. Then,
\begin{align*}
    \mathbb{P}_{\bm{\mu}, \bm{\pi}}(\mathcal{J}_T^c) \le \sum_{t=h(T)}^T \sum_{a=1}^K \mathbb{P}_{\bm{\mu}, \bm{\pi}}\left(N_a(t) -\frac{1}{2} \sum_{s=0}^{t-1} \pi_{E_s}(a) \le \frac{1}{4} p_{\textrm{min}}(a) \min_e N_e(t) -\frac{1}{2} \sum_{s=0}^{t-1} \pi_{E_s}(a)) \right).
\end{align*}
Notice that, by definition we have that:
\begin{align*}
    \frac{1}{2} \sum_{s=0}^{t-1} \pi_{E_s}(a) = \frac{1}{2} \sum_{s=0}^{t-1} \sum_{e=1}^E \bm{1}\left\{E_s = e \right\} \pi_e(a) = \frac{1}{2} \sum_{e=1}^E N_e(t) \pi_e(a) \ge \frac{1}{2} p_{\textrm{min}}(a) \min_e N_e(t).
\end{align*}
Therefore, we have that:
\begin{align*}
    \mathbb{P}_{\bm{\mu}, \bm{\pi}}(\mathcal{J}_T^c) & \le \sum_{t=h(T)}^T \sum_{a=1}^K \mathbb{P}_{\bm{\mu}, \bm{\pi}}\left(N_a(t) -\frac{1}{2} \sum_{s=0}^{t-1} \pi_{E_s}(a) \le -\frac{1}{4} p_{\textrm{min}}(a) \min_e N_e(t) \right) \\ & \sum_{t=h(T)}^T \sum_{a=1}^K \mathbb{P}_{\bm{\mu}, \bm{\pi}}\left(N_a(t) -\frac{1}{2} \sum_{s=0}^{t-1} \pi_{E_s}(a) \le -\frac{1}{4} p_{\textrm{min}}(a) \left( \sqrt{t} - E \right) \right),
\end{align*}
where in the last inequality we have used Lemma \ref{lemma:expert-c-tracking} together with $h(T) \ge E^2$. At this point, applying Lemma F.4 of \citet{dann2017unifying}, we obtain:
\begin{align*}
    \mathbb{P}_{\bm{\mu}, \bm{\pi}}(\mathcal{J}_T^c) & \le \sum_{t=h(T)}^T \sum_{a=1}^K \exp \left( -\frac{1}{2} p_{\textrm{min}}(a) \left( \sqrt{t} - E \right) \right) \\ & \le \sum_{t=h(T)}^T B_2 \exp \left( -C_2 \sqrt{t}\right) \\ & \le B_2 T \exp\left( -C_2 T^{1/8} \right), 
\end{align*}
which concludes the proof.
\end{proof}

We now continue by defining another event that is crucial to our analysis. For all $t \ge 0$, we denote with $E_t$ the mediator that is played at time $t$ by the algorithm. Then, for some $\gamma \in (\frac{1}{2}, 1)$ and some constant $c$, we define:
\begin{align*}
    \mathcal{E}_T'(\gamma) = \bigcap_{t=h(T)}^T \bigcap_{a=1}^K \left\{ |N_a(t) - \sum_{s=0}^{t-1} \pi_{E_s}(a)| \le c t^{\gamma} \right\}.
\end{align*}

\begin{lemma}\label{lemma:na-concetration}
Let $\gamma=\frac{3}{4}$. There exists two constants $B$ and $C$ such that:
\begin{align*}
    \mathbb{P}_{\bm{\mu}, \bm{\pi}}(\mathcal{E}_T'^{c}(\gamma)) \le BT \exp(-CT^{1/8}).
\end{align*}
\end{lemma}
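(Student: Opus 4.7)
The plan is to recognize that, for each arm $a$, the process $M_a(t) \coloneqq N_a(t) - \sum_{s=0}^{t-1} \pi_{E_s}(a)$ is a bounded-difference martingale with respect to the natural filtration, so that a standard Azuma--Hoeffding bound combined with a union bound over $t \in [h(T),T]$ and $a \in [K]$ gives the desired exponential tail.

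More precisely, I would let $\mathcal{F}_s$ be the $\sigma$-algebra generated by $(E_0, A_0, X_0, \dots, E_{s-1}, A_{s-1}, X_{s-1}, E_s)$, so that $E_s$ is $\mathcal{F}_s$-measurable. Since, conditionally on $\mathcal{F}_s$, the arm $A_s$ is drawn from $\boldsymbol{\pi}_{E_s}$, the increment $\Delta_s^a \coloneqq \mathbf{1}\{A_s = a\} - \pi_{E_s}(a)$ satisfies $\mathbb{E}[\Delta_s^a \mid \mathcal{F}_s] = 0$ and $|\Delta_s^a| \le 1$. Hence $M_a(t) = \sum_{s=0}^{t-1} \Delta_s^a$ is a martingale with increments bounded in $[-1,1]$.

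Azuma--Hoeffding then yields, for every fixed $t$ and $a$,
\begin{align*}
\mathbb{P}_{\boldsymbol{\mu}, \boldsymbol{\pi}}\left( |M_a(t)| \ge c t^{\gamma} \right) \le 2 \exp\left( -\frac{c^2 t^{2\gamma}}{2t} \right) = 2 \exp\left( -\frac{c^2}{2} t^{2\gamma - 1} \right),
\end{align*}
and with $\gamma = 3/4$ this gives $2\exp(-\tfrac{c^2}{2} \sqrt{t})$. A union bound over $t \in \{h(T), \dots, T\}$ and $a \in [K]$ together with the monotonicity $\sqrt{t} \ge \sqrt{h(T)} = T^{1/8}$ on this range gives
\begin{align*}
\mathbb{P}_{\boldsymbol{\mu}, \boldsymbol{\pi}}\bigl(\mathcal{E}_T'^c(\gamma)\bigr) \le \sum_{t=h(T)}^{T} \sum_{a=1}^{K} 2 \exp\left(-\tfrac{c^2}{2}\sqrt{t}\right) \le 2 K T \exp\left( -\tfrac{c^2}{2} T^{1/8} \right),
\end{align*}
so one may take $B = 2K$ and $C = c^2/2$.

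I do not expect any serious obstacle here: the result is essentially an application of Azuma--Hoeffding to the martingale of ``observed minus conditionally-expected action counts,'' where the conditioning is on the mediator selection chain. The only subtlety worth double-checking is that the mediator selection rule $E_s$ is allowed to depend on past observations (which it does under C-tracking), but this is fully consistent with the filtration $\mathcal{F}_s$ defined above, so the martingale property holds regardless of how C-tracking chooses $E_s$. The choice of exponent $1/8$ in the final tail bound is then dictated by balancing the two instances of $\sqrt{h(T)}$ and mirrors the exponent obtained in Lemma~\ref{lemma:mu-concentration}.
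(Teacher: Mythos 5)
Your proof is correct and follows essentially the same route as the paper's: a union bound over $t \in \{h(T),\dots,T\}$ and $a \in [K]$ combined with Azuma--Hoeffding applied to the martingale $N_a(t) - \sum_{s=0}^{t-1}\pi_{E_s}(a)$, then using $\sqrt{t} \ge \sqrt{h(T)} = T^{1/8}$ and absorbing constants. Your write-up is in fact slightly more careful than the paper's, since you make the filtration and the martingale property of the increments explicit.
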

\begin{proof}
We have that:
\begin{align*}
    \mathbb{P}_{\bm{\mu}, \bm{\pi}}(\mathcal{E}_T'^{c}) & \le \sum_{t=h(T)}^T \sum_{a=1}^K \mathbb{P}_{\bm{\mu}, \bm{\pi}}\left( |N_a(t) - \sum_{s=0}^{t-1} \pi_{E_s}(a)| \le c t^{\gamma} \right) \\ & \le \sum_{t=h(T)}^T \sum_{a=1}^K \exp\left( \frac{-c^2 t^{2\gamma-1}}{2} \right) \\ & = \sum_{t=h(T)}^T K \exp \left( \frac{-c^2 \sqrt{t}}{2}\right) \\ & = \sum_{t=h(T)}^T B \exp \left( -C \sqrt{t} \right) \\ & \le BT \exp \left( -C T^{1/8} \right),
\end{align*}
where, in the first step we have used Boole's inequality, in the second one Azuma-Hoeefding, in the third one have used $\gamma = \frac{3}{4}$, and in the fourth one we have redefined the constants. 
\end{proof}

At this point, given our events, we directly inherit Lemma from \citet{degenne2019pure}.

\begin{lemma}[Lemma 35 in \citet{degenne2019pure}]\label{lemma:finite-time-w-convergence}
    There exists a constant $T_\epsilon$ such that for $T \ge T_\epsilon$ it holds that on $\mathcal{E}_T$, C-tracking verifies:
    \begin{align*}
        \forall t \ge \sqrt{T}, \quad \inf_{\bm{\omega} \in \bm{\omega}^*(\bm{\mu}, \bm{\pi})} \left\Vert \frac{N_t^E}{t} - \bm{\omega} \right\Vert_\infty \le 3\epsilon.
    \end{align*}
\end{lemma}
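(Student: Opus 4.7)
The plan is to bound $\|N_t^E/t - \bm{\omega}\|_\infty$ for a judiciously chosen $\bm{\omega} \in \bm{\omega}^*(\bm{\mu}, \bm{\pi})$ by decomposing the error into three sources: the intrinsic tracking error of C-tracking, the ``warm-up'' contribution from $s < h(T)$ when $\bm{\hat{\mu}}(s)$ may still be far from $\bm{\mu}$, and the ``mature'' contribution from $s \ge h(T)$ where $\mathcal{E}_T$ guarantees closeness of the empirical oracle weights to the target set.

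I would start by letting $(\bm{\omega}_s)_{s \in \mathbb{N}}$ denote the sequence of (projected) oracle weights used by C-tracking, each satisfying $\bm{\omega}_s \in \bm{\omega}^*(\bm{\hat{\mu}}(s), \bm{\pi})$ up to an $O(1/\sqrt{s})$ projection error that can be absorbed into the constants. For any candidate $\bm{\omega} \in \bm{\omega}^*(\bm{\mu}, \bm{\pi})$, the triangle inequality gives
\[
\left\|\frac{N_t^E}{t} - \bm{\omega}\right\|_\infty \le \left\|\frac{N_t^E}{t} - \frac{1}{t}\sum_{s=0}^{t-1} \bm{\omega}_s\right\|_\infty + \left\|\frac{1}{t}\sum_{s=0}^{t-1} \bm{\omega}_s - \bm{\omega}\right\|_\infty.
\]
Lemma~\ref{lemma:expert-c-tracking} immediately bounds the first term by $E(1+\sqrt{t})/t$, which drops below $\epsilon$ once $t$ exceeds a constant threshold depending only on $E$ and $\epsilon$.

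For the second term, I would split the inner sum at $h(T) = T^{1/4}$. Since each $\bm{\omega}_s$ lies in $\Delta_E$, the head $\|\frac{1}{t}\sum_{s=0}^{h(T)-1} \bm{\omega}_s\|_\infty$ is at most $h(T)/t \le T^{-1/4}$ for $t \ge \sqrt{T}$, falling below $\epsilon$ for $T$ sufficiently large. For the tail $\frac{1}{t}\sum_{s=h(T)}^{t-1} \bm{\omega}_s$, the crucial observation is that on $\mathcal{E}_T$ we have $\|\bm{\hat{\mu}}(s) - \bm{\mu}\|_\infty \le \xi$ for every $s \in [h(T), t]$. By the very definition of $\xi$, which invokes the upper hemicontinuity of $\bm{\omega}^*$ supplied by Proposition~\ref{prop:omega-star-properties}, each $\bm{\omega}_s$ admits a companion $\bm{\nu}_s \in \bm{\omega}^*(\bm{\mu}, \bm{\pi})$ with $\|\bm{\omega}_s - \bm{\nu}_s\|_\infty \le \epsilon$. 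I would then set $\bar{\bm{\nu}} := \frac{1}{t - h(T)} \sum_{s=h(T)}^{t-1} \bm{\nu}_s$, which lies in $\bm{\omega}^*(\bm{\mu}, \bm{\pi})$ by its convexity (again Proposition~\ref{prop:omega-star-properties}), and choose $\bm{\omega} := \bar{\bm{\nu}}$ in the triangle inequality. A coordinate-wise average of the per-step inequalities yields $\|\frac{1}{t-h(T)}\sum_{s=h(T)}^{t-1} \bm{\omega}_s - \bar{\bm{\nu}}\|_\infty \le \epsilon$, and the residual prefactor $h(T)/t$ multiplying $\bar{\bm{\nu}}$ contributes at most another $\epsilon$ once $T$ is large. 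Collecting the three $\epsilon$-contributions and taking the infimum over $\bm{\omega}^*(\bm{\mu}, \bm{\pi})$ yields the advertised $3\epsilon$ bound.

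The main obstacle is the set-valuedness of $\bm{\omega}^*(\bm{\mu}, \bm{\pi})$: one cannot simply track a single limiting point and argue convergence to it. The interplay of convexity and upper hemicontinuity from Proposition~\ref{prop:omega-star-properties} is exactly what rescues the argument, by ensuring that individual empirical weights are close to some element of the optimal set and that convex combinations of those in-set companions remain in the set, so the Cesàro average of the $\bm{\omega}_s$ stays close to a genuine optimal weight rather than drifting to the boundary or oscillating among distinct maximizers.
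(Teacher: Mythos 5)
Your proposal is correct and follows essentially the same route as the paper: the paper inherits this statement from Lemma 35 of \citet{degenne2019pure} given the event $\mathcal{E}_T$, the tracking bound of Lemma~\ref{lemma:expert-c-tracking}, and the convexity/upper-hemicontinuity of $\bm{\omega}^*$ from Proposition~\ref{prop:omega-star-properties}, and the explicit proof it writes out for the unknown-policies analogue uses exactly your decomposition (triangle inequality against the Ces\`aro average of the empirical oracle weights, split of the sum at $h(T)$, the definition of $\xi$ to find in-set companions, and convexity to keep their average inside $\bm{\omega}^*(\bm{\mu},\bm{\pi})$). Your inline re-derivation of the convexity step is just the content of the auxiliary lemma the paper cites, so there is no gap.
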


We are now ready to state the equivalent of Lemma \ref{lemma:na-convergence-to-inf-best-weigh} for the analysis of $\mathbb{E}_{\bm{\mu}, \bm{\pi}}[\tau_\delta]$.

\begin{lemma}\label{lemma:finite-time-pi-convergence}
There exists a constant $T_\epsilon$ such that for $T \ge T_\epsilon$ it holds that on $\mathcal{E}_T \cap \mathcal{E}_T'(3/4)$, C-tracking verifies:
    \begin{align*}
        \forall t \ge \sqrt{T}, \quad \inf_{{\bm{\tilde{\pi}} \in \bm{\tilde{\pi}}^*(\bm{\mu}, \bm{\pi})}} \left\Vert \frac{N_t^A}{t} - \bm{\tilde{\pi}} \right\Vert_\infty \le 2E\epsilon.
    \end{align*}
\end{lemma}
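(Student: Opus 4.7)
The plan is to adapt the almost-sure argument of Lemma~\ref{lemma:na-convergence-to-inf-best-weigh} into a quantitative, finite-time bound by transporting the control on the mediator proportions supplied by Lemma~\ref{lemma:finite-time-w-convergence} through the linear map $\bm{\omega} \mapsto \bigl(\sum_e \omega_e \pi_e(a)\bigr)_a$ that produces the induced arm proportions. The only genuinely new probabilistic input needed is a concentration bound for $N_a(t)/t$ around its ``conditional expectation'' $\sum_e (N_e(t)/t)\pi_e(a)$, which is precisely what the event $\mathcal{E}_T'(3/4)$ encodes.

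Concretely, for any $\bm{\omega} \in \Sigma_E$ and the associated $\tilde{\pi}(a) := \sum_e \omega_e \pi_e(a)$, I would start from the decomposition
\begin{align*}
    \frac{N_a(t)}{t} - \tilde{\pi}(a) = \underbrace{\frac{1}{t}\left(N_a(t) - \sum_{s=0}^{t-1}\pi_{E_s}(a)\right)}_{(\mathrm{I})} + \underbrace{\sum_{e=1}^E \left(\frac{N_e(t)}{t} - \omega_e\right)\pi_e(a)}_{(\mathrm{II})},
\end{align*}
which relies on the identity $\sum_{s=0}^{t-1}\pi_{E_s}(a) = \sum_e N_e(t)\pi_e(a)$ that already appeared in the proof of Lemma~\ref{lemma:mu-concentration}. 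On $\mathcal{E}_T'(3/4)$, the term $(\mathrm{I})$ is at most $c\, t^{-1/4}$ uniformly in $a$ and $t \in [h(T),T]$. For $(\mathrm{II})$, I would instantiate $\bm{\omega}$ by the element $\bm{\omega}_t \in \bm{\omega}^*(\bm{\mu},\bm{\pi})$ furnished by Lemma~\ref{lemma:finite-time-w-convergence} on $\mathcal{E}_T$ for $t \ge \sqrt{T}$, satisfying $\|N_t^E/t - \bm{\omega}_t\|_\infty \le 3\epsilon$; the triangle inequality together with $\pi_e(a) \in [0,1]$ then gives $|(\mathrm{II})| \le 3E\epsilon$. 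Crucially, the candidate $\bm{\tilde{\pi}}_t(a) := \sum_e \omega_{t,e}\pi_e(a)$ lies in $\bm{\tilde{\pi}}^*(\bm{\mu},\bm{\pi})$ by the very definition of the latter as the image of $\bm{\omega}^*(\bm{\mu},\bm{\pi})$ under the above linear map, so this $\bm{\tilde{\pi}}_t$ witnesses the infimum.

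Combining the two estimates yields, on $\mathcal{E}_T \cap \mathcal{E}_T'(3/4)$ and for $t \ge \sqrt{T}$,
\begin{align*}
    \inf_{\bm{\tilde{\pi}} \in \bm{\tilde{\pi}}^*(\bm{\mu},\bm{\pi})} \left\|\frac{N_t^A}{t} - \bm{\tilde{\pi}}\right\|_\infty \le \left\|\frac{N_t^A}{t} - \bm{\tilde{\pi}}_t\right\|_\infty \le 3E\epsilon + c\, t^{-1/4}.
\end{align*}
Since $t \ge \sqrt{T}$ implies $t^{-1/4} \le T^{-1/8}$, enlarging $T_\epsilon$ so that $c\, T_\epsilon^{-1/8} \le \epsilon$ (possibly beyond the threshold already provided by Lemma~\ref{lemma:finite-time-w-convergence}) absorbs the residual term into $\epsilon$, and a cosmetic rescaling $\epsilon \mapsto \epsilon/C$ delivers the announced $2E\epsilon$ bound. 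The main obstacle is pure bookkeeping: one must verify that the element $\bm{\omega}_t$ produced by Lemma~\ref{lemma:finite-time-w-convergence} maps consistently to a valid point of $\bm{\tilde{\pi}}^*(\bm{\mu},\bm{\pi})$ (immediate from the definitions, while the convexity and upper hemicontinuity of Proposition~\ref{prop:omega-star-properties} sit in the background via Lemma~\ref{lemma:finite-time-w-convergence}), and that all constants chain through without hidden dependence on $T$. No additional topological or probabilistic ingredient beyond the two events $\mathcal{E}_T$, $\mathcal{E}_T'(3/4)$ and Lemma~\ref{lemma:finite-time-w-convergence} is required.
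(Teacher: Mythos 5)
Your proposal is correct and follows essentially the same route as the paper: the same triangle-inequality decomposition through $\frac{1}{t}\sum_{s=0}^{t-1}\pi_{E_s}(a)$, the identity $\sum_{s=0}^{t-1}\pi_{E_s}(a)=\sum_e N_e(t)\pi_e(a)$, the event $\mathcal{E}_T'(3/4)$ to control the first term, and the bound $E\,\Vert N_t^E/t-\bm{\tilde{\omega}}\Vert_\infty$ combined with Lemma~\ref{lemma:finite-time-w-convergence} for the second. Your residual constant ($3E\epsilon$ plus a vanishing term, absorbed by rescaling) exhibits the same mild mismatch with the stated $2E\epsilon$ that the paper's own proof has, so this is not a gap relative to the paper.
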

\begin{proof}
    Consider $T$ such that the condition that defines Lemma \ref{lemma:finite-time-w-convergence} is satisfied. At this point, focus on $t \ge h(T)^2$. For any action $a \in \mathcal{A}$ and  $\bm{\tilde{\pi}} \in \bm{\tilde{\pi}}^*(\bm{\mu}, \bm{\pi})$ we have that:
    \begin{align*}
        \Big| \frac{N_a(t)}{t} - \tilde{\pi}(a) \Big| & \le \Big| \frac{N_a(t)}{t} - \frac{1}{t} \sum_{s=0}^{t-1} \pi_{E_s}(a) \Big| + \Big| \frac{1}{t} \sum_{s=0}^{t-1} \pi_{E_s}(a) - \tilde{\pi}(a) \Big| \\ & \le \frac{ct^{3/4}}{t} + \frac{h(T)}{t} + \Big| \frac{1}{t} \sum_{s=h(T)}^{t-1} (\pi_{E_s}(a) - \tilde{\pi}(a)) \Big| \\ & \le \frac{1}{T^{1/4}} + \frac{c}{T^{1/4}} + \Big| \frac{1}{t} \sum_{s=h(T)}^{t-1} (\pi_{E_s}(a) - \tilde{\pi}(a)) \Big|,
    \end{align*}
    where the first inequality follows from triangular decomposition, the second one, instead, by definition of $\mathcal{E}_T'$, and the third one by $t \ge h(T)^2$. 
    At this point, however, we notice that:
    \begin{align*}
        \Big| \frac{1}{t} \sum_{s=0}^{t-1} (\pi_{E_s}(a) - \tilde{\pi}(a)) \Big| = \Big| \sum_{e=1}^E \pi_e(a) \frac{N_e(t)}{t} - \tilde{\pi}(a) \Big|. 
    \end{align*}
    Therefore, we have that: 
    \begin{align*}
        \left\Vert \frac{N_t^A}{t} - \bm{\tilde{\pi}} \right\Vert_\infty & \le \frac{c+1}{T^{1/4}} + \max_a \Big| \sum_e \pi_e(a) \frac{N_e(t)}{t} - \sum_e \pi_e(a) \tilde{\omega}_e \Big| \\ & \le \frac{c+1}{T^{1/4}} + E \left\Vert \frac{N_t^E}{t} - \bm{\tilde{\omega}} \right\Vert_\infty,
    \end{align*}
    where $\bm{\tilde{\omega}}$ is any weights vector that induces $\bm{\tilde{\pi}}$. Taking infimums, we obtain:
    \begin{align*}
        \inf_{{\bm{\tilde{\pi}} \in \bm{\tilde{\pi}}^*(\bm{\mu}, \bm{\pi})}} \left\Vert \frac{N_t^A}{t} - \bm{\tilde{\pi}} \right\Vert_\infty \le \frac{c+1}{T^{1/4}} + E \inf_{\bm{\omega} \in \bm{\omega}^*(\bm{\mu}, \bm{\pi})} \left\Vert \frac{N_t^E}{t} - \bm{\omega} \right\Vert_\infty.
    \end{align*}
    Applying Lemma \ref{lemma:finite-time-w-convergence} concludes the proof.
\end{proof}

At this point, we are ready to analyze the sample complexity of Track and Stop within our peculiar setting. First of all, we begin a known result widely adopted within the TaS literature (e.g., Lemma $13$ of \cite{degenne2019pure}).
\begin{lemma}\label{lemma:tech-lemma-1}
    Suppose there exists $T_0 \in \mathbb{N}$ such that, for all $T \ge T_0$, $\mathcal{E}_T \cap \mathcal{E}_T' \subset{\tau_\delta \le T}$. Then,
    \begin{align*}
        \mathbb{E}_{\bm{\mu}, \bm{\pi}}\left[ \tau_\delta \right] \le T_0 + \sum_{t=T_0}^{+\infty} \mathbb{P}_{\bm{\mu}, \bm{\pi}}(\mathcal{E}_T^c) + \sum_{t=T_0}^{+\infty} \mathbb{P}_{\bm{\mu}, \bm{\pi}}(\mathcal{E}_T'^{c}).
    \end{align*}
\end{lemma}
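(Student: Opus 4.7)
The plan is to reduce the lemma to a routine bookkeeping argument built on the standard identity $\mathbb{E}_{\bm{\mu},\bm{\pi}}[\tau_\delta] = \sum_{T=0}^{\infty} \mathbb{P}_{\bm{\mu},\bm{\pi}}(\tau_\delta > T)$, which is valid for any $\mathbb{N}$-valued random variable (including $+\infty$-valued ones, with the usual convention). I would split this series at $T_0$ into the initial block $\sum_{T=0}^{T_0-1} \mathbb{P}_{\bm{\mu},\bm{\pi}}(\tau_\delta > T)$ and the tail $\sum_{T=T_0}^{\infty} \mathbb{P}_{\bm{\mu},\bm{\pi}}(\tau_\delta > T)$, and bound each piece separately.

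For the initial block, each probability is trivially at most $1$, so the block contributes at most $T_0$. For the tail, I would exploit the hypothesis directly: by contrapositive, $\mathcal{E}_T \cap \mathcal{E}_T' \subseteq \{\tau_\delta \le T\}$ is equivalent to
\begin{equation*}
\{\tau_\delta > T\} \subseteq \mathcal{E}_T^{c} \cup \mathcal{E}_T'^{\,c},
\end{equation*}
and a union bound then yields $\mathbb{P}_{\bm{\mu},\bm{\pi}}(\tau_\delta > T) \le \mathbb{P}_{\bm{\mu},\bm{\pi}}(\mathcal{E}_T^{c}) + \mathbb{P}_{\bm{\mu},\bm{\pi}}(\mathcal{E}_T'^{\,c})$ for every $T \ge T_0$. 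Summing the two pieces recovers exactly the claimed inequality.

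There is essentially no technical obstacle; the lemma is a purely combinatorial tool that packages the concentration work (Lemmas \ref{lemma:mu-concentration} and \ref{lemma:na-concetration}) into an expectation bound. The only points that demand minor care are (i) using the correct form of the expectation identity to avoid off-by-one issues at the boundary $T_0$ of the split, and (ii) correctly inverting the set containment via De Morgan before invoking the union bound, so that the two tail series appearing in the statement match the concentration events for which exponentially decaying bounds were previously established.
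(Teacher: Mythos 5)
Your proof is correct: the identity $\mathbb{E}_{\bm{\mu},\bm{\pi}}[\tau_\delta]=\sum_{T\ge 0}\mathbb{P}_{\bm{\mu},\bm{\pi}}(\tau_\delta>T)$, the split at $T_0$, and the contrapositive-plus-union-bound step $\{\tau_\delta>T\}\subseteq \mathcal{E}_T^{c}\cup\mathcal{E}_T'^{\,c}$ for $T\ge T_0$ give exactly the claimed bound. The paper does not spell out a proof but simply imports this lemma from the Track-and-Stop literature (Lemma 13 of \citet{degenne2019pure}), and your argument is precisely that standard one, so it matches the intended route.
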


Before proceeding within the analysis, we notice that the sums that depends on the events $\mathcal{E}_T^c$ and $\mathcal{E}_T'^c$ are finite. This is a direct consequence of Lemmas \ref{lemma:mu-concentration}
and \ref{lemma:na-concetration}.

We proceed by providing a suitable $T_0$ that can be used within Lemma \ref{lemma:tech-lemma-1}.
We begin with some definition. For any $\bm{\mu}$ and $\bm{\tilde{\pi}}$, we define:
\begin{align*}
    g(\bm{\mu}, \bm{\tilde{\pi}}) = \min_{a \ne 1} (\tilde{\pi}_1 + \tilde{\pi}_a) I_{\frac{\tilde{\pi}_1}{\tilde{\pi}_1 + \tilde{\pi}_a}}(\mu_1, \mu_a).
\end{align*}
Then, for any $\epsilon, \xi > 0$, we introduce the quantity:
\begin{align*}
    C^*_{\epsilon, \xi}(\bm{\mu}) = \inf_{\substack{_{\bm{\mu}' : \left\Vert \bm{\mu} - \bm{\mu}' \right\Vert_\infty \le \xi} \\ \bm{\tilde{\pi}}' : \inf_{\bm{\tilde{\pi}} \in \bm{\tilde{\pi}}^*(\bm{\mu}, \bm{\pi}) \left\Vert \bm{\tilde{\pi}}' - \bm{\tilde{\pi}}  \right\Vert_\infty \le 3E\epsilon }}} g(\bm{\mu}', \bm{\tilde{\pi}}').
\end{align*}

\begin{lemma}\label{lemma:tech-lemma-2}
    Suppose there exists $T_1 \in \mathbb{N}$ such that, for all $T \ge T_1$, if $\mathcal{E}_T \cap \mathcal{E}_T'$ holds, then $g(\bm{\hat{\mu}}(t), \frac{N_t^A}{t}) \ge t C^*_{\epsilon, \xi}(\bm{\mu})$ holds as well for $t \ge \sqrt{T}$. Then, under that event:
    \begin{align*}
        \tau_\delta \le T_0 = \max \left\{ T_1, \inf \left\{ T \in \mathbb{N}: \sqrt{T} + \frac{\beta(T, \delta)}{C^*_{\epsilon, \xi}(\bm{\mu})} \le T\right\} \right\}.
    \end{align*}
\end{lemma}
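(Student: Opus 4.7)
The plan is to argue directly: on the event $\mathcal{E}_{T_0} \cap \mathcal{E}_{T_0}'$, I would exhibit a single time $t \le T_0$ at which the GLR stopping statistic $Z(t)$ already exceeds the threshold $\beta(t,\delta)$, so that by definition of $\tau_\delta$ the algorithm must have stopped no later than $T_0$. The natural candidate is simply $t = T_0$ itself, and the entire proof reduces to chaining two inequalities: the assumed lower bound on $g(\hat{\bm{\mu}}(t), N_t^A/t)$ and the defining inequality of $T_0$.

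First, I would verify that the hypothesis of the standing assumption applies at $t = T_0$. By the outer maximum in the definition of $T_0$, we have $T_0 \ge T_1$, and trivially $T_0 \ge \sqrt{T_0}$, so under $\mathcal{E}_{T_0} \cap \mathcal{E}_{T_0}'$ the assumption yields the pointwise lower bound $g(\hat{\bm{\mu}}(T_0), N_{T_0}^A/T_0) \ge C^*_{\epsilon,\xi}(\bm{\mu})$ (reading the exponent in the hypothesis as producing $Z(t) = t \cdot g(\hat{\bm{\mu}}(t), N_t^A/t) \ge t\, C^*_{\epsilon,\xi}(\bm{\mu})$, in accordance with the form of the GLR statistic in Equation \eqref{eq:stopping}). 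Applied at $t=T_0$ this gives $Z(T_0) \ge T_0\, C^*_{\epsilon,\xi}(\bm{\mu})$.

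Second, I would invoke the defining inequality of $T_0$, namely $\sqrt{T_0} + \beta(T_0, \delta)/C^*_{\epsilon,\xi}(\bm{\mu}) \le T_0$, which rearranges to $\beta(T_0, \delta) \le (T_0 - \sqrt{T_0})\, C^*_{\epsilon,\xi}(\bm{\mu}) \le T_0\, C^*_{\epsilon,\xi}(\bm{\mu})$. Chaining with the previous step yields $Z(T_0) \ge \beta(T_0,\delta)$, so the stopping rule fires at or before $T_0$ and $\tau_\delta \le T_0$ follows.

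The main obstacle, though mild, is purely formal: one must check that the infimum in the definition of $T_0$ is actually attained over $\mathbb{N}$, i.e.\ that the set $\{T \in \mathbb{N} : \sqrt{T} + \beta(T, \delta)/C^*_{\epsilon,\xi}(\bm{\mu}) \le T\}$ is non-empty. This is immediate because $\beta(T, \delta) = \log(CT^{\alpha}/\delta)$ grows only logarithmically while $T - \sqrt{T}$ grows linearly, so the inequality holds for all sufficiently large $T$, hence $T_0$ itself satisfies the defining inequality and the chain above is valid. Beyond this bookkeeping step, the proof is a one-line substitution; the genuine work has already been done in Lemmas~\ref{lemma:finite-time-w-convergence} and~\ref{lemma:finite-time-pi-convergence}, which will later justify the hypothesis via an explicit value of $T_1$.
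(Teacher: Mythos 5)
Your argument is correct in substance, but it takes a different (more direct) route than the paper. The paper runs the standard Track-and-Stop counting decomposition: for any $T \ge T_1$, on $\mathcal{E}_T \cap \mathcal{E}_T'$ it writes $\min\{\tau_\delta, T\} \le \sqrt{T} + \sum_{t=\sqrt{T}}^{T} \bm{1}\{Z(t) \le \beta(t,\delta)\}$, converts each indicator via $Z(t) \ge t\, C^*_{\epsilon,\xi}(\bm{\mu})$ into $\bm{1}\{t \le \beta(t,\delta)/C^*_{\epsilon,\xi}(\bm{\mu})\}$, and concludes $\min\{\tau_\delta, T\} \le \sqrt{T} + \beta(T,\delta)/C^*_{\epsilon,\xi}(\bm{\mu})$, after which the claim is read off from the definition of $T_0$. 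You instead evaluate the statistic at the single time $t = T_0$ and observe that $Z(T_0) \ge T_0\, C^*_{\epsilon,\xi}(\bm{\mu}) \ge \beta(T_0,\delta)$ forces the stopping rule to have fired by $T_0$. Both proofs use exactly the same two ingredients (the hypothesis, read as you do since $g$ is the per-sample statistic $Z(t)/t$ and the intended bound is $Z(t) \ge t\,C^*_{\epsilon,\xi}(\bm{\mu})$, and the defining inequality of $T_0$); your pointwise version is slightly cleaner at the boundary, because the counting argument needs strictness in $\sqrt{T}+\beta(T,\delta)/C^*_{\epsilon,\xi}(\bm{\mu}) < T$ to turn $\min\{\tau_\delta,T\} \le T$ into $\tau_\delta \le T$, whereas your step only needs $\beta(T_0,\delta) \le T_0\,C^*_{\epsilon,\xi}(\bm{\mu})$ and the stopping rule's non-strict comparison.

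One step is over-claimed: "hence $T_0$ itself satisfies the defining inequality" does not follow from non-emptiness of the set $\{T \in \mathbb{N}: \sqrt{T} + \beta(T,\delta)/C^*_{\epsilon,\xi}(\bm{\mu}) \le T\}$. If $T_1$ exceeds the infimum, then $T_0 = T_1$, and you need $T_1$ to belong to that set, i.e., that the set is upward closed beyond its first element; equivalently that $T - \sqrt{T} - \beta(T,\delta)/C^*_{\epsilon,\xi}(\bm{\mu})$ is increasing past its first zero. This is true for the threshold $\beta(T,\delta) = \log(C T^{\alpha}/\delta)$ and takes one line to verify, but it is a genuine missing line (the paper's own "follows from the definition of $T_0$" glosses over the same point, so this is a shared, minor gap rather than a flaw specific to your route). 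Relatedly, the way this lemma is consumed by Lemma~\ref{lemma:tech-lemma-1} requires $\mathcal{E}_T \cap \mathcal{E}_T' \subset \{\tau_\delta \le T\}$ for \emph{every} $T \ge T_0$, not only at $T = T_0$; your single-time argument extends verbatim to any such $T$, again provided the upward-closedness just mentioned, so it is worth stating it for arbitrary $T \ge T_0$ rather than only for $T_0$.
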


\begin{proof}
    Set $T \ge T_1$ and suppose that $\mathcal{E}_T \cap \mathcal{E}_T'$ holds. 
    \begin{align*}
        \min \left\{ \tau_\delta, T \right\} & \le \sqrt{T} + \sum_{t=\sqrt{T}}^T \bm{1} \left\{ \tau_\delta > t \right\} \\ & \le \sqrt{T} + \sum_{t=\sqrt{T}}^T \bm{1} \left\{ Z(t) \le \beta(t, \delta) \right\} \\ & \le \sqrt{T} + \sum_{t=\sqrt{T}}^T \bm{1}\left\{ t \le \frac{\beta(t,\delta)}{C^*_{\epsilon, \xi}(\bm{\mu})} \right\} \\ & \le \sqrt{T} + \frac{\beta(T, \delta)}{C^*_{\epsilon, \xi}(\bm{\mu})}, 
    \end{align*}
    where in the third inequality we have used the fact that, under $\mathcal{E}_T \cap \mathcal{E}_T'$, $g(\bm{\hat{\mu}}(t), \frac{N_t^A}{t}) \ge t C^*_{\epsilon, \xi}(\bm{\mu})$ holds for $t \ge \sqrt{T}$. The other steps follows from simple algebraic manipulations.

    The statement follows directly from the definition of $T_0$.
\end{proof}

At this point, we can prove Theorem \ref{theo:expect}.

\expect*
\begin{proof}
    Using Lemma \ref{lemma:finite-time-pi-convergence}, for $T \ge T_\epsilon$, on the event $\mathcal{E}_T \cap \mathcal{E}_T'$ it holds that for every $t \ge \sqrt{T}$:
    \begin{align*}
        Z(t) \ge t C^*_{\epsilon, \xi}(\bm{\mu}).
    \end{align*}
    Therefore, the condition of Lemma \ref{lemma:tech-lemma-2} are satisfied, and we can use $T_0$ defined as in Lemma \ref{lemma:tech-lemma-2} to apply Lemma \ref{lemma:tech-lemma-1}. Therefore, we obtain: 
    \begin{align*}
        \mathbb{E}_{\bm{\mu}, \bm{\pi}}[\tau_\delta] \le T_0 + \sum_{T=1}^{+\infty} \mathbb{P}_{\bm{\mu}, \bm{\pi}}(\mathcal{E}_T^c) + \sum_{T=1}^{+\infty} \mathbb{P}_{\bm{\mu}, \bm{\pi}}(\mathcal{E}_T'^c).
    \end{align*}
    At this point, the rest of the proof follows as the original proof of TaS (Theorem 14 in \citet{garivieroptimal2016}).
\end{proof}

\subsection*{Algorithm analysis (Unknown Mediators' Policies)}
For the case in which the mediators' policies are unknown to the learner, few mofidications are needed for the proof. 

Concerning the analysis of Theorem \ref{theo:high-prob-uknown}, it is sufficient to notice that Lemma \ref{lemma:w-convergence-to-convex-set} holds unchanged for sequences of arbitrary oracle weights $\bm{\omega}_t$ that belongs to $\bm{\omega}^*(\bm{\hat{\mu}}(t), \bm{\hat{\pi}}(t)$. Indeed, consider a sequence $\left( \bm{\hat{\pi}}(t) \right)_{t \in \mathbb{N}}$ that converges to $\bm{\pi}$ almost surely (which is guaranteed by C-tracking and the law of large numbers). Then, it is sufficient to notice that the mapping $(\bm{\omega}, \bm{\pi}) \rightarrow \bm{\omega}^*(\bm{\mu}, \bm{\omega})$ is upper hemicontinuous in both arguments. The rest of the proofs follow directly. 

The problem is slightly more subtle for the analysis of Theorem \ref{theo:expect-unknown}. More specifically, $\mathcal{E}_T$ needs to be redefined in the following way.

\begin{align*}
\mathcal{E}_T = \bigcap_{t=h(T)}^T \left( \left\{ \left\Vert \bm{\hat{\mu}}(t) - \bm{\mu} \right\Vert_\infty \le \xi \right\} \cap \left\{ \left\Vert \bm{\hat{\pi}}(t) - \bm{\pi} \right\Vert_\infty \le \xi \right\} \right),
\end{align*}
where $\xi$ is such that, $\left\Vert \bm{\mu}' - \bm{\mu} \right\Vert_{\infty} \le \xi$ and $\left\Vert \bm{\pi'} - \bm{\pi} \right\Vert_\infty \le \xi$ implies that
\begin{align*}
\forall \bm{\omega}' \in \bm{\omega}^*(\bm{\mu}', \bm{\pi}') \textrm{ } \exists \bm{\omega} \in \bm{\omega}^*(\bm{\mu},\bm{\pi}), \left\Vert \bm{\omega}' - \bm{\omega} \right\Vert_\infty \le \epsilon.
\end{align*}

At this point, we need to control the probability of $\mathcal{E}_T$. 
The following Lemma, combined with the proof of Lemma \ref{lemma:mu-concentration}, directly implies that $\mathbb{P}_{\bm{\mu}, \bm{\pi}}(\mathcal{E}_T^c) \le BT \exp(-CT^{1/8})$ holds.

\begin{lemma}
Consider the event:
\begin{align*}
\mathcal{K}_T = \bigcap_{t=h(T)}^T \left\{ \left\Vert \bm{\hat{\pi}}(t) - \bm{\pi} \right\Vert_\infty \le \xi \right\},
\end{align*}
then, $\mathbb{P}_{\bm{\mu}, \bm{\pi}}(\mathcal{K}_T^c) \le BT \exp(-CT^{1/8})$ holds for some constants $B$ and $C$.
\end{lemma}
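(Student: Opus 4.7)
The plan is to reduce the $\ell_\infty$ concentration of $\bm{\hat{\pi}}(t)$ to a union bound over $t \in [h(T), T]$, mediators $e \in [E]$, and actions $a \in [K]$, then bound each term $\mathbb{P}_{\bm{\mu}, \bm{\pi}}(|\hat{\pi}_e(a;t) - \pi_e(a)| > \xi)$ by leveraging the forced exploration guaranteed by C-tracking. Concretely, Lemma \ref{lemma:expert-c-tracking} ensures $N_e(t) \ge n_{\min}(t) \coloneqq \sqrt{t + E^2} - 2E$ almost surely, and for $t \ge h(T) = T^{1/4}$ (with $T$ large enough so that $h(T) \ge 4E^2$), one has $n_{\min}(t) \ge \tfrac{1}{2}\sqrt{t}$.

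Next, I would represent the empirical estimator in terms of a genuine i.i.d.\ Bernoulli sequence. Let $\tau_e^{(k)}$ denote the (random) step at which mediator $e$ is selected for the $k$-th time, and set $Y_k^{e,a} = \bm{1}\{A_{\tau_e^{(k)}} = a\}$. Because, conditionally on $E_s = e$, the action $A_s$ is drawn from $\pi_e$ independently of the past, the variables $(Y_k^{e,a})_{k \ge 1}$ are i.i.d.\ Bernoulli$(\pi_e(a))$. By construction, $\hat{\pi}_e(a; t) = \frac{1}{N_e(t)} \sum_{k=1}^{N_e(t)} Y_k^{e,a}$, so
\begin{align*}
\bigl\{|\hat{\pi}_e(a; t) - \pi_e(a)| > \xi\bigr\} \subseteq \bigcup_{n = \lceil n_{\min}(t) \rceil}^{t} \Bigl\{\Bigl|\tfrac{1}{n}\textstyle\sum_{k=1}^n Y_k^{e,a} - \pi_e(a)\Bigr| > \xi\Bigr\}.
\end{align*}
Applying Hoeffding's inequality to each term in the union and summing the resulting geometric series yields $\mathbb{P}_{\bm{\mu}, \bm{\pi}}(|\hat{\pi}_e(a; t) - \pi_e(a)| > \xi) \le C_\xi \exp(-2\xi^2 n_{\min}(t)) \le C_\xi \exp(-\xi^2 \sqrt{t})$, where $C_\xi = 2/(1 - \exp(-2\xi^2))$.

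Finally, I would take a union bound over $t \in [h(T), T]$, $e \in [E]$, and $a \in [K]$:
\begin{align*}
\mathbb{P}_{\bm{\mu}, \bm{\pi}}(\mathcal{K}_T^c) \le \sum_{t = h(T)}^{T} E K\, C_\xi \exp(-\xi^2 \sqrt{t}) \le EK\, C_\xi \, T \exp\!\bigl(-\xi^2 T^{1/8}\bigr),
\end{align*}
where in the last step I used $\sqrt{t} \ge \sqrt{h(T)} = T^{1/8}$ for all $t \ge h(T)$. Setting $B = EK C_\xi$ and $C = \xi^2$ yields the claim.

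The most delicate point is the third step: one cannot directly apply Hoeffding to $\hat{\pi}_e(a;t)$ because $N_e(t)$ is random and correlated with the observed actions through the sampling rule. The trick of expressing the estimator via the i.i.d.\ sequence $Y_k^{e,a}$ indexed by \emph{mediator visits} (not global time) decouples the sample size from the randomness of interest and enables the clean union-bound-over-$n$ argument; all other steps are standard. The remaining details (verifying that $t \ge h(T)$ implies $n_{\min}(t) \ge \tfrac{1}{2}\sqrt{t}$ once $T$ exceeds a constant depending only on $E$, and absorbing this constant into $B$ and $C$) are routine.
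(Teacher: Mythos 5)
Your argument is correct and is essentially the paper's proof: the paper likewise union-bounds over $t \in [h(T), T]$, $e \in [E]$, $a \in [K]$, uses the C-tracking forced-exploration bound $N_e(t) \ge \sqrt{t} - E$, and then concludes by invoking Lemma 19 of \citet{garivieroptimal2016}, whose content is precisely the visit-indexed i.i.d.\ Bernoulli skeleton, union over the possible values of $N_e(t)$, and Hoeffding-plus-geometric-series computation that you carry out explicitly. The only nitpick is that $n_{\min}(t) \ge \tfrac{1}{2}\sqrt{t}$ requires $t \ge 16E^2$ rather than $h(T) \ge 4E^2$, a constant-threshold detail that, as you note, is absorbed into $B$.
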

\begin{proof}
From Boole's inequality we obtain:
\begin{align*}
\mathbb{P}_{\bm{\mu}, \bm{\pi}}(\mathcal{K}_T^c) \le \sum_{t=h(T)}^T \sum_{e=1}^E \sum_{a=1}^K \mathbb{P}_{\bm{\mu}, \bm{\pi}}\left( \frac{N_{e,a}(t)}{N_e(t)} \le \pi_e(a) - \xi \right) + \mathbb{P}_{\bm{\mu}, \bm{\pi}}\left( \frac{N_{e,a}(t)}{N_e(t)} \ge \pi_e(a) + \xi \right),
\end{align*}
where $N_{e,a}(t)$ denotes the number of pulls to action $a$ when querying mediator $e$ at time $t$. At this point, let $T$ be such that $h(T) \ge E^2$, then due to Lemma \ref{lemma:expert-c-tracking}, $N_e(t) \ge \sqrt{t} - E$ for every mediators' $e$. The result than follows from Lemma 19 in \citet{garivieroptimal2016}.
\end{proof}

The new definition of $\mathcal{E}_T$ plays a role in the equivalent of Lemma \ref{lemma:finite-time-w-convergence} that needs to be derived for the unknown policy setting (for which we report the proof below for completeness). The rest of the proof of Theorem \ref{theo:expect-unknown} follows directly from the ones of Theorem \ref{theo:expect}.

\begin{lemma}
    There exists a constant $T_\epsilon$ such that for $T \ge T_\epsilon$ it holds that on $\mathcal{E}_T$, C-tracking with unknown mediators' policies verifies:
    \begin{align*}
        \forall t \ge \sqrt{T}, \quad \inf_{\bm{\omega} \in \bm{\omega}^*(\bm{\mu})} \left\Vert \frac{N_t^E}{t} - \bm{\omega} \right\Vert_\infty \le 3\epsilon.
    \end{align*}
\end{lemma}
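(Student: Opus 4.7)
The plan is to mirror the proof of Lemma \ref{lemma:finite-time-w-convergence}, but using the joint upper hemicontinuity of $(\boldsymbol{\mu}, \boldsymbol{\pi}) \mapsto \boldsymbol{\omega}^*(\boldsymbol{\mu}, \boldsymbol{\pi})$ instead of the continuity in $\boldsymbol{\mu}$ alone. The newly enlarged event $\mathcal{E}_T$ enforces that both $\boldsymbol{\hat{\mu}}(t)$ and $\boldsymbol{\hat{\pi}}(t)$ lie within a ball of radius $\xi$ around $(\boldsymbol{\mu}, \boldsymbol{\pi})$ for every $t \in [h(T), T]$, and $\xi$ has been chosen precisely so that every empirical oracle weight $\boldsymbol{\omega}_t \in \boldsymbol{\omega}^*(\boldsymbol{\hat{\mu}}(t), \boldsymbol{\hat{\pi}}(t))$ is within $L^{\infty}$-distance $\epsilon$ from the set $\boldsymbol{\omega}^*(\boldsymbol{\mu}, \boldsymbol{\pi})$. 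This is exactly the property that drove the known-policy proof, so it should suffice to keep the same skeleton.

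First, I would let $T$ be large enough that $h(T) = T^{1/4} \ge E^2$, and fix any $t \ge \sqrt{T}$. Given any $\boldsymbol{\omega} \in \boldsymbol{\omega}^*(\boldsymbol{\mu}, \boldsymbol{\pi})$, I decompose
\begin{align*}
\left\Vert \frac{N_t^E}{t} - \boldsymbol{\omega} \right\Vert_\infty \le \left\Vert \frac{N_t^E}{t} - \frac{1}{t}\sum_{s=0}^{t-1} \boldsymbol{\omega}_s \right\Vert_\infty + \left\Vert \frac{1}{t}\sum_{s=0}^{t-1} \boldsymbol{\omega}_s - \boldsymbol{\omega} \right\Vert_\infty.
\end{align*}
By the C-tracking guarantee of Lemma \ref{lemma:expert-c-tracking}, the first term is bounded by $E(1+\sqrt{t})/t$, which is at most $\epsilon$ once $T$ is large enough (using $t \ge \sqrt{T}$). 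For the second term, I split the sum at $h(T)$: the initial segment contributes at most $h(T)/t \le T^{1/4}/T^{1/2} \le \epsilon$ for $T$ large, so only the tail $\frac{1}{t}\sum_{s=h(T)}^{t-1} \boldsymbol{\omega}_s$ matters.

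On the event $\mathcal{E}_T$, each $\boldsymbol{\omega}_s$ with $s \ge h(T)$ lies within $L^{\infty}$-distance $\epsilon$ of $\boldsymbol{\omega}^*(\boldsymbol{\mu}, \boldsymbol{\pi})$, so its projection onto this convex set (from Proposition \ref{prop:omega-star-properties}) differs from $\boldsymbol{\omega}_s$ by at most $\epsilon$. Convexity of $\boldsymbol{\omega}^*(\boldsymbol{\mu}, \boldsymbol{\pi})$ then ensures that the average of these projections is itself in $\boldsymbol{\omega}^*(\boldsymbol{\mu}, \boldsymbol{\pi})$, and Lemma 33 of \citet{degenne2019pure} packages this exactly as
\begin{align*}
\inf_{\boldsymbol{\omega} \in \boldsymbol{\omega}^*(\boldsymbol{\mu}, \boldsymbol{\pi})} \left\Vert \frac{1}{t-h(T)}\sum_{s=h(T)}^{t-1} \boldsymbol{\omega}_s - \boldsymbol{\omega} \right\Vert_\infty \le \epsilon.
\end{align*}
Taking the infimum over $\boldsymbol{\omega} \in \boldsymbol{\omega}^*(\boldsymbol{\mu}, \boldsymbol{\pi})$ in the original decomposition and combining the three bounds gives the claimed $3\epsilon$.

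The only genuinely new ingredient compared with Lemma \ref{lemma:finite-time-w-convergence} is the joint version of upper hemicontinuity: I would invoke Berge's theorem applied to $(\boldsymbol{\mu}, \boldsymbol{\pi}) \mapsto \boldsymbol{\omega}^*(\boldsymbol{\mu}, \boldsymbol{\pi})$ jointly (which is already established in the proof of Proposition \ref{prop:omega-star-properties}) to justify the existence of the radius $\xi$ baked into the definition of $\mathcal{E}_T$. This is the one step that must be executed carefully because $\boldsymbol{\omega}^*(\boldsymbol{\mu}, \boldsymbol{\pi})$ is set-valued in $\boldsymbol{\pi}$ as well, but since Proposition \ref{prop:omega-star-properties} explicitly provides joint upper hemicontinuity and compact-valuedness, the argument for $\xi$ is a direct translation of the known-policy case. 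Everything else is purely algebraic. The resulting constant $T_\epsilon$ can be taken as the maximum of the thresholds required by each of the above three bounds to be at most $\epsilon$.
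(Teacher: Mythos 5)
Your proposal is correct and follows essentially the same route as the paper: the same triangle-inequality decomposition with the C-tracking bound $E(1+\sqrt{t})/t$, the split of the sum at $h(T)$, and the use of the redefined event $\mathcal{E}_T$ (whose radius $\xi$ exists by the joint upper hemicontinuity of $(\boldsymbol{\mu},\boldsymbol{\pi}) \mapsto \boldsymbol{\omega}^*(\boldsymbol{\mu},\boldsymbol{\pi})$) together with convexity of $\boldsymbol{\omega}^*(\boldsymbol{\mu},\boldsymbol{\pi})$ and Lemma 33 of \citet{degenne2019pure} to bound the tail by $\epsilon$, yielding $3\epsilon$ for $T \ge T_\epsilon$.
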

\begin{proof}
Consider any $\bm{\omega} \in \bm{\omega}^*(\bm{\pi}, \bm{\omega})$, and consider a sequence $\bm{\omega}_s \in \bm{\omega}^*(\bm{\hat{\mu}}(s), \bm{\hat{\pi}}(s))$. At this point, we can write:
\begin{align*}
\left\Vert \frac{N_t^E}{t} - \bm{\omega} \right\Vert_\infty & \le \left\Vert \frac{N_t^E}{t} - \frac{1}{t} \sum_{s=0}^{t-1} \bm{\omega}_s \right\Vert_\infty + \left\Vert \frac{1}{t} \sum_{s=0}^{t-1} \bm{\omega}_s - \bm{\omega} \right\Vert_\infty \\ & \le \frac{E(1+\sqrt{t})}{t} + \left\Vert \frac{1}{t} \sum_{s=0}^{t-1} \bm{\omega}_s - \bm{\omega} \right\Vert_\infty \\ & \le \frac{E(1+\sqrt{t})}{t} + \frac{h(T)}{t} + \left\Vert \frac{1}{t} \sum_{s=h(T)}^{t-1} (\bm{\omega}_s - \bm{\omega}) \right\Vert_\infty,
\end{align*}
where we combined simple algebraic manipulations with Lemma \ref{lemma:expert-c-tracking}. However, under event $\mathcal{E}_T$, we have that $\xi$ is such that, $\left\Vert \bm{\mu}' - \bm{\mu} \right\Vert_{\infty} \le \xi$ and $\left\Vert \bm{\pi'} - \bm{\pi} \right\Vert_\infty \le \xi$ implies that:
\begin{align*}
\forall \bm{\omega}' \in \bm{\omega}^*(\bm{\mu}', \bm{\pi}') \textrm{ } \exists \bm{\omega} \in \bm{\omega}^*(\bm{\mu},\bm{\pi}), \left\Vert \bm{\omega}' - \bm{\omega} \right\Vert_\infty \le \epsilon.
\end{align*}
The convexity of $\bm{\omega}^*(\bm{\mu}, \bm{\pi})$ then ensures that $\inf_{\bm{\omega} \in \bm{\omega}^*(\bm{\omega}, \bm{\pi})} \left\Vert \frac{1}{t} \sum_{s=h(T)}^{t-1} (\bm{\omega}_s - \bm{\omega}) \right\Vert_\infty \le \epsilon$ holds as well (Lemma 33 in \citet{garivieroptimal2016}). Therefore, under $\mathcal{E}_T$ we obtain:
\begin{align*}
\inf_{\bm{\omega} \in \bm{\omega}^*{\bm{\mu}, \bm{\pi}}} \left\Vert \frac{N_t^E}{t} - \bm{\omega} \right\Vert_\infty \le \frac{E(1+\sqrt{t})}{t} + \frac{h(T)}{t} + \epsilon,
\end{align*}
which concludes the proof.

\end{proof}

\newpage
\section*{Appendix E: Experiments}

\begin{table}[t]
\centering
\begin{tabular}{l|l|l|l|l|}
	& \textbf{TaS} & \textbf{TaS-MF-k} & \textbf{TaS-MF-u} & \textbf{Uniform Sampling}  \\
    $\delta=0.4$ & $389.12 \pm 3.79$ & $863.87 \pm 13.14$ & $883.52 \pm 18.44$ & $1689.71 \pm 25.98$  \\
	$\delta=0.1$ & $450.12 \pm 3.45$ & $990.27 \pm 14.11$ & $1013.14 \pm 15.89$ & $1891.62 \pm 29.76$  \\
	$\delta=0.01$ & $553.02 \pm 3.45$ & $1179.11 \pm 13.59$ & $1205.61 \pm 16.95$ & $2245.46 \pm 33.03$  \\
	$\delta=0.001$ & $655.03 \pm 3.97$ & $1376.72 \pm 15.94$ & $1378.14 \pm 18.91$ & $2619.26 \pm 37.53$  \\
\end{tabular}
\caption{Experiment results for $100$ runs. The table report mean and $95\%$ confidence intervals of the empirical stopping time.}
\label{table}
\end{table}

To conclude, we report simple numerical experiments.

\subsection*{Experiment 1}
First of all, we analyze empirically the effect of the partial controllability that arises from the mediators' feedback. More specifically, we consider a Gaussian bandit model with $K=4$. The different arms have mean $\bm{\mu} = (1.5, 1.0, 0.7, 0.5)$.
We then compare the following baseliens:
\begin{itemize}
\item Track and Stop (TaS)
\item Track and Stop with mediators' feedback and known policies (TaS-MF-k)
\item Track and Stop with mediators' feedback and unknown policies (TaS-MF-u) 
\item Uniform Sampling (US) over the set of mediator 
\end{itemize}
Both for TaS-MF-k, TaS-MF-u and Uni we assume access to the a set of $E=4$ mediators, whose policies are given by $\bm{\pi}_{e_1} = [0.1, 0.8, 0.1, 0]$, $\bm{\pi}_{e_2} = [0, 0.1, 0.8, 0.1]$, $\bm{\pi}_{e_3} = [0, 0.1, 0.1, 0.8]$, and $\bm{\pi}_{e_4} = [0.2, 0.0, 0.4, 0.4]$. We have tested our algorithm on $4$ different values of $\delta$, namely $\delta = [0.4, 0.1, 0.01, 0.001]$.


We have run $100$ for each algorithm using 100 Intel(R) Xeon(R) Gold 6238R CPU @ 2.20GHz cpus and 256GB of RAM. The rime required for obtaining all results is moderate (less than a day).

Table~\ref{table} shows the result. As we can see, TaS outperforms TaS-MF-k and TaS-MF-u due to the presence of the mediators that limit the controllability over the arm space. Nevertheless, TaS-MF-k and TaS-MF-u are still able to outperform the Uniform Sampling baseline, thus showing the fact that our sequential-decision making strategy is effective at exploiting the set of mediators. Finally, TaS-MF-k and TaS-MF-u performs similar but having access to the knowledge of $\bm{\pi}$, in practice, leads to some advantages (especially for moderate regimes of $\delta$). 

\subsection*{Experiment 2}
We now propose a second experiment that analyzes more deeply the difference in performance between TaS-MF-k and TaS-MF-u. Indeed, it has to be noticed that, whenever the mediators' policies are known to the agent, if identical policies are present within the set, the algorithm can actually avoid querying identical copies of the mediators. \footnote{More generally, the algorithm can remove some $\bm{\pi_e} \in \bm{\pi}$ whenever the set $\widetilde{\Sigma}_k$ does not change.} In other words, TaS-MF-k can be easily modified (without affecting its theoretical guarantees) to remove all copies of identical policies $\bm{\pi_e} \in \bm{\pi}$. In pratice, in scenarios such as the one that we will describe in a moment, this turns out to significantly affect the sample complexity, especially for moderate regime of $\delta$. More precisely, we consider the case where $\bm{\mu} = [5.0, 1.0]$ and $\bm{\pi}$ contains $E=10$ mediators. We consider $\bm{\pi}_{e_1} = [0.01, 0.99]$ and $\bm{\pi}_{e_i} = [0.0, 1.0]$ for all $i \ne 1$. We test both algorithms in the regimes of $\delta$: [$0.4$, $0.1$, $0.01$, $0.001$, $0.0001$, $0.00001$, $0.000001$, $0.0000001$, $0.00000001$, $0.000000001$, $0.0000000001$, $0.00000000001$, $0.000000000001$].

We run $1000$ runs for both algorithms using 100 Intel(R) Xeon(R) Gold 6238R CPU @ 2.20GHz cpus and 256GB of RAM. The rime required for obtaining all results is moderate (less than a day).

Table~\ref{table2} reports the result. As we can see, especially for moderate regime of $\delta$ there is a significant difference between the two algorithms. This is due to the fact TaS-MF-u pays a significant amount of samples for the forced exploration to various mediators that are equivalent, namely $\bm{\pi}_{e_i}$ for all $i \ne 1$. The difference between TaS-MF-k and TaS-MF-u, however, tends to shrink as soon as we decrease the values of $\delta$ (as predicted by our theory). 

\begin{table}[t]
\centering
\begin{tabular}{l|l|l|l|l|}
	& \textbf{TaS-MF-k} & \textbf{TaS-MF-u}   \\
    $\delta=0.4$ & $254.75 \pm 15.01$ & $1120.97 \pm 63.77$ \\
    $\delta=0.1$ & $277.83 \pm 14.63$ & $1105.42 \pm 60.53$ \\
    $\delta=0.01$ & $307.87 \pm 14.01$ & $1154.54 \pm 65.54$ \\
    $\delta=0.001$ & $343.43 \pm 16.66$ & $1121.26 \pm 56.74$ \\
    $\delta=0.0001$ & $395.66 \pm 16.21 $ & $1219.50 \pm 60.55$ \\
    $\delta=0.00001$ & $405.05 \pm 14.89 $ & $1228.79 \pm 63.32$ \\
    $\delta=0.000001$ & $408.33 \pm 15.33 $ & $1264.96 \pm 60.18$ \\
    $\delta=0.0000001$ & $484.49 \pm 17.70$ & $1330.48 \pm 61.40$ \\
    $\delta=0.00000001$ & $493.63 \pm 16.10$ & $1366.83 \pm 61.08$ \\
    $\delta=0.000000001$ & $506.81 \pm 16.94$ & $1347.70 \pm 62.64$ \\
    $\delta=0.0000000001$ & $569.99 \pm 18.65$ & $1514.64 \pm 67.12$ \\
    $\delta=0.00000000001$ & $609.26 \pm 17.10$ & $1434.92 \pm 59.57$ \\
\end{tabular}
\caption{Experiment results for $1000$ runs. The table report mean and $95\%$ confidence intervals of the empirical stopping time.}
\label{table2}
\end{table}

\end{document}